\documentclass[format=acmsmall, screen=true]{acmart}

\newif\ifsubmission
\submissionfalse

\usepackage{url}  %
\usepackage{graphicx}  %
\frenchspacing  %
\usepackage{amsmath,amsthm,xspace,amssymb}
\usepackage{latexsym}  %
\usepackage{wrapfig}
\usepackage{chngcntr}
\usepackage{ifthen}
\ifsubmission\else
\usepackage{hyperref}
\fi
\usepackage{multirow}
\usepackage[utf8]{inputenc}
\usepackage{dblfloatfix}
\usepackage{booktabs} 
\usepackage{microtype}
\usepackage{mathtools}
\usepackage{enumerate}
\usepackage{array}
\ifsubmission
\newcommand{\citet}[1]{\citeauthor{#1}~\shortcite{#1}}
\newcommand{\citep}{\cite}

\newcommand{\citeyearpar}[1]{(\citeyear{#1})}
\else
\fi
\setcounter{secnumdepth}{2} %
\newcommand{\wbox}{\mbox{$\sqcap$\llap{$\sqcup$}}}
\usepackage{tikz}
\newcommand{\nSAp}{n_{S,A,i}^+}

\newcommand{\Feas}{\textit{Feas}}

\ifsubmission
\pdfinfo{
/Title (Information Acquisition Under Resource Limitations in a Noisy Environment)
/Author (Matvey Soloviev, Joseph Y. Halpern)
/Keywords(rationality, inattention, noisy observations)
}
\fi

\usetikzlibrary{shapes,arrows.meta,calendar,matrix,positioning}
\newcommand{\commentout}[1]{}

\newif\iffullv
\fullvtrue

\iffullv
\newcommand{\shortv}{\commentout}
\newcommand{\fullv}[1]{#1}
\else
\newcommand{\shortv}[1]{#1}
\newcommand{\fullv}{\commentout}
\fi %

\newcommand{\IN}{\mathrm{I\!N}}
\newcommand{\IR}{\mathrm{I\!R}}
\def\beginsmall#1{\vspace{-\parskip}\begin{#1}\itemsep-\parskip}
\def\endsmall#1{\end{#1}\vspace{-\parskip}}

\ifsubmission\else
\hypersetup{linkcolor=blue!70!red,colorlinks=true}
\fi

\title[Information Acquisition Under Resource Limitations]{Information Acquisition Under Resource Limitations in a Noisy Environment}

\author{Matvey Soloviev}
\affiliation{%
  \institution{Computer Science Department, Cornell University}
  \streetaddress{107 Hoy Rd}
  \city{Ithaca}
  \state{NY}
  \postcode{14853}
  \country{USA}}
\email{msoloviev@cs.cornell.edu}
\author{Joseph Y. Halpern}
\affiliation{%
  \institution{Computer Science Department, Cornell University}
  \streetaddress{107 Hoy Rd}
  \city{Ithaca}
  \state{NY}
  \postcode{14853}
  \country{USA}}
\email{halpern@cs.cornell.edu}

\begin{CCSXML}
<ccs2012>
<concept>
<concept_id>10010147.10010257.10010258.10010261.10010272</concept_id>
<concept_desc>Computing methodologies~Sequential decision making</concept_desc>
<concept_significance>500</concept_significance>
</concept>
<concept>
<concept_id>10003752.10010070.10010099.10010100</concept_id>
<concept_desc>Theory of computation~Algorithmic game theory</concept_desc>
<concept_significance>400</concept_significance>
</concept>
<concept>
<concept_id>10010147.10010178.10010199.10010201</concept_id>
<concept_desc>Computing methodologies~Planning under uncertainty</concept_desc>
<concept_significance>300</concept_significance>
</concept>
</ccs2012>
\end{CCSXML}

\ccsdesc[500]{Computing methodologies~Sequential decision making}
\ccsdesc[400]{Theory of computation~Algorithmic game theory}
\ccsdesc[300]{Computing methodologies~Planning under uncertainty}

\def\Tr{\mathop{\mathrm{Tr}}}

\newcommand{\x}{\times}

\makeatletter
\protected\def\specialmergetwolists{%
  \begingroup
  \@ifstar{\def\cnta{1}\@specialmergetwolists}
    {\def\cnta{0}\@specialmergetwolists}%
}
\def\@specialmergetwolists#1#2#3#4{%
  \def\tempa##1##2{%
    \edef##2{%
      \ifnum\cnta=\@ne\else\expandafter\@firstoftwo\fi
      \unexpanded\expandafter{##1}%
    }%
  }%
  \tempa{#2}\tempb\tempa{#3}\tempa
  \def\cnta{0}\def#4{}%
  \foreach \x in \tempb{%

    \xdef\cnta{\the\numexpr\cnta+1}%
    \gdef\cntb{0}%
    \foreach \y in \tempa{%
      \xdef\cntb{\the\numexpr\cntb+1}%
      \ifnum\cntb=\cnta\relax
        \xdef#4{#4\ifx#4\empty\else,\fi\x#1\y}%
        \breakforeach
      \fi
    }%
  }%
  \endgroup
}
\makeatother

\def\Pr{\mathop{\mathrm{Pr}_{D,\vec{\alpha}}}\nolimits}
\def\PrS#1{\mathop{\mathrm{Pr}_{D,\vec{\alpha},#1}}\nolimits}
\def\PrT#1{\mathop{\mathrm{Pr}_{#1}}\nolimits}
\def\Q{\mathop{\mathrm{Q}}\nolimits}

\def\E{\mathbb{E}}
\tikzset{>=Stealth}
\def\cpl{\mathop{\mathrm{cpl_{D,q,\m}}}}

\def\t{\mathrm{T}}
\def\f{\mathrm{F}}
\def\mv{\vec{\alpha}}
\def\m{\vec{\alpha}}
\def\mt{\approx}

\def\cf{\mathrm{cf}}

\def\subst#1#2{|_{#1=#2}} 

\renewcommand{\phi}{\varphi}
\renewcommand{\varepsilon}{\epsilon}

\begin{document}

\begin{abstract}
We introduce a theoretical model of information acquisition under
resource limitations in a noisy environment.
An agent must guess the truth value of a
given Boolean formula $\varphi$ after 
performing a bounded number of noisy tests of the truth values 
of variables in the formula.
We observe that, in general, the problem of finding an optimal testing
strategy for $\phi$ is hard, but we suggest a useful heuristic.
The techniques we use also give insight into two apparently unrelated,
but well-studied problems: (1) \emph{rational inattention},
that is, when it is rational to ignore pertinent information
(the
optimal strategy may involve hardly ever testing variables that are clearly
relevant to $\phi$), and (2) what makes a formula hard to learn/remember.
\end{abstract}

\maketitle

\renewcommand{\phi}{\varphi}
\renewcommand{\setminus}{\backslash}

\section{Introduction}
Decision-making is  typically subject to resource constraints.
However, an agent may be able to choose how to allocate his resources.
We consider a simple decision-theoretic framework in which to examine this
resource-allocation problem.   
Our framework is motivated by a variety of decision problems in which
multiple noisy signals are available for sampling, such as the following:
\begin{itemize}
\item An animal must decide
whether some food is safe to eat.  We assume that ``safe'' is characterised by
a 
Boolean formula $\phi$, which involves variables that describe (among
other things) the
presence of unusual smells or signs of other animals consuming the
same food. The animal can perform a limited number of
tests of the variables in $\phi$, but these tests
are noisy; if a test says that a variable $v$ is true, that does not
mean that $v$ is true, but only that it is true with some probability.
After the agent has exhausted his test budget, he must either guess
the truth value of $\phi$ or choose not to guess.  Depending on
his choice, 
he gets a payoff.  In this example, guessing that $\phi$ is
true amounts to guessing that the food is safe to eat.  There will be a
small positive payoff for guessing ``true'' if the food is indeed
safe, but a large negative payoff for guessing ``true'' if the food is
not safe to eat.  In this example we can assume a payoff of 0 if the
agent guesses ``false'' or does not guess, since both choices amount
to not eating the food.

\item A quality assurance team needs to certify a modular product,
say a USB memory stick, or send it back to the factory. Some subsystems,
such as the EEPROM cells, are redundant to an extent, and a limited
number of them not working is expected and does not stop the product
from functioning. Others, such as the USB controller chip, are unique;
the device will not work if they are broken. Whether the
device is good can be expressed as a Boolean combination of variables
that describe the goodness of 
its components. Time and financial considerations allow only
a limited number of tests to be performed, and tests themselves have
a probability of false negatives and positives. What parts should
be tested and how often?

\item A data scientist wants to perform a complex query on a very
 big database. A certain error rate is acceptable;  in any case,
 executing the query exactly is infeasible with the available hardware.
The selection criterion itself is a Boolean 
combination of some atomic predicates on the entries of the database, which
can be evaluated only using heuristics (which are
essentially probabilistic  algorithms). Given a query that asks for
rows that, for instance, satisfy the criterion $P_1 \wedge (P_2 \vee
P_3)$ in three predicates $P_i$, which heuristics should be run and
how often should they be run to attain the desired error rate?
\end{itemize}

We are interested in optimal strategies for each of these problems;
 that is,
what tests should the agent perform and in what order.  Unfortunately
(and perhaps not surprisingly), as we show, finding an optimal
strategy (i.e., one that obtains the highest expected payoff) is 
infeasibly hard.
We provide a heuristic that guarantees a positive expected payoff
whenever the optimal strategy gets a positive expected payoff.
Our analysis of this 
strategy also gives us the tools to examine two other problems of interest.

The first is \emph{rational inattention}, the notion
that in the face of limited resources it is sometimes rational to
ignore certain sources of information completely.
There has been a great deal of interest recently in this topic in
economics  \citep{Sims03,Wiederholt10}.  Here we show that optimal
testing strategies in our framework exhibit what can reasonably be
called rational inattention (which we typically denote RI from now
on). Specifically, 
our experiments show that
for a substantial fraction
of formulae, 
 an optimal strategy will hardly ever 
test variables that are clearly relevant to the outcome.
(Roughly speaking, ``hardly ever'' means that as the total number of
tests goes to infinity, the fraction of tests devoted to these
relevant variables goes to 0.)
For example, consider the
formula $v_1 \lor v_2$.  Suppose that the tests for $v_1$ and $v_2$ are
equally noisy, so there is no reason to prefer one to the other for
the first test.  But for 
certain choices
 of payoffs, we show that
if we start by testing $v_2$, then all subsequent tests should also test
$v_2$
as long as $v_2$ is observed to be true 
(and similarly for $v_1$).  Thus,
with positive probability,
the optimal strategy either
ignores $v_1$ or ignores $v_2$.
Our formal analysis allows us to conclude that this is a widespread phenomenon.

The second problem we consider is what makes a concept (which we can
think of as being characterised by a formula) hard.  To address this,
we use our framework to define a notion of hardness.
Our notion is based on the minimum number of tests required
to have a chance of making a reasonable guess regarding whether the formula
is true.
  We show that,
according to this definition, XORs (i.e., formulae of the form $v_1
\oplus \cdots \oplus v_n$, which are true exactly if an odd number of the
$v_i$'s are true) and their negations are the hardest formulae.  We
compare this notion to other notions of hardness of concepts considered in the
cognitive psychology literature (e.g., \citep{Feldman,LMG04,SHJ61}).

\textbf{Organisation.}
The rest of the paper is organized as follows.
In Section \ref{sec:iagames}, we formally
define the games that we use to model our decision problem and analyse
the optimal strategies for a simple example. The detailed calculations for this 
example can be found in Appendix \ref{app:twovaror}. In Section \ref{sec:optstrat},
we look at the problem of determining optimal strategies more generally.
We discuss the difficulty of this problem and analyse a simple heuristic,
developing our understanding of the connection between payoffs and certainty
in the process.
In Section \ref{sec:ratinatt},
we formally define rational inattention and discuss the intuition
behind our definition.
After considering 
some examples of when RI occurs under our definition, we show that
there is a close connection 
between rational inattention and particular sequences of observations
(\emph{optimal test outcome sequences}) that may occur while testing.
We use this connection to obtain a quantitative
estimate of how common RI is in formulae involving up to 10 variables.
The theory behind this estimate is presented in Appendix
\ref{sec:inattentionlp},
where we
relate the optimal test outcome sequences to the solution polytope of 
a particular linear program (LP). While we are not aware of any explicit
connections, our method should be seen in a broader tradition of applying
LPs to decision problems such as multi-armed bandits \cite{chen86}, and may
be of independent interest for the analysis of information acquisition.
Finally, in Section \ref{sec:testcomp}, we introduce our notion of test
complexity, prove that XORs are the formulas of greatest test
complexity (the  
details of the proof are in Appendix \ref{sec:proofxorhardest}), and
discuss the connections to various other notions of formula complexity
in the cognitive and computational science literature.

\section{Information-acquisition games}\label{sec:iagames}

We model the \emph{information-acquisition game} as a single-player
game against nature, 
that is, one in which actions that are not taken by the player
are chosen at random.
The game is characterised by five parameters:
\begin{itemize}
\item a Boolean formula $\phi$ 
over
 variables $v_1, \ldots,
  v_n$ for some $n > 0$;
\item a probability distribution $D$ on truth assignments to $\{v_1,
  \ldots, v_n\}$;
\item a bound $k$ on the number of tests;
\item an \emph{accuracy vector} $\mv = (\alpha_1, \ldots, \alpha_n)$, with
    $0 \le \alpha_i \le 1/2$
  (explained below);
\item payoffs $(g,b)$, where $g > 0 > b$ (also explained below).
\end{itemize}
We denote this game as $G(\phi,D,k,\m,g,b)$.  

In the game $G(\phi,D,k,\m,g,b)$, nature first chooses a truth
assignment to the variables 
$v_1, \ldots, v_n$ according to distribution $D$.
While the parameters
of the game are known to the agent, the assignment chosen by nature
is not.
For the next $k$ rounds, 
the agent
 then chooses one of the $n$ variables to test 
 (possibly
 as a
function of history), and nature responds with either $T$ or $F$.  The
agent then must either guess the truth value of $\phi$ or choose not
to guess.

We view a truth assignment $A$ as a function from variables to
truth values ($\{T,F\}$); we can also view a formula 
as a function from truth assignments to truth values.
If the agent chooses to test $v_i$, then nature
returns $A(v_i)$ (the right answer) with probability $1/2 + \alpha_i$
(and thus returns $\neg A(v_i)$ with probability $1/2 - \alpha_i$).%
\footnote{
Note that this means that the
probability of a false positive and that of a false negative are the
same.  While we could easily extend the framework so as to allow the
accuracy in a test on a variable $v$ to depend on whether $A(v)$ is $T$
or $F$, doing so would complicate notation and distract from the main
points that we want to make.}
Thus, outcomes are independent, conditional on a truth assignment.
Finally, if the agent choses not to guess at the end of the game, his
payoff is 0.  If he chooses to guess, then his payoff is $g$ (good) if
his guess coincides with the actual truth value of $\phi$ on assignment $A$
(i.e., his guess is correct) and $b$ (bad) if his guess is wrong.
It is occasionally useful to think of a formula $\phi$ as a function
from assignments to truth values; we thus occasionally write $\phi(A)$
to denote the truth value of $\phi$ under truth assignment $A$.
A strategy for an agent in this game can be seen as a pair of functions:
one that determines which test the agent performs after observing
a given sequence of test outcomes of length $<k$,
and one that determines the whether to make a guess and, if so, which
guess to make, given all $k$ test outcomes.

\begin{example}\label{twovaror} Consider the information-acquisition
  game over the formula $v_1 \lor v_2$, with $k=2$ tests,
  a uniform distribution on truth assignments,
  accuracy
    vector $(1/4,1/4)$, correct-guess reward $g=1$ and
  wrong-guess penalty $b= -16$.
\fullv{As we show (see Appendix~\ref{app:twovaror})}
\shortv{As we show (see Appendix~A in the full paper)}
  this game has two  optimal strategies:
\begin{enumerate}
\item test $v_1$ twice, guess $T$ if both tests came out $T$,
  and make no guess otherwise; 
  \item test $v_2$ twice, guess $T$ if both tests came out
        $T$, and make no guess otherwise.  \hfill \wbox
\end{enumerate}
\end{example}
Thus, in this game, an optimal strategy either ignores $v_1$ or
ignores $v_2$.
As we show in Appendix~\ref{app:twovaror}, the strategy ``test $v_1$
and then $v_2$, then guess $T$ if both tests came 
out $T$'' is strictly worse than these two;
in fact, its expected payoff is negative!

If we increase $k$,
the situation becomes more nuanced.
For instance, if $k=4$, an optimal strategy tests $v_1$
once, and if the test comes out $F$, tests $v_2$ three times and 
guesses $T$ if all three tests came out $T$. However, it always
remains optimal to keep testing one variable as long as the tests
keep coming out true.
That is, all optimal strategies exhibit RI in the
sense that there are test outcomes that result in either $v_1$ never
being tested or $v_2$ never being tested, despite their obvious
relevance to $v_1 \lor v_2$.

For our results, we need to analyze the probability of various events
related to the game. Many of the probabilities that
we care about depend on only a few parameters of the game.
Formally, we put a probability on \emph{histories} of an
information-acquisition game. A history is a tuple of the form
$(A,S,a)$,
where $A$ is the assignment of truth values to the $n$ variables
chosen by nature, 
$ S=(v_{i_1}\mt b_1, \ldots, v_{i_k}\mt b_k) $
is a \emph{test-outcome sequence} in which
$v_{i_j}\mt b_j$ indicates that the $j$th test was performed on variable
$v_{i_j}$ and that nature responded with the test outcome $b_j$, and
$a$ is the final agent action 
of either making no guess or guessing some truth value for the formula.
A game $G(\phi,D,k,\m,g,b)$ and agent strategy
$\sigma$
for this game 
then induce a probability $\PrT{G,\sigma}$ on this sample space.
\begin{example}
In Example \ref{twovaror}, $\PrT{G,\sigma}(\varphi)$ is $3/4$,
as we know only that there is a probability of $3/4$ that nature picked
a satisfying assignment. After observing a single test outcome
suggesting that $v_1$ is false, the posterior probability
$\PrT{G,\sigma}(\varphi\mid (v_1\approx F))$ drops to $5/8$.
If the same test is performed and the outcome is again $F$, 
the posterior drops further to
$\PrT{G,\sigma}(\varphi\mid (v_1\approx F,v_1\approx F)) = 11/20$.
\wbox
\end{example}
The only features of the game $G$ that affect the probability are the prior
distribution $D$ and the accuracy vector $\alpha$,
so we write 
$\PrT{D,\alpha,\sigma}(\phi)$ rather than $\PrT{G,\sigma}(\phi)$.
If some component of the subscript does not affect the probability,
then we typically omit it.
In particular, we show in Appendix B \shortv{of the full paper} that
the strategy $\sigma$ does not affect $\PrT{G,\sigma}(\phi\mid S)$, so we
write $\Pr(\phi\mid S)$.
Finally, the utility (payoff) received by the agent at the end of the
game is a
real-valued random variable that depends on 
parameters $b$ and $g$. 
We can define the expected utility $\E_{G,\sigma}(\mathrm{payoff})$
as the expectation of this random variable.

\section{Determining optimal strategies}
\label{sec:optstrat}
It is straightforward to see that the game tree%
\footnote{For the one-player games that we are considering, a game
  tree is a graph whose nodes consist of 
all valid partial sequences of actions in the game, including the
empty sequence, and two nodes have an edge between them if they differ
by appending one action.} 
 for
the game $G(\phi,D,k,\m,g,b)$  
has $3(2^n)(2n)^k$ leaves: there is a
branching factor of $2^n$ at the root (since there are $2^n$ truth
assignments) followed by $k$ branching factors of $n$ (for the $n$
variables that the agent can choose to test) and 2 (for the two
possible outcomes of a test).  At the end there are three choices
(don't guess, guess $T$, and guess $F$).  A straightforward backward
induction can then be used to compute the optimal strategy.
Unfortunately, the complexity of this approach is polynomial in the
number of leaves,
 and hence grows exponentially in $k$
even for a fixed number of variables $n$, quickly becoming infeasible.

In general, it is unlikely that the dependency on $2^n$ can be
removed.
In the special case that $b=-\infty$ and $\alpha_i=\frac{1}{2}$ for all $i$
(so tests are perfectly accurate, but the truth value of the formula must be
established for sure), determining whether
there is a strategy that gets a positive expected payoff
when the bound on tests is $k$
reduces to the problem
of finding a conjunction of length $k$ that implies a given Boolean
formula. Umans \citeyearpar{Umans1999} showed that this problem is
$\Sigma_2^p$-complete, so it lies in a complexity class that is
at least as hard as both NP and co-NP.
A simple heuristic (whose choice of variables is independent of $\phi$) would be
to simply test each variable in the formula $k/n$ times,
and then choose the action that maximises the
expected payoff given the observed test outcomes.
We can calculate in time polynomial in $k$ and $n$ the
expected payoff of a guess, conditional on a sequence of test outcomes.
Since determining the best guess involves checking the likelihood of
each of the $2^n$ truth assignments conditional on the outcomes, this
approach takes time polynomial in $k$ and $2^n$.
We are most interested in formulae where $n$ is small
(note $k$ still can be large, since we can test a variable multiple times!),
so this
time complexity would be
acceptable.  However, this approach can be arbitrarily worse than the
optimum.  As we observed in Example {\ref{twovaror}}, the expected
payoff of this strategy is
negative, while there is a strategy that has positive expected
payoff.  

An arguably somewhat better heuristic,
which we call the \emph{random-test heuristic},
is to choose, at every step,
the next variable to test uniformly at random, and again,
after $k$ observations, choosing the action that maximises the expected payoff.
This heuristic
clearly has the same time complexity as the preceding one, while 
working better in information-acquisition games that require
an unbalanced approach to testing.

\begin{proposition} \label{prop:randvars}
  If there  exists a strategy that has positive expected payoff
in the information-acquisition game $G$,
then the random-test heuristic has positive expected payoff.
\end{proposition}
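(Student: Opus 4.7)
The plan is to decouple the two choices the random-test heuristic makes, namely the choice of which variables to test and the choice of final action, and argue each separately. A key observation is that, because the heuristic picks the final action (including the option of not guessing) to maximise expected payoff given the observed test outcomes, the conditional expected payoff of the heuristic is nonnegative after any test-outcome sequence: in the worst case, the heuristic refuses to guess and collects 0. So the overall expected payoff is a sum of nonnegative terms, and it suffices to exhibit one test-outcome sequence that occurs with positive probability under the heuristic and yields strictly positive conditional expected payoff.

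First I would formalise the decomposition. Write $S = (v_{i_1} \mt b_1, \ldots, v_{i_k} \mt b_k)$ for a test-outcome sequence, and let $u^*(S)$ be the maximum over $\{\text{no guess}, \text{guess } T, \text{guess } F\}$ of the conditional expected payoff given $S$. Then $u^*(S) \ge 0$ for all $S$, and the random-test heuristic has expected payoff $\sum_S P_{\text{rand}}(S)\, u^*(S)$. The main point is that, for any strategy $\sigma$, the conditional distribution of the outcomes $b_1,\ldots,b_k$ given the tested variables $v_{i_1},\ldots,v_{i_k}$ depends only on the game parameters $D$ and $\m$ (via the marginalisation over the hidden assignment), and not on $\sigma$. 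Thus for any $S$,
\[
P_{\text{rand}}(S) \;\ge\; (1/n)^k \cdot P(b_1,\ldots,b_k \mid v_{i_1},\ldots,v_{i_k}),
\]
and the right-hand side is positive whenever $P_\sigma(S) > 0$ for any other strategy $\sigma$.

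Next I would invoke the hypothesis. Let $\sigma^*$ be a strategy with $\E_{G,\sigma^*}(\mathrm{payoff}) > 0$. Since the expected payoff under $\sigma^*$ is a weighted average over test-outcome sequences $S$ of the conditional expected payoff produced by $\sigma^*$'s final action, there must be some $S^*$ with $P_{\sigma^*}(S^*) > 0$ and conditional expected payoff strictly positive under $\sigma^*$'s chosen final action on $S^*$. The heuristic's optimal choice on $S^*$ can only do at least as well, so $u^*(S^*) > 0$. By the previous paragraph, $P_{\text{rand}}(S^*) > 0$. Combining,
\[
\E_{G,\text{rand}}(\mathrm{payoff}) \;=\; \sum_S P_{\text{rand}}(S)\, u^*(S) \;\ge\; P_{\text{rand}}(S^*)\, u^*(S^*) \;>\; 0,
\]
which is what we wanted.

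The only mildly delicate step is the claim that the conditional distribution of outcomes given the list of tested variables is strategy-independent; this is exactly the kind of fact formalised in Appendix~B of the paper when justifying the notation $\Pr(\phi \mid S)$, and it follows from the assumption that test errors are independent conditional on the truth assignment. Once that is in hand, everything else is a direct averaging argument, so I do not expect any real obstacle.
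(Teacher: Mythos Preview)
Your proof is correct and follows essentially the same approach as the paper's: decompose the expected payoff over test-outcome sequences, find a sequence $S^*$ on which the given strategy $\sigma^*$ earns positive conditional payoff, and observe that the random-test heuristic reaches $S^*$ with positive probability while never earning negative conditional payoff elsewhere. The only minor difference is that the paper routes the transfer of ``positive conditional payoff on $S^*$'' through the threshold characterisation of Lemma~\ref{lem:payoff}, whereas you argue it directly via $u^*(S^*) \ge (\text{payoff of }\sigma^*\text{'s action on }S^*)$; your route is slightly more economical but not substantively different.
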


To prove Proposition~\ref{prop:randvars}, we need a preliminary lemma.  
Intuitively, an optimal strategy 
should try to generate test-outcome sequences 
$S$
that maximise $|\Pr(\phi \mid S) - 1/2|$, since
 the larger $|\Pr(\phi \mid S) - 1/2|$ is, the more certain the agent
is regarding whether $\phi$ is true or false.  The following lemma
characterises how large $|\Pr(\phi \mid S) - 1/2|$ has to be to get a
  positive expected payoff.

 \begin{definition}
Let $q(b,g)= \frac{b+g}{2(b-g)}$ be the \emph{threshold} associated
with payoffs $b,g$.
\hfill
\wbox
\end{definition}

\begin{lemma}\label{lem:payoff}
 The expected payoff of $G(\phi,D,k,\m,g,b)$ 
  when making a guess after observing
    a sequence $S$ of test outcomes is positive iff
\begin{equation}\label{eq1}
    \left| \Pr(\varphi\mid S) - 1/2 \right| >
q(b,g).
\end{equation}
\end{lemma}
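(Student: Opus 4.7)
\medskip

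\noindent\textbf{Proof plan for Lemma~\ref{lem:payoff}.}
The plan is to compute the expected payoff of the agent's best guess directly, write it as a closed-form expression in $p := \Pr(\varphi \mid S)$, and then solve the resulting linear inequality.

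First I would observe that, given test-outcome history $S$, the only random quantity remaining is the truth value of $\varphi$ itself, which by definition is $T$ with probability $p$ and $F$ with probability $1-p$. Since the payoffs $g$ for a correct guess and $b$ for an incorrect guess do not depend on which truth value was guessed, the expected payoff of guessing $T$ is $pg + (1-p)b$, and the expected payoff of guessing $F$ is $(1-p)g + pb$. Writing each as a function of $p$, I would note that the two lines cross at $p = 1/2$, and (since $g > b$) the $T$-guess dominates when $p > 1/2$, while the $F$-guess dominates when $p < 1/2$.

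Next I would combine these cases into a single expression for the optimal-guess payoff, namely
\[
\max\bigl(pg+(1-p)b,\; (1-p)g+pb\bigr) \;=\; \tfrac{g+b}{2} + (g-b)\bigl|p - \tfrac{1}{2}\bigr|.
\]
From here the lemma is an algebraic manipulation: this quantity is strictly positive iff $(g-b)|p-1/2| > -(g+b)/2$, and dividing by $g-b > 0$ (using $g > 0 > b$) yields
\[
\bigl|p - \tfrac{1}{2}\bigr| \;>\; \frac{-(g+b)}{2(g-b)} \;=\; \frac{b+g}{2(b-g)} \;=\; q(b,g),
\]
which is the stated condition.

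The argument is almost entirely a one-line computation, so the only thing that requires minor care is sign-tracking: one has to use $g - b > 0$ to justify dividing through without flipping the inequality, and one should note that the biconditional remains correct even in the regime $|b| < g$ where $q(b,g)$ is negative and the condition is vacuously satisfied (consistent with the fact that in that regime any guess has nonnegative expected payoff). No preliminary result beyond the definitions of $\Pr(\varphi \mid S)$ and of the payoff structure from Section~\ref{sec:iagames} is needed.
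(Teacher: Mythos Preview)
Your proof is correct and takes essentially the same approach as the paper: compute the expected payoff for each of the two possible guesses as a linear function of $p=\Pr(\varphi\mid S)$, and solve the inequality. The only cosmetic difference is that you fold the two cases into the single closed form $\tfrac{g+b}{2}+(g-b)\lvert p-\tfrac12\rvert$ before solving, whereas the paper treats the guess-$T$ and guess-$F$ cases separately and then invokes $\lvert x\rvert=\max\{x,-x\}$ at the end.
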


\begin{proof} 
    The expected payoff when guessing that the formula is true is
  $$ g\cdot \Pr(\varphi \mid S) + b \cdot (1-\Pr(\varphi \mid S)). $$
This is greater than zero iff
$$ (g-b) \Pr(\varphi \mid S) + b > 0, $$
that is, iff
$$ \Pr(\varphi \mid S) - 1/2 > \frac{b}{b-g} - \frac{1}{2} =
q(b,g).$$
When guessing that the formula is false, we simply exchange $\Pr(\varphi \mid
S)$ and $1-\Pr(\varphi\mid S)$ 
in the derivation. So the payoff is then positive iff
$$ (1-\Pr(\varphi \mid S)) - \frac{1}{2} = -(\Pr(\varphi \mid S) - \frac{1}{2})>
q(b,g).$$
Since $|x|=\max \{x,-x\}$, at least one of these two inequalities must
hold if (\ref{eq1}) does,  
so the corresponding guess will have positive expected payoff.
Conversely, since $|x| \geq x$, either inequality holding implies (\ref{eq1}).
\end{proof}

\begin{proof}[Proof of Proposition \ref{prop:randvars}]
  Suppose that $\sigma$ is a strategy for $G$ with positive expected
  payoff.
The test-outcome sequences of length $k$ partition the space of 
paths in the game tree, 
so we have
$$\sum_{\{S: |S|=k\}}
\PrS{\sigma}(S) \, \E_{G,\sigma}(\text{payoff}\mid S). $$ 
Since the payoff is positive, at least one of the summands on the
right must be, say the one due 
to the sequence $S^*$.
By Lemma~\ref{lem:payoff},
$\left| \Pr(\varphi\text{ is true}\mid S^*) - 1/2 \right| >
q(b,g)$.

Let $\tau$ denote the random-test heuristic.
Since $\tau$ chooses the optimal action after making $k$ observations,
it will not 
get a negative expected payoff for any sequence $S$ of $k$ test outcomes
(since it can always obtain a payoff of 0 by choosing not to guess). 
On the other hand, with positive probability, the variables that make up
the sequence $S^*$ will be chosen and the outcomes in $S^*$ will be
observed for these tests; that is $\PrS{\tau}(S^*) > 0$.  
It follows from Lemma~\ref{lem:payoff} that $\E_{G,\tau}(\mbox{payoff}
\mid S^*) > 0$.  Thus, $\E_{G,\tau}(\mbox{payoff}) > 0$, as desired.
\end{proof}

\section{Rational inattention}
\label{sec:ratinatt}

We might think that an optimal strategy for learning about $\phi$
would test all variables that are
relevant to $\phi$ (given a sufficiently large test budget).  As shown
in Example~\ref{twovaror}, this may not 
be true.  For example, an optimal $k$-step strategy for $v_1 \lor v_2$
can end up never testing $v_1$, no matter what the value of $k$, if it
starts by testing $v_2$ and keeps discovering that $v_2$ is true.
It turns out that RI is quite
widespread.  

It certainly is not surprising that if a variable $v$ does not occur in
$\phi$, then an optimal strategy would not test $v$.  More generally,
it would not be surprising that a variable that is not
particularly relevant to $\phi$ is not tested too often,
perhaps because it makes a difference only in rare edge cases.
In the foraging animal example from the introduction, the
possibility of a human experimenter having prepared a safe food
to look like a known poisonous plant would impact whether it
is safe to eat, but is unlikely to play a significant role in day-to-day
foraging strategies. 
What might seem more surprising is if a variable $v$ is (largely)
ignored while another variable $v'$ that is no more relevant than $v$
is tested.  This is what happens in Example~\ref{twovaror}; although
we have not yet defined a notion of relevance, symmetry considerations
dictate that $v_1$ and $v_2$ are equally relevant to $v_1 \lor v_2$,
yet an optimal strategy might ignore one of them.

The phenomenon of rational inattention observed in
Example~\ref{twovaror} is surprisingly widespread.  To make this claim
precise, we need to define ``relevance''.  
There are a number of reasonable ways of defining it; we focus
on one below.%
\footnote{We checked various other reasonable definitions
  experimentally; qualitatively, it seems that our results continue to
  hold for all the variants that we tested.}
The definition of the relevance of $v$ to $\phi$ that we use counts
the number of truth assignments for which changing the truth value of
$v$ changes the truth value of $\phi$.  
\begin{definition} Define the relevance ordering $\le_{\phi}$ on the
  variables in $\phi$ by taking 
$$    \begin{array}{lll}
 & &   v \leq_\phi v'  \text{ iff } \\
  & &   |\{ A : \phi(A[v \mapsto \t]) \ne \phi(A[v \mapsto \f]) \}| \\ &\le& 
  |\{ A : \phi(A[v' \mapsto \t]) \ne \phi(A[v' \mapsto \f]) \}|,
  \end{array}$$
where $A[v \mapsto b]$ is the assignment that agrees with $A$ except
that it assigns truth value $b$ to $v$.
\hfill \wbox
\end{definition}
Thus, rather than saying that $v$ is or is not relevant to $\phi$, we
can say that $v$ is (or is not) at least as relevant to $\phi$ as
$v'$.
Considering the impact of a change in a single variable to the truth
value of the whole formula in this fashion has been done both in the cognitive
science and the computer science literature: for example, Vigo
\citeyearpar{Vigo2011} uses the 
\emph{discrete (partial) derivative} to capture this effect, and Lang
et al. \citeyearpar{lang03} 
define the related notion of \emph{Var-independence}.

We could also consider taking the probability of the set of truth
assignments where a variable's value makes a difference, rather than
just counting how many such truth assignments there are.
This would give a more detailed quantitative view of relevance, and
is essentially how relevance is considered in Bayesian
networks. Irrelevance is typically identified with independence.
Thus, $v$ is relevant to $\phi$ if a change to $v$ changes the
probability of $\phi$.  (See Druzdzel and Suermondt \citeyearpar{druzdzel94b}
for a review of work on relevance in the context of Bayesian networks.)
We did not consider a probabilistic notion of relevance because then
the relevance order would depend on the game
(specifically, the distribution $D$, which is one of the parameters of
the game).  Our definition makes the relevance order depend only on
$\phi$.  That said, we believe that essentially the same results as
those that we prove could be obtained for a probabilistic notion of relevance
ordering.  

Roughly speaking, $\phi$ exhibits RI
if, for all optimal strategies $\sigma$ for the game $G(\phi, D, k,
\vec{\alpha}, b, g)$, 
 $\sigma$ tests a
variable $v'$ frequently while hardly ever testing a variable
$v$ that is at least as relevant to $\phi$ as $v'$.  
We still have to make precise ``hardly
ever'', and explain how the claim depends on the choice of $D$, 
$\vec{\alpha}$, $k$, $b$, and $g$.   For the latter point, note that in
Example~\ref{twovaror}, we had to choose $b$ and $g$ appropriately to
get RI.  This turns out to be true in general; given
$D$, $k$, and $\vec{\alpha}$, the claim holds only for an appropriate
choice of $b$ and $g$ that depends on these.
In particular, for any fixed choice of $b$ and $g$ that depends only on
$k$ and $\vec{\alpha}$, there exist choices of priors $D$ for which
the set of optimal strategies is fundamentally uninteresting: we
can simply set $D$ to assign a probability to some truth assignment $A$
that is so high 
that the rational choice is always to guess $\phi(A)$,
regardless of the test outcomes.

Another way that the set of optimal strategies can be rendered
uninteresting is when, from the outset, there is no hope
of obtaining sufficient certainty of the formula's truth value
with the $k$ tests available. Similarly to when the truth value
is a foregone conclusion, in this situation,
an optimal strategy can perform arbitrary tests, as long as it makes no
guess at the end.
More generally, even when in general the choice of variables to test does
matter, a strategy can reach a situation where there is sufficient
uncertainty that no future test outcome could affect the final choice.
Thus, a meaningful definition of
RI that is based on the variables tested by
optimal strategies must consider only tests performed
in those cases in which a guess actually should be made (because the
expected payoff of the optimal strategy is positive).%
\footnote{One way to avoid these additional requirements
is to modify the game so that performing a test is associated has 
a small but positive cost,
so that an optimal strategy avoids frivolous testing
when the conclusion is foregone.
The definitions we use have
essentially the same effect, and are easier to work with.}  We now
make these ideas precise. 

\begin{definition} A function $f: \IN \rightarrow \IN$ is
  \emph{negligible} 
if $f(k)=o(k)$, 
that is, if $\lim_{k \rightarrow \infty} f(k)/k  = 0$.
\hfill \wbox
\end{definition}
The idea is that $\phi$ exhibits RI if, as the
number $k$ of tests allowed increases, the 
fraction of times that some variable $v$ is tested is negligible
relative to the number of times that another variable $v'$ is tested,
although $v$ is at least as relevant to 
$\phi$ as $v'$.  We actually require slightly more: we want $v'$ to 
be tested a linear number of times (i.e., at least $ck$ times, for
some constant $c > 0$).
(Note that this additional requirement makes it harder for a variable
to exhibit RI.)

Since we do not want our results to depend on correlations
between variables, we restrict attention to probability distributions
$D$ on truth assignments that are product distributions.
\begin{definition} A probability distribution $D$ on
  truth
  assignments to
  $v_1, \ldots, v_n$
  is a \emph{product distribution} if 
    $\PrT{D}(A) = \PrT{D}(v_1 = A(v_1)) \cdots \PrT{D}(v_n=A(v_n))$
  (where, for an arbitrary formula $\phi$, $\PrT{D}(\phi) = \sum_{\{A:\;
    A(\phi) = \t\}} \PrT{D}(A)$). 
  \hfill \wbox
\end{definition}
As discussed earlier, to get an interesting notion of
RI, we need to allow the choice of payoffs $b$ and $g$ 
to depend on the  prior distribution $D$; for fixed $b$, $g$, and
testing bound $k$, if the distribution $D$ places sufficiently high
probability on a single assignment, no $k$ outcomes can change the
agent's mind.
Similarly, assigning prior probability 1 to any one variable being
true or false means that no tests will change the agent's mind about
that variable, and so testing it is pointless (and the game is
therefore equivalent to one played on the formula in $n-1$ variables
where this variable has been replaced by the appropriate truth value).
We say that a
probability distribution that gives all truth assignments positive
probability 
is \emph{open-minded}. 
 
With all these considerations in hand, we can finally define
RI formally.
\begin{definition} \label{RIviastrategies} The formula $\varphi$ \emph{exhibits
    rational inattention}
if, for all open-minded product distributions $D$
and uniform accuracy vectors $\m$ (those with ($\alpha_1=\ldots=\alpha_n$)),
 there exists a negligible function $f$ and a constant $c>0$ such that
 for all $k$,
 there are
 payoffs $b$ and 
 $g$ such that
all optimal strategies in the information-acquisition game
$G(\phi,D,k,\m,b,g)$
have positive expected payoff and,
in all histories of the game, 
either make no guess or
\begin{itemize}
  \item test a variable $v'$ at least $ck$ times, but
      \item test a variable $v$ such that $v'\leq_{\phi}v$ at
    most $f(k)$ times.
    \hfill \wbox
\end{itemize}
\end{definition}

We can check in a straightforward way whether some natural classes of
formulae exhibit RI in the sense of
this definition.
\begin{example} \label{ex:ri} (Rational inattention)
\begin{enumerate}[1.]
\item Conjunctions $\phi=\bigwedge_{i=1}^N \ell_i$ and disjunctions
  $\phi=\bigvee_{i=1}^N \ell_i$ of $N\geq 2$ literals (variables
  $\ell_i=v_i$ or their negations $\neg v_i$) exhibit
  RI. In each
  case, we can pick $b$ and $g$ such that all optimal strategies pick
  one variable and focus on it, either to establish that the formula
  is false (for conjunctions) or that it is true (for
  disjunctions). By symmetry, all variables $v_i$ and $v_j$ are
  equally relevant, so $v_i\leq_\phi v_j$. 

\item The formulae $v_i$ and $\neg v_i$
  do not exhibit RI. There is no variable $v \ne
  v_i$ such that 
    $v_i \leq_{(\neg)v_i} v$, and  for all choices of $b$ and $g$, 
  the strategy of testing only $v_i$ and ignoring all other variables
  (making an
  appropriate guess in the end) is clearly optimal for $(\neg) v_i$.

\item More generally, we can say that all XORs in $\geq 0$ variables
do not exhibit RI. 
For the constant formulae $T$ and $F$, any testing strategy that ``guesses'' correctly
is optimal; for any XOR in more than one variable, an optimal strategy
must test all of them as any remaining uncertainty about the truth
value of some variable leads to at least equally great uncertainty
about the truth value of the whole formula.
Similarly, negations of XORs do not exhibit RI.  
Together with the preceding two points, this means
that the only formulae in $2$ variables exhibiting rational
inattention are those equivalent to one of the four conjunctions
$\ell_1 \wedge \ell_2$ 
or the four disjunctions $\ell_1 \vee \ell_2$ in which each
variable occurs exactly once and may or may not be negated.

\item For $n>2$, formulae $\phi$ of the form $v_1 \vee (\neg v_1
   \wedge v_2 \wedge \ldots \wedge v_n))$ 
      do not exhibit RI.
   Optimal strategies that can
attain a positive payoff at all will start by testing $v_1$;
if the tests come out true, it will be optimal to continue
testing $v_1$, ignoring $v_2 \ldots v_n$. However, for
formulae $\phi$ of this form, $v_1$ is strictly more relevant
than the other variables: there are only $2$
assignments where changing $v_i$ flips the truth value of the formula
for $i > 1$
(the two where $v_1\mapsto F$ and  $v_j\mapsto T$ for $j \notin \{1,i\}$)
but $2^n-2$ assignments where changing $v_1$ does (all but the two where
$v_j \mapsto T$ for $j \ne 1$). Hence, in the event that
all these tests actually succeed, the only variables that are ignored
are not at least as relevant as the only one that isn't, 
so $\phi$ does not exhibit RI.
\item For $n>3$, formulae $\phi$ of the form $(v_1\vee v_2)\wedge (v_2 \oplus \ldots \oplus v_n)$ exhibit RI. Optimal 
strategies split tests between $v_1$ and $v_2$, and try to establish that both variables are false and hence $\phi$ is;
to establish that the formula is true would require ascertaining the truth of the XOR term, and hence splitting the
testing budget at least 3 ways. However, $v_1$ is comparatively
irrelevant, as it determines only whether $\phi$
is true in $1/4$ of all assignments (when $v_2$ is false, and the XOR
is true). All other variables 
determine $\phi$'s truth value unless $v_1\vee v_2$ is false, that is, in
$3/4$ of all assignments. These 
formulae 
(and other similar families)
satisfy an even stronger definition of RI, as a strictly less
 relevant variable is preferred. 
\end{enumerate}
\hfill
\wbox
\end{example}

Unfortunately, as far as we know, determining the optimal
strategies is hard in general. 
To be able to reason about
whether $\phi$ exhibits RI in a tractable way, we find it useful to
consider optimal test-outcome sequences.

\begin{definition} \label{def:optseq} A sequence $S$ of test outcomes
    is \emph{optimal} for a formula $\phi$, prior $D$, and accuracy vector
$\m$ 
if it minimises the conditional uncertainty about the truth value of
$\phi$ among 
all test-outcome sequences of the same length.  That is,
$\left|\Pr(\phi\mid S) -\frac{1}{2}\right| \geq \left|\Pr(\phi\mid
S')-\frac{1}{2}\right|$ 
for all $S'$ with $|S'|=|S|$.
\hfill \wbox
\end{definition} 
Using this definition, we can derive a sufficient (but not
necessary!) condition 
for formulae to exhibit RI.

\begin{proposition} \label{RIviatestseqs}
  Suppose that, for a given formula     $\phi$,
for all open-minded product distributions $D$ 
 and uniform accuracy
  vectors $\m$, there exists a negligible function $f$ and a constant $c>0$ such that
    for all 
testing bounds $k$,
the test-outcome sequences $S$ optimal for $\phi$, $D$, and $\m$ 
of length $k$ have the
following two properties: 
\beginsmall{itemize}
\item $S$ has at least $ck$ tests of some variable $v'$, but
\item $S$ has at most $f(k)$ tests of some variable $v \geq_\phi v'$.
\endsmall{itemize}
Then $\phi$ exhibits RI.
\end{proposition}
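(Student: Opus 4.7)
My plan is to choose payoffs $b,g$ (depending on $k$) so that an optimal strategy is forced to guess only after producing a test-outcome sequence that the hypothesis has already declared optimal, after which the RI condition on strategies transfers immediately from the RI condition on sequences.

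Fix any open-minded product distribution $D$ and uniform accuracy vector $\m$, and take the negligible $f$ and constant $c>0$ that the hypothesis supplies. For each $k$, the length-$k$ test-outcome sequences form a finite set, so the maximum $M_k := \max_{|S|=k}|\Pr(\phi\mid S)-\tfrac{1}{2}|$ is attained exactly by the optimal sequences, and the next-largest value $M_k'$ (the max over non-optimal sequences) satisfies $M_k' < M_k$ strictly. I then pick payoffs $g>0>b$ with $|b|>g$ such that the threshold $q(b,g)=(b+g)/(2(b-g))$ lies in the open interval $(M_k', M_k)$; this is feasible because $q(b,g)$ sweeps over $(0,\tfrac{1}{2})$ as $b$ varies with $g$ fixed.

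With these payoffs, Lemma~\ref{lem:payoff} tells us that guessing after a sequence $S$ has strictly positive expected payoff exactly when $S$ is optimal and strictly negative expected payoff otherwise (the boundary case $|\Pr(\phi\mid S)-\tfrac{1}{2}|=q(b,g)$ is excluded by our strict choice of $q$). By subgame-perfectness of optimal strategies in this finite single-player decision problem (standard backward induction), any optimal $\sigma$ at a reachable subgame picks the action of highest conditional payoff; in particular $\sigma$ guesses only if some guess beats the $0$ payoff of not guessing. Combined with the previous sentence, this forces the test-outcome sequence in every history where $\sigma$ guesses to be optimal, at which point the hypothesis directly provides the ``$\ge ck$ tests of some $v'$ and $\le f(k)$ tests of some $v\ge_\phi v'$'' conclusion required by Definition~\ref{RIviastrategies}.

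It remains to verify the positive-expected-payoff clause of the RI definition. I do this by exhibiting a concrete witness strategy $\tau$: test the variables of some fixed optimal sequence $S^*$ in order, and at the end guess iff the observed outcomes coincide with $S^*$. Openmindedness of $D$ together with the positivity of each test outcome's conditional probability (for $\alpha<\tfrac{1}{2}$, and by the very definedness of $\Pr(\phi\mid S^*)$ when $\alpha=\tfrac{1}{2}$) gives $\PrT{\tau}(S^*)>0$; the conditional payoff of guessing at $S^*$ is positive, and is $0$ on all other paths, so $\E_{G,\tau}(\text{payoff})>0$ and hence $\E_{G,\sigma}(\text{payoff})\ge \E_{G,\tau}(\text{payoff})>0$ for every optimal $\sigma$. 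The main delicacy of the argument is the strict separation $M_k' < M_k$: this is what creates the window for $q(b,g)$, and it rests on the definition of ``optimal sequence'' lumping together all sequences attaining the maximum, so that the non-optimal ones really are strictly below. The remaining subtleties (subgame perfectness, and the boundary behaviour at $\alpha=\tfrac{1}{2}$) are standard and do not change the shape of the proof.
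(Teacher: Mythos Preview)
Your proof is correct and follows essentially the same approach as the paper's own proof: both set the threshold $q(b,g)$ strictly between the maximal certainty $M_k$ attained by optimal sequences and the next-largest value $M_k'$, then use Lemma~\ref{lem:payoff} to argue that optimal strategies can only guess after optimal sequences, and finally exhibit the same witness strategy (test the variables of a fixed optimal sequence $S^*$ in order, guess iff $S^*$ is observed). Your appeal to subgame-perfectness is a slightly more abstract packaging of the paper's direct ``replace the guess by no-guess to get a strictly better strategy'' argument, but the content is identical.
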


\begin{proof}
  Let $P(\phi,D,\m,f,c,k)$ denote the statement that for all
test-outcomes sequences $S$ that are optimal for $\phi$, $D$, and $\m$,
there exist 
 variables $v \geq_\phi v'$ such that $S$ contains $\geq ck$ tests of $v'$ 
and $\leq f(k)$ tests of $v$.
We now prove that for all $\phi$, $D$, $\m$,
$f$, $c$, and $k$,  
$P(\phi,D,\m,f,c,k)$ implies
the existence of $b$ and $g$ such that $\phi$ exhibits RI in the game
$G(\phi,D,k,m,b,g)$.
It is easy to see that this suffices to prove the proposition.

Fix $\phi$, $D$, $\m$, $f$, $c$, and $k$,
and suppose that $P(\phi,D,\m,f,c,k)$ holds.
 Let $$q^* = \max_{\{S:|S|=k\}}
\left|\Pr(\varphi|S)-\frac{1}{2}\right|.$$
 Since there 
 are only finitely many test-outcome sequences of length $k$, there must be some
 $\varepsilon > 0 $ sufficiently small such that for all $S$ with $|S|=k$, 
  $|\Pr(\varphi|S)-\frac{1}{2}|>q^*-\varepsilon$ iff
$|\Pr(\varphi|S)-\frac{1}{2}|=q^*$.
Choose the payoffs $b$ and $g$ such that the threshold
$q(b,g)$ is $q^* - \varepsilon$.
We show that $\phi$ exhibits RI in the game $G(\phi,D,k,m,b,g)$.

Let $\mathcal{S}_k = \{S: |S| = k \mbox{ and } 
|\Pr(\varphi|S)-\frac{1}{2}|=q^*\}$ be the set of
test-outcome sequences of length $k$ optimal for $\phi$, $D$, and $\m$. 
If
$\sigma$ is an optimal strategy for the game $G(\phi,D,k,\m,g,b)$, 
the only sequences of test outcomes after which $\sigma$ makes
a guess are the ones in $\mathcal{S}_k$.
For if a guess is made after seeing some test-outcome sequence
$S^* \not\in \mathcal{S}_k$,
by Lemma~\ref{lem:payoff} and the choice of $b$  and $g$, 
the expected payoff of doing so must be negative, 
so the strategy $\sigma'$ that is identical to $\sigma$ except that it makes 
no guess if $S^*$ is observed is strictly better than $\sigma$, contradicting
the optimality of $\sigma$. So whenever a guess is made,
it must be after a sequence $S \in \mathcal{S}_k$ was observed.  
Since sequences in $\mathcal{S}_k$ are optimal for $\phi$, $D$, and
$\m$, and $P(\phi,D,\m,f,c,k)$ holds by assumption,
this sequence $S$ must 
contain $\ge ck$ test of $v'$ and $\le f(k)$ test of $v$.

\commentout{  
    $\Pr_\sigma(S)$ must be greater than zero for some $S\in
    \mathcal{S}_k$. For suppose not. Then the probability of the set of
    observation sequences of length $k$ that are not in $\mathcal{S}_k$ is
    outcome sequences of length $k$ that are not in $\mathcal{S}_k$ is
    1. By Lemma~\ref{lem:payoff} and the choice of $b$  and $g$, after each such
    outcome sequence, 
    either guess about the truth value of the formula gives 
    a negative expected payoff. Therefore, as $\sigma$ is optimal, it
    would  not make a guess with probability 1. So the expected
    payoff of $\sigma$ is 0.
}

All that remains to show that $\phi$ exhibits RI in the game
$G(\phi,D,k,\m,g,b)$ is to 
establish that all optimal strategies have
positive expected payoff.  To do this, it suffices to show that there
is a strategy that has positive expected payoff.  
Let $S$ be an arbitrary test-outcome  sequence in
$\mathcal{S}_k$.  Without loss of generality, we can assume that 
$\Pr(\phi \mid S)>1/2$.
Let  $\sigma_S$ be the strategy 
that tests 
every variable the number of times that it occurs in $S$ in the order
that the variables occur in $S$, and
guesses that the formula is true iff $S$ was in fact the
test-outcome sequence observed (and makes no guess otherwise).
Since $S$ will be observed with positive probability, it follows from
Lemma~\ref{lem:payoff} that $\sigma_S$ has positive expected payoff.
This completes the proof.
\end{proof}
Applying Proposition~\ref{RIviatestseqs} to test whether a formula
exhibits RI is not trivial.
It is easy to show that all that affects $\Pr(\phi \mid S)$ is the
number of times that
each variable is tested and the outcome of the
test, not the order in which the tests were made.
It turns out that to determine whether a formula $\phi$ exhibits RI,
we need to consider, for each truth assignment $A$ that satisfies $\phi$ and
test-outcome sequence $S$,
the \emph{$A$-trace} of $S$; this is a
tuple that describes, for each variable $v_i$, the fraction of times $v_i$ is
tested
(among all tests)
 and the outcome agrees with $A(v_i)$ compared to the fraction of times
that the outcome disagrees with $A(v_i)$.

In \shortv{the full paper}\fullv{Appendix \ref{sec:inattentionlp}},
we show that whether a formula exhibits 
RI can be determined by considering properties
of the $A$-traces of test-outcome sequences.   Specifically, we show
that the set of $A$-traces of
optimal test-outcome sequences 
 tends to a convex polytope as the length of $S$ increases. This polytope has
a 
characterisation as the solution set of an $O(n2^n)$-sized linear
 program (LP),
 so we can find points in the polytope in time polynomial in $2^n$. Moreover, 
 conditions such as a variable $v$ is ignored while a variable $v'$
 that is no more relevant than $v$ is not ignored correspond to
 further conditions on the LP, and thus can also be checked in time
 polynomial in $2^n$.
It follows that we can get a sufficient condition for a formula to
exhibit RI or not exhibit RI by evaluating a number of LPs of this
type. 

Using these insights, we were able to
exhaustively test all formulae that involve at most 4 variables
to see whether, as the number of tests in the game increases, optimal 
strategies were testing a more relevant variable a negligible number
of times relative to a less relevant variable.  
Since the criterion that we use is only a sufficient condition, not a
necessary one, we can give 
only a lower bound on the true number of formulae
that exhibit RI.  

In the following table, we summarise our results. The first column
lists the number of formulae that we are certain exhibit RI;
the second column
lists the remaining formulae, whose
status is unknown.  (Since RI is a semantic condition, when we say
``formula'', we really mean ``equivalence class of logically
equivalent formulae''. There are $2^{2^n}$ equivalence classes
of formulae with $n$ variables, so the sum of the two columns in 
the row labeled $n$ is $2^{2^n}$.)
As the results show, at least $15\%$ of formulae exhibit
RI. %

\begin{center}
\begin{tabular}{@{\hspace{1em}}r@{\hspace{4em}}c@{\hspace{4em}}
c@{\hspace{1em}}}
  \toprule
$n$ & exhibit RI 
 & unknown \\
\midrule
1 & 0 &  4 \\ 
2 & 8 &  8 \\ 
3 & 40 &  216 \\
4 & 9952 & 55584 \\
\bottomrule  
\end{tabular}
\end{center}

\commentout{
\begin{center}
\begin{tabular}{|r|ccc|}
\hline 
$n$ & exhibit RI & NTC $\Rightarrow$ RI & unknown \\
\hline
1 & 0 & 0 & 4 \\
2 & 8 & 0 & 8 \\
3 & 40 & 56 & 160 \\
4 & 9952 & 8248 & 47334 \\
\hline  
\end{tabular}
\end{center}
}
Given the numbers involved,
we could not exhaustively check what happens for $n \ge 5$.
However, we did randomly sample 4000 formulae that involved $n$
variables for $n = 5, \ldots, 9$. This is good enough for statistical
reliability: 
we can model the process as a simple random sample of a binomially
distributed parameter (the presence of RI),
and in the worst case (if its probability in the population of formulae
is exactly $\frac{1}{2}$), the 95\% confidence interval still has width
$\leq z\sqrt{\frac{1}{4000}\frac{1}{2}\left(1-\frac{1}{2}\right)} \approx 0.015$,
which is well below the fractions of formulae exhibiting RI that
we observe (all above $0.048$).
As the following table shows, RI continued to be
quite common.  Indeed, even for formulae with 9 variables, about 5\%
of the formulae we sampled  
exhibited RI.
\begin{center}
\begin{tabular}{@{\hspace{1em}}r@{\hspace{4em}}c@{\hspace{4em}}
c@{\hspace{1em}}}
  \toprule
$n$ & exhibit RI 
 & unknown \\
\midrule
5 & 585 & 3415 \\
6 & 506 & 3494 \\  
7 & 293 & 3707 \\
8 & 234 & 3766 \\
9 & 194 & 3806 \\
\bottomrule  
\end{tabular}
\end{center}

\commentout{
\begin{center}
\begin{tabular}{|r|ccc|}
\hline 
$n$ & exhibit RI & NTC $\Rightarrow$ RI & unknown \\
\hline
5 & 585 & 313 & 3102 \\
6 & 506 & 138 & 3356 \\  
7 & 293 & 63 & 3644 \\
8 & 234 & 30 & 3736 \\
9 & 194 & 10 & 3796 \\
\hline  
\end{tabular}
\end{center}
}

The numbers suggest that the fraction of formulae exhibiting RI
decreases as the number of variables increases.  However, since the formulae
that characterise situations of interest to people are likely to
involve relatively few variables (or have a structure like disjunction
or conjunction that we know exhibits RI), this suggests that RI is a
widespread phenomenon.
Indeed, if we weaken the notion of RI slightly (in what we believe is
quite a natural way!), then RI is even more widespread.
As noted in Example \ref{ex:ri}, formulae of the form
$v_1\vee (\neg v_1 \wedge v_2 \wedge \ldots \wedge v_n)$ do not
exhibit RI in the sense of our definition. However,
for these formulae, if we choose the payoffs $b$ and $g$
appropriately, an optimal
strategy may start by testing $v_1$, but if sufficiently many test
outcomes are $v_1 \mt F$, it will then try to establish that
the formula is false by focussing on one variable of the conjunction
$(v_2 \wedge \ldots \wedge v_n)$, and ignoring the rest.
Thus, for all optimal strategies, we would have RI, not for all
test-outcome sequences (i.e., not in all histories of the game), but on a
set of test-outcome sequences that occur with positive probability.

We found it hard to find formulae that do not
exhibit RI in this weaker sense.  In fact, we
conjecture that the 
only family of formulae that do not exhibit
RI in this weaker sense are equivalent to XORs in
zero or more variables $(v_1\oplus \ldots \oplus v_n)$ and their negations
(Note that this family of formulae includes $v_i$ and $\neg v_i$.)
If this conjecture is true, we would expect to quite often see 
rational agents (and decision-making computer programs) ignoring
relevant variables in practice.  

\section{Testing as a measure of complexity}
\label{sec:testcomp}
The notion of associating some ``intrinsic difficulty'' with concepts
(typically characterised using Boolean formulae) has been a topic of
continued interest in the cognitive science community \citep{Vigo2011,Feldman,LMG04,SHJ61}.
We can use our formalism to define a notion of difficulty for concepts.
Our notion of difficulty is based on the number of tests that are needed to 
guarantee a
positive expected payoff for the game $G(\phi,D,k,\m,g,b)$.  This
will, in general, depend on $D$, $\m$, $g$, and $b$.  Actually,
by Lemma \ref{lem:payoff},
what matters is not $g$ and $b$, but $q(b,g)$ (the threshold
determined by $g$ and $b$).  Thus, our complexity measure takes $D$,
$\m$, and $q$ as parameters.

\begin{definition}

  Given a formula $\phi$, accuracy vector $\m$, distribution $D$, and
  threshold $0<q\leq 
  \frac{1}{2}$, the \emph{($D, q,\m$)-test complexity} $\cpl(\phi)$ of $\phi$
  is the least $k$ such that there
exists a strategy with positive payoff for $G(\phi,D,k,\m,g,b)$,
where
$g$ and $b$ are chosen such that $q(b,g)=q$.
\hfill \wbox
\end{definition}

To get a sense of how this definition works, consider what happens if
we consider all formulae that use two variables, $v_1$ and $v_2$, with
the same settings as in Example~\ref{twovaror}: $\vec{\alpha} =
(1/4,1/4)$, $D$ is the uniform distribution on assignments, $g=1$, and
$b = -16$: 
\begin{enumerate}

\item If $\phi$ is simply $T$ or $F$, any strategy that guesses the appropriate
truth value, regardless of test outcomes, is optimal and gets a positive
expected payoff, even when $k=0$. 
So $\cpl(\phi)=0$.

\item If $\phi$ is a single-variable formula of the form $v_1$ or $\neg v_1$, then the
greatest certainty $| \Pr(\phi \mid S) - 1/2 |$ that is attainable
with any sequence
of two tests is $2/5$, when $S=(v_1\mt T, v_1\mt T)$ or the same with
$F$. This is 
smaller than $q(b,g)$, and so it is always optimal to make no guess;
that is, 
all strategies for the game with $k=2$ have expected payoff at most 0.
If $k=3$ and $S=(v_1\mt T, v_1\mt T, v_1\mt T)$, then $(\Pr(\phi \mid
S) - 1/2)  = 13/28 > q(b,g)$.
Thus, if $k=3$, the strategy that tests $v_1$ three times and guesses
the appropriate 
truth value iff all three tests agree has positive expected 
payoff.  It follows that $\cpl(\phi)=3$.

\item If $\phi$ is $v_1 \oplus v_2$, then the 
  shortest test-outcome 
    sequences $S$ for which $\Pr(\phi \mid S) - 1/2$ is greater than
    $q(b,g)$ have length 7, and 
involve both variables 
being tested. Hence, the smallest value of $k$ for which strategies with
payoff above $0$ exist is 
$7$, and $\cpl(\phi)=7$.

\item Per Example \ref{twovaror}, $\cpl(v_1 \vee v_2)=2$, and likewise for all
other conjunctions and disjunctions by symmetry.
\end{enumerate}

It is not hard to see that $T$ and $F$ always have complexity 0, while 
disjunctions and conjunctions
have low complexity. 
We can also
 characterise the most difficult concepts, according
to our complexity measure,
at least in the case of a uniform distribution $D_u$ on truth
assignments (which is the one most commonly considered in practice).

\begin{theorem}\label{thm:xor}
  Among all Boolean formulae in $n$ variables, for all $0<q\leq
  \frac{1}{2}$ and accuracy vectors
    $\m$, the $(D_u,q,\m)$-test complexity is maximised by 
formulae equivalent to   
the  $n$-variable XOR
$ v_1 \oplus \ldots \oplus v_n $ 
or its negation.
\end{theorem}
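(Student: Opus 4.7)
The plan is to carry out a Fourier analysis of the posterior over the Boolean cube. Identifying a truth assignment $A$ with a sign vector $a\in\{-1,+1\}^n$ via $a_i=+1\Leftrightarrow A(v_i)=T$ and writing $\chi_\phi:=2\cdot\mathbf{1}_\phi-1$, I would first expand the likelihood $\Pr(S\mid A)$ as $\prod_i(\mu_i+a_i\delta_i)$, with $\mu_i,\delta_i$ the half-sum and half-difference of the two possible likelihoods of the observations on $v_i$. Fourier inversion over $\mathbb{Z}_2^n$ then yields
$$\Pr(\phi\mid S) - \tfrac12 \;=\; \tfrac12\sum_{T\subseteq[n]}\hat\phi_T\prod_{i\in T}\rho_i,$$
where $\hat\phi_T=2^{-n}\sum_A\chi_\phi(A)\prod_{i\in T}a_i$ and $\rho_i:=\delta_i/\mu_i=\tanh(r_ic_i)$, with $r_i$ the signed surplus of $T$-outcomes for $v_i$ and $c_i:=\tfrac12\ln\tfrac{1/2+\alpha_i}{1/2-\alpha_i}$. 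Crucially, for $\phi$ equivalent to $v_1\oplus\cdots\oplus v_n$ (or its negation), $\hat\phi_{[n]}=\pm1$ is the only nonzero Fourier coefficient, so the posterior gap collapses to $\pm\tfrac12\prod_i\rho_i$.

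Next I would identify attainable $\rho$ vectors and reduce to corners. For any profile $(m_1,\ldots,m_n)$ with $\sum m_i=k$ and any $\epsilon\in\{-1,+1\}^n$, forcing all tests of $v_i$ to agree with $\epsilon_i$ realises $\rho_i=\epsilon_i\tanh(m_ic_i)$; by multilinearity of $g_\phi(\rho):=\sum_T\hat\phi_T\prod_{i\in T}\rho_i$, the maximum of $|g_\phi|$ over the box $|\rho_i|\le\tanh(m_ic_i)$ is attained at such a corner. The key inequality then comes from a second-moment argument: with $\epsilon$ uniform on $\{-1,+1\}^n$, orthogonality of the characters $a\mapsto\prod_{i\in T}a_i$ gives
$$\E_\epsilon\Bigl[g_\phi\bigl(\epsilon_1\tanh(m_1c_1),\ldots,\epsilon_n\tanh(m_nc_n)\bigr)^2\Bigr] \;=\; \sum_T\hat\phi_T^{\,2}\prod_{i\in T}\tanh^2(m_ic_i).$$
Since each factor $\tanh^2(m_ic_i)\le 1$, every summand is at least $\hat\phi_T^{\,2}\prod_{i=1}^n\tanh^2(m_ic_i)$, and $\sum_T\hat\phi_T^{\,2}=1$ pushes the whole sum down to $\prod_i\tanh^2(m_ic_i)$. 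Taking the square root, some $\epsilon$ makes $|g_\phi|\ge\prod_i\tanh(m_ic_i)$, which for the same profile is precisely the value realised by XOR.

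Maximising over $(m_i)$ with $\sum m_i=k$, I would conclude $\max_{|S|=k}|\Pr(\phi\mid S)-\tfrac12|\ge\max_{|S|=k}|\Pr(v_1\oplus\cdots\oplus v_n\mid S)-\tfrac12|$ for every $\phi$, so Lemma~\ref{lem:payoff} gives $\cpl(\phi)\le\cpl(v_1\oplus\cdots\oplus v_n)$ immediately. The main obstacle is setting up the Fourier expansion cleanly when $\m$ is non-uniform (so the $c_i$ differ) and verifying that multilinearity really does permit the reduction to corner $\rho$-vectors despite the discreteness of what test outcomes can realise; the second-moment step itself is routine. Should the theorem be read as identifying XOR and its negation as the \emph{unique} maximisers, an additional quantitative argument is needed to lift the strict version of the lower bound --- which holds as soon as $\phi$ is not equivalent to $v_1\oplus\cdots\oplus v_n$ or its negation and all $m_i\ge 1$ --- into a strict decrease in $\cpl$.
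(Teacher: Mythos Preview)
Your proposal is correct and takes a genuinely different route from the paper. The paper argues by iterative \emph{antisymmetrisation}: for each variable $v$ it defines $\phi_v=(v\wedge\phi\subst{v}{\t})\vee(\neg v\wedge\neg\phi\subst{v}{\t})$, shows via a case analysis comparing a test-outcome sequence $S_1$ to its $v$-flipped version $S_2$ that $\cpl(\phi)\le\cpl(\phi_v)$, checks that antisymmetrising along one variable preserves antisymmetry in the others, and iterates until the formula is antisymmetric in every variable, hence equivalent to XOR or its negation. Your Fourier approach is global rather than one-variable-at-a-time: the identity $\Pr(\phi\mid S)-\tfrac12=\tfrac12\sum_T\hat\phi_T\prod_{i\in T}\rho_i$ together with Parseval compares $\phi$ directly to XOR with no intermediate formulae. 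What this buys is a shorter argument that also explains \emph{why} XOR is hardest --- all its Fourier mass sits on $T=[n]$, so every $\rho_i$ (hence every variable) must be pinned down --- whereas the paper's elementary case analysis is opaque on this point. Your stated obstacles are milder than you fear: non-uniform $\m$ just produces variable-dependent constants $c_i$ in $\rho_i=\tanh(r_ic_i)$ and causes no trouble in the expansion, and the corner reduction via multilinearity is not actually needed, since the corners $\rho_i=\epsilon_i\tanh(m_ic_i)$ are already realised by all-agree test sequences, so the second-moment step over random $\epsilon$ suffices on its own.
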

  \begin{proof}[Proof sketch]
        Call a formula $\phi$ \emph{antisymmetric} in variable $v$ if
$\phi(A) = \neg \phi(A')$ 
for all pairs of assignments $A$, $A'$ that only differ in the truth value of $v$.
It is easy to check that a formula is antisymmetric in all
variables iff it is equivalent to an XOR 
or a negation of one.
Given a formula $\phi$, the
\emph{antisymmetrisation $\phi_v$ of $\phi$   along $v$} is 
is the formula $$\phi_v = (v\wedge \phi\subst{v}{\t})\vee (\neg v\wedge
\neg \phi\subst{v}{\t}),$$ 
where $\phi\subst{v}{x}$ denotes the formula that results from replacing
all occurrences of $v$ in $\phi$ by $x$.  It is easy to chek that
$\phi_v$ is indeed antisymmetric in $v$.
We can show that the $(D_u,q,\m)$-test complexity of $\phi_v$
is at least as high as that of $\phi$, and that if $v' \ne v$, then
$\phi_v$ is antisymmetric in $v'$ iff $\phi$ is antisymmetric in $v'$.
So, starting with an arbitrary formula $\phi$, we antisymmetrise every
variable in turn.  We then end up with an XOR or the negation of one.
Moreover, each antisymmetrisation step in the process gives a formula
whose test complexity is at least as high as that of the formula in
the previous step.  The desired result follows.
A detailed proof can be found in
\shortv{the appendix of the full paper.}
\fullv{Appendix \ref{sec:proofxorhardest}.}
  \end{proof}

  Theorem~\ref{thm:xor} does not rule out the possibility that there are formulae
other than those equivalent to the $n$-variable XOR or its negation that
maximise test complexity.
We conjecture that this is not the case except when $q=0$; this
conjecture is supported by experiments we've done with formulas that
have fewer than 8 variables.

It is of interest to compare our notion of ``intrinsic difficulty''
with those considered in the cognitive science literature.
That literature can broadly be divided up into purely
experimental approaches, typically
focused on comparing the
performance of human subjects in dealing with different categories,
and more theoretical ones that posit some structural hypothesis
regarding which categories are easy or difficult. 

The work of Shepard, Hovland, and Jenkins \citeyearpar{SHJ61} is a good
example of the former type; they compare concepts that can be defined
using three variables  in terms of how many
examples (pairs of assignments and corresponding truth values of the
formula) it takes human subjects to understand and remember
a formula $\phi$, as defined by a subject's ability to predict the truth value
of $\phi$ correctly for a given truth assignment.
We can think of this work as measuring how hard it is to work with a
formula; our formalism is measuring how hard it is to learn the truth
value of a formula.  
The difficulty ranking found experimentally by Shepard et al. mostly
agrees with our ranking, except that they find two- and three-variable
XORs to be easier that some other formulae, whereas we have shown that
these are the hardest formulae.  This suggests that there might be
differences between how hard it is to work with a concept and how
hard it is to learn it.

Feldman \citeyearpar{Feldman} provides a good example of the latter approach.
He proposes the notion of the 
\emph{power spectrum} of a formula $\varphi$.
Roughly speaking, this counts the number of antecedents in the
conjuncts of a formula when it is written as a conjunction of
implications where the antecedent is a conjunction of literals and the
conclusion is a single literal.
For example, the formula $\phi = (v_1 \land (v_2 \lor v_3))\lor (\neg
v_1 \land (\neg v_2\land \neg v_3))$ can be written as
the conjunction of three such implications: $(v_2\rightarrow
v_1)\land (v_3\rightarrow v_1) \land (\neg v_2 \land v_1 \rightarrow v_3)$.
Since there are no conjuncts with 0 antecedents, 2 conjuncts with 1
antecedent, and 1 conjunct with 2 antecedents, the power spectrum of
$\phi$ is $(0,1,2)$.
Having more antecedents in an implication is viewed as making concepts
more complicated, so 
a formula with a power spectrum of
$(0,1,1)$ is considered  more complicated than one with a power spectrum of
$(0,3,0)$, and less complicated than one with a power spectrum of
$(0,0,3)$.  

A formula with a power spectrum of the form $(i,j,0, \ldots,0)$ (i.e., a
formula that can be written as the conjunction of literals and
formulae of the form $x \rightarrow y$, where $x$ and $y$ are literals)
is called a \emph{linear category}.  
Experimental evidence
suggests that human subjects generally find linear categories easier
to learn than nonlinear ones \citep{Feldman,LMG04}.
(This may be related to the fact that such formulae are linearly
separable, and hence learnable by support vector machines \citep{VapnikLerner}.)
Although our complexity measure does not completely agree with the
notion of a power spectrum, both notions classify 
XORs
and their negations as the most complex; these formulae can be shown
to have a power spectrum of the form 
$(0,\ldots,0,2^{n-1})$. 

Another notion of formula complexity is the notion of \emph{subjective
  structural complexity} 
introduced by Vigo \citeyearpar{Vigo2011}, where the subjective
structural complexity of a formula $\phi$ is
$|Sat(\phi)|e^{-\|\vec{f}\|_2}$,
where $Sat(\phi)$ is the set of truth assignments that satisfy
$\phi$, $f=(f_1,\ldots, f_n)$, $f_i$ is the fraction of truth
assignments that satisfy $\phi$ such that changing the truth value of $v_i$
results in a truth assignment that does not satisfy $\phi$,
and $\|\vec{f}\|_2 = \sqrt{(f_1)^2 + \cdots + (f_n)^2}$ represents the
$\ell^2$ norm.  
Unlike ours, with this notion of complexity, $\phi$ and $\neg \phi$
may have different complexity (because of the $|Sat(\phi)|$ factor).
However, as with our notion,  
XORs and their negation have maximal complexity.

In computer science and electrical engineering, \emph{binary decision
  diagrams} (BDDs)  
\cite{lee59} are used as a 
compact representation of Boolean functions.
BDDs resemble our notion of a testing strategy, although they do not
usually come with a notion 
of testing error or acceptable error margins on the output (guess).
Conversely, we could view testing strategies as a generalisation
of BDDs, in which we could ``accidentally'' take the wrong branch
(testing noise), a given variable can occur multiple times, leaf nodes
can also be labelled ``no guess'', and the notion of correctness of a BDD
for a formula is relaxed to require only that the output be correct
with a certain 
probability.
The \emph{expected decision depth} problem of Ron, Rosenfeld, and
Vadhan \cite{rosenfeld07} 
asks how many nodes of a BDD need to be visited in expectation
in order to evaluate a Boolean formula; this can also be
seen as a measure of complexity.
In our setting, an optimal strategy
for the ``noiseless'' information-acquisition game ($\alpha=1/2$,
$-\infty$ payoff for guessing wrong) exactly corresponds to a BDD for
the formula; asking about the depth of the BDD amounts to asking
about whether the strategy uses more than a given number of
tests.
\section{Conclusion}
\label{outlook}
We have presented the information-acquisition game, a game-theoretic model
of gathering information to inform a decision whose outcome depends
on the truth of a Boolean formula.
We argued that it is hard to find optimal strategies for this
model by brute force, and presented the random-test heuristic, a simple
strategy that  has only weak guarantees but is computationally tractable.
It is an open question whether better guarantees can be proven for the
random-test heuristic, and whether better approaches to testing
that are still more computationally efficient than brute force exist.
We used our techniques to show that RI is a
widespread phenomenon, at least, for formulae that use at most 9
variables. We argue that this certainly covers most concepts that
naturally arise in human discourse. Though it is certainly the case
that many propositions (e.g., the outcome of elections) depend on many
more variables, human speech and reasoning, for reasons of utterance
economy if nothing else, usually involves reducing these to simpler
compound propositions (such as the preferences of particular blocks of
voters). 
We hope in future work to get a natural structural criterion
for when formulae exhibit RI that can be applied
to arbitrary formulae.

Finally, we discussed how the existence of good strategies in our game
can be used as a measure of the complexity of a Boolean formula.
It would be useful to get a better understanding of whether test
complexity captures natural structural properties of concepts.  

Although we have viewed the information-acquisition game as a
single-agent game, there are natural extensions of it to multi-agent
games, where agents are collaborating to learn about a formula.
We could then examine different degrees of coordination for these
agents.  For example, they could share 
information at all times, or share information only
at the end (before making a guess).  The goal would be to understand
whether there is some structure in formulae that makes them
particularly amenable to division of labour, and to what extent
it can be related to phenomena such as rational inattention
(which may require the agents to coordinate on deciding which variable to
ignore).
In our model, we allowed agents to choose to make no guess for a
payoff of 0.  We could have removed this option, and instead 
required them to make a guess.    
We found this setting to be less amenable to analysis,
although there seem to be analogues to our results.
For instance, as in our introductory example, it is still rational to keep
testing the same variable in a disjunction with a probability that is
bounded away from zero, no matter how many tests are allowed. However, since
giving up is no longer an option, there is also a probability, bounded away
from both 0 and 1, that all variables have to be tested (namely when the formula
appears to be false, and hence it must be ascertained that all variables are).
The definition of test complexity makes sense in the alternative
setting as well, 
though the values it takes change; we conjecture that the theorem about XOR
being hardest can be adapted with few changes.

\commentout{
Possible future directions. Applications to medical diagnosis, DB query planning.

Different degrees of coordination for multiple agents collaborating to gather information (towards a guess whose payoff is shared by all), ordered by power:
\begin{enumerate}
\item can share information at all times;
\item can only share information in the beginning (before any tests) and end (before making a guess);
\item can only share information in the end (before making a guess). Also assume that everyone gets a random equivalent formula and variables have been renamed (so there is no canonical ordering).
\end{enumerate}

The second one can parallelise on $x\vee y$ (agree that everyone measures $x$ or everyone measures $y$ in advance) and act optimally, but will fail on something like $(x\wedge (a\oplus b) )\vee (\neg x\wedge  (c\oplus d))$, where the choice of $a$ or $b$ actually depends on the \emph{outcome} of measuring $x$. The third one can only parallelise something like $x\oplus y$ where multiple tests of each variable are necessary (splitting them evenly among all agents).
}

\iffullv
\appendix 

\section{Calculations for Example~\ref{twovaror}}
\label{app:twovaror}
In this section, we fill in the details of the calculations for
Example~\ref{twovaror}.  
We abuse notation by also viewing formulas, assignments, and
test-outcome sequences as events in (i.e., subsets of) the space of
histories of the game described in Section~\ref{sec:iagames}.
Specifically,
\begin{itemize}
\item we identify a truth asignment $A$ to the $n$ variables in the game
  with the event consisting of all histories where $A$ is the assignment
chosen by nature; 
\item we identify the formula $\varphi$ with the event consisting of
  all histories where $\phi$ is true under the assignment $A$ chosen
  by nature;  thus, $\phi$ is the disjoint union of all 
events $A$ such that $\varphi(A)=T$;
\item we identify a
test-outcome sequence $S=(v_{i_1} \mt b_1,\ldots,v_{i_k} \mt b_k)$ of length $k$
with the
event consisting of all
 histories where at least $k$ tests are performed, and the
  outcomes of the first $k$ are described by $S$.
\end{itemize}
\commentout{
    Finally, let $v_i=b$ (for $1\leq i\leq n$ and $b\in \{\t,\f\}$) denote the
    event that $A(v_i)=b$ for the assignment $A$ chosen by nature. 
    The event $v_i$ (for the formula) then is equal to the event $v_i=\t$ (for the variable),
    and more complex formulae can be decomposed in terms of their satisfying assignments,
    whose probabilities in turn factor into the probabilities of the individual variables taking
    the appropriate values as long as we are using a product distribution: for
    instance, 
$$\begin{array}{lll}
      & & \Pr(v_1 \vee v_2) \\
    &=& \Pr(v_1=\t)\Pr(v_2=\t) + \Pr(v_1=\t)\Pr(v_2=\f) + \Pr(v_1=\f)\Pr(v_2=\t) \\
    &=& 1-\Pr(\neg (v_1 \vee v_2)) = 1-\Pr(v_1=\f)\Pr(v_2=\f).
          \end{array}$$
    Since the assignments Nature can choose partition the space of events, the same decomposition
    works when conditioning on some event.
}

Observe that with the ``good'' payoff being $+1$ and the ``bad'' payoff
being $-16$,
the expected payoff from guessing that the formula 
is true after observing $S$ is $\Pr(\phi\mid S)\cdot 1 - \Pr(\neg\phi\mid S)\cdot 16$,
so it is greater than 0 if and only if $\Pr(\phi\mid S)>16/17$.

Henceforth, for brevity, let $A_{bb'}$ ($b,b' \in \{\t,\f\}$) refer to the assignment $\{v_1\mapsto b, v_2\mapsto b'\}$.
By assumption, all test outcomes are independent
conditional on a fixed assignment. 
Suppose first the player tests the same variable twice, say $v_1$. 
Then, for the ``ideal'' test outcome sequence $S=(v_1\mt \t, v_1\mt \t)$, 
the conditional probability of $S$ given that nature picked $A$ is
$(3/4)\cdot(3/4)$ if $A(v_1) = \t$, and $(1/4)\cdot
(1/4)$ otherwise. 
It follows that 
$$\begin{array}{lll}
   & & \Pr(v_1 \vee v_2 \mid S ) \\
 &=& \Pr(A_{\t\t} \mid S) + \Pr(A_{\t\f} \mid S) + \Pr(A_{\f\t} \mid S) \\
 &=& \frac{ \Pr( S \mid A_{\t\t} ) \Pr(A_{\t\t}) + \ldots + \Pr( S\mid A_{\f\t} )\Pr(A_{\f\t}) }{ \Pr( S ) } \\
 &=& \frac{ \Pr( S \mid A_{\t\t} ) \Pr(A_{\t\t}) + \ldots + \Pr( S\mid A_{\f\t} )\Pr(A_{\f\t}) }{\sum_{A} \Pr(S \mid A) \Pr(A)  } \\ 
 &=& \frac{ ( (3/4)\cdot (3/4) + (3/4)\cdot (3/4) + (1/4)\cdot (1/4) ) \cdot (1/4) }{ ( (3/4)\cdot (3/4) + (3/4)\cdot (3/4) + (1/4)\cdot (1/4) + (1/4)\cdot (1/4)) \cdot (1/4) } \\
 &=& \frac{ ( 19/16 )\cdot (1/4) }{ (20/16) \cdot (1/4) } \\
  &=& 19/20 > 16/17.
 \end{array}$$
Thus, the agent will guess true after observing $S$, and get a
positive expected payoff (since $S$ will be observed with positive
probability) as a consequence of testing $v_1$ twice.
Symmetrically, testing $v_2$ twice gives a positive expected payoff.

On the other hand, suppose the player tests two different
variables. The best case would 
 be to get $S=(v_t\mt \t, v_2\mt
\t)$. As before, 
the probability of $S$ conditioned on some assignment is the product
of the probabilities for each of its entries being observed; 
for instance, $\Pr(S\mid A_{\t\f}) = (3/4)\cdot (1/4)$. So we get
$$\begin{array}{lll}
   & & \Pr(v_1 \vee v_2 \mid S ) \\
 &=& \Pr(A_{\t\t} \mid S) + \Pr(A_{\t\f} \mid S) + \Pr(A_{\f\t} \mid S) \\
 &=& \frac{ \Pr( S \mid A_{\t\t} ) \Pr(A_{\t\t}) + \ldots + \Pr( S\mid A_{\f\t} )\Pr(A_{\f\t}) }{ \Pr( S ) } \\
 &=& \frac{ \Pr( S \mid A_{\t\t} ) \Pr(A_{\t\t}) + \ldots + \Pr( S\mid A_{\f\t} )\Pr(A_{\f\t}) }{\sum_{A} \Pr(S \mid A) \Pr(A)  } \\ 
 &=& \frac{ ( (3/4)\cdot (3/4) + (3/4)\cdot (1/4) + (1/4)\cdot (3/4) ) \cdot (1/4) }{ ( (3/4)\cdot (3/4) + (3/4)\cdot (1/4) + (1/4)\cdot (3/4) + (1/4)\cdot (1/4)) \cdot (1/4) } \\
 &=& \frac{ ( 15/16 )\cdot (1/4) }{ (16/16) \cdot (1/4) } \\
  &=& 15/16 < 16/17.
  \end{array}$$
An analogous calculation shows that if either of the tests comes out false, the conditional
probability is even lower.
Thus, after testing different variables, the agent will not make a
guess, no matter what the outcome, and so has an expected payoff of 0.

So, indeed, measuring the same variable twice is strictly better than
measuring each of them once.

\section{Quantifying rational inattention}
\label{sec:inattentionlp}

Our goal is to show that a large proportion of Boolean formulae exhibit RI.
To this end, we would like a method to establish that a particular formula
exhibits RI that is sufficiently efficient that we can run it on all formulae
of a given size, or at least a statistically significant sample.
Throughout this section, we focus on some arbitrary but fixed
formula $\phi$ in $n$ variables $v_1$, $\ldots$, $v_n$.
Proposition \ref{RIviatestseqs} gives a sufficient criterion for $\phi$
to exhibit RI
in terms of the structure of the optimal sequences of test outcomes of
each length. 
To make use of this criterion, we introduce some machinery to
reason about optimal sequences of test outcomes. 
The key definition turns out to be that of the
\emph{characteristic fraction} of $S$
for $\phi$,
denoted $\cf(\phi,s)$,
which is a quantity that is inversely ordered to
$\Pr(\phi\mid S)$ (Lemma~\ref{lem:inverse})  
(so the probability is maximised iff the characteristic fraction is
minimised and vice versa), 

while exhibiting several convenient properties that enable the
subsequent analysis. 
Let $o_i$ represent the odds of making a correct observation of
$v_i$, namely, the probability of observing $v_i \mt b$ conditional on
$v_i$ actually being $b$ divided by the probability of observing
$v_i \mt b$ conditional on $v_i$ not being $b$.  If we assume that $o_i
= o_j$ for all variables $i$ and $j$, and let $o$ represent this
expression, then 
$\cf(\phi,S)$ 
is the quotient of two polynomials, and has the the form
$$ \frac{ c_1 o^{ d_1 |S| } + \ldots + c_{2^n} o^{ d_{2^n} |S| } }{
    e_1 o^{ f_1 |S| } + \ldots + e_{2^n} o^{ f_{2^n} |S| } } , $$ 
where $c_j$, $d_j$, $e_j$, and $f_j$ are terms that depend on the
truth assignment $A_j$, so we have one term for each of the $2^n$
truth assignments, and $0 \le d_j, f_j \le 1$.  
\commentout{
Our analysis is based on the (unproven and not even rigorously stated, but morally important) circumstance that as $|S|$ gets large,
only leading terms (that is, those summands where the multiplier $d_i$ before $|S|$ is greatest) ``matter'' in this fraction. 
If $S$ is optimal, then the leading term in the denominator is always
$c o^{|S|}$ (\ref{lem:noncontra}). 
}
For a test-outcome sequence $S$ that is optimal for $\phi$, we can show that
$f_j = 1$ for some $j$.  Thus, the most significant term in the
denominator (i.e., the one that is largest, for $|S|$ sufficiently
large) has the form $co^{|S|}$.  We call the factor $d_i$ before $|S|$ 
in the exponent of the leading term of the numerator the
\emph{max-power} (Definition~\ref{def:maxpower}) of the characteristic
function. 
We can show that the max-power is actually independent of $S$ (if $S$
is optimal for $\phi$).  Since we are interested in the test-outcome
sequence $S$ for which $\cf(\phi,S)$ is minimal (which is the
test-outcome sequence for which $\Pr(\phi|S)$ is maximal), for each
$k$, we want to find that $S$ of length $k$ whose max-power is minimal.
As we show, we can find the sequence $S$ whose max-power is minimal by
solving 
a linear program (Definition~\ref{def:conflictlp}). 

\commentout{
Not only does each sequence $S$ translate
to an $A$-trace, but we can conversely also find (\ref{lem:approx}), for any length $k$ and any vector $v$ that satisfies some sanity checks,
a sequence $S_k(v)$ whose $A$-trace is approximately $v$ (arbitrarily close to, as $k$ gets large).
Using this, we can formally state the necessary variant of the earlier statement that only leading terms matter when $|S|$ is large:
if the $A$-trace of test-outcome sequence $S$ is too far removed from
any solution point of the LP,
then so must be its max-power as a consequence of LP continuity
(\ref{lem:objsep}), and hence (\ref{lem:seqtolp}) $|S|$ is either
small
or $S$ 
is not optimal, as approximations $S_k(v)$ to any solution point $v$
have a higher-order leading term and so in fact can be shown 
to give higher conditional probability of $\phi$'s truth or falsity.

So as optimal test sequences $S$ get longer, the set of their
$A$-traces converges to the set of solutions to the LP described above
(in the sense 
that there is a negligible function $\delta$ bounding their distance
from this set depending on the length $|S|$). 
It turns out that this lifts (\ref{thm:lptorieasy}) the criterion of
Proposition \ref{RIviatestseqs} to a condition on the LP solution set:
when a fixed entry
 is zero in all LP solution points, then $\delta$ bounds above the
 number of times the variable corresponding to it can be measured 
in an optimal test-outcome sequence (and so the variable is necessarily ignored by any such sequence), and when another entry is $C>0$, the
corresponding variable may not be ignored. Therefore, the existence of such entries in all solution points to the LP is sufficient to
conclude that all optimal test-outcome sequences for $\phi$ satisfy
the precondition of \ref{RIviatestseqs}, and hence $\phi$ exhibits
rational inattention. 
}
\subsection{Preliminaries}

\commentout{
A naive calculation of the probability that a formula is true given
a sequence of test outcomes, as we showed in Appendix
\ref{app:twovaror},  
involves conditioning on every single test outcome in order, 
calculating an updated conditional probability that the tested
variable is true each time. In this section, we present a
streamlined version of this process based on the notion of \emph{odds},
defined for an event $A$ as the quotient $\Pr(A)/\Pr(\neg A)$.
The odds view of testing captures several properties of tests,
such as their conditional independence given a fixed truth assignment,
in a natural way. If the statistics of a sequence
of outcomes is already known, it enables us to calculate
conditional probabilities in time $O(\log k)$ by standard techniques
of fast exponentiation.
Most importantly, it enables us to reason about the limiting case
as $k\rightarrow\infty$ analytically, forming a key component
of our estimation of the commonness of RI.
}

In this subsection, we present some preliminary results that will
prove useful in quantifying RI.  We start with a lemma that gives a
straightforward way of calculating 
$\Pr(A \mid S)$
for
an assignment $A$ and a test-outcome sequence $S$. 
The lemma also shows that, as the notation suggests, the probability is
independent of the strategy $\sigma$.  
In the proof of the lemma, we use the following abbreviations:
\def\o#1#2{r_{D,\m}({#1},{#2})}
\begin{itemize}
  \item $o_i = \frac{1/2 + \alpha_i}{1/2 - \alpha_i}$. 
We  can think of $o_i$ as the odds of making a correct observation of
$v_i$; namely, the probability of observing $v_i \mt b$ conditional on
$v_i$ actually being $b$ divided by the probability of observing
$v_i \mt b$ condition on $v_i$ not being $b$.
\item $\nSAp = |\{j: S[j] = (v_i \mt A(v_i))\}|$.  Thus, $\nSAp$ is the
number of times that $v_i$ is observed to have the correct value
  according to truth assignment $A$ in test-outcome sequence $S$.
  \item $\o{A}{S} = \Pr(A) \prod_{\{i: v_i \textrm{ is in the domain of $A$}\}} o_i^{\nSAp}$
\end{itemize}
\commentout{
By Bayes' rule, for all truth assignments $A$ and sequences
$S=[v_{i_1}\mt b_1, \ldots, v_{i_k}\mt b_k]$ of test outcomes, we have  
\begin{equation} 
\begin{array}{*3{>{\displaystyle}l}}
  \PrS{\sigma}(A \mid S) &=& \frac{ \PrS{\sigma}(S \mid A)\Pr(A) }{ \PrS{\sigma}(S) } \\
&= &\frac{ \PrS{\sigma}(S \mid A)\Pr(A) }{ \sum_{\text{truth assignments }A'}
  \PrS{\sigma}(S\mid A')\Pr(A') }. \label{eqn:prbayes}
\end{array}
    \end{equation}
For any variable $v_i$, let $\tau_j(v_i)$ denote the event that the player tested the variable $v_i$ at the $j$th timestep. Also, for $b\in\{T,F\}$, let $\rho_j(b)$ denote the event that Nature responded with the respective result to the player query at the $j$th timestep.
If $S = [v_{i_1}\mt b_1, \ldots, v_{i_k}\mt b_k]$,
for $1 \le j \le k$, define $S[j] = (v_{i_j} \mt b_j)$, the $j$th
component of the sequence $S$. 

For a formal pair $(v_{i_j}\mt b_j)$ of a variable and truth value, we
will let $\tau_j(v_{i}\mt b)=\tau_j(v_i)$ and $\rho_j(v_{i}\mt
b)=\rho_j(b)$. This is so we can compactly write $\tau_j(S[j])$ for
the event that the player's test at time $j$ was the same as the one
performed in the test-outcome sequence, and $\rho_j(S[j])$ for the
event that the result was the same. Thus, the event denoted by the
sequence $S$ becomes shorthand for the intersection of events
$\tau_1(S[1])\wedge \rho_1(S[1]) \wedge \ldots \wedge
\tau_{|S|}(S[|S|]) \wedge \rho_{|S|}(S[|S|])$.  

Observe that because Nature's response to a test only depends on the
variable tested and the assignment chosen, the conditional probability
$\PrS{\sigma}(\rho_j(b)\mid A,\tau_j(v_i))$ takes the same value

(namely $1/2+\alpha_i$ whenever $A(v_i)=b$ and $1/2-\alpha_i$
otherwise) whenever it is defined for any given $v_i$, $A$ and $b$,
regardless of $j$ or $\sigma$. 
We will therefore also write this probability as $\Pr( v_i\mt b \mid
A)$, with $v_i\mt b$ understood to denote a sort of ``conditional
event'' $\rho_j(b)\mid \tau_j(v_i)$. 

Equation (\ref{eqn:prbayes}) leaves open the possibility that the
conditional probability that Nature chose some assignment given a
record of tests and outcomes -- and hence what the player learns about
the truth of $\phi$ -- depends on the player strategy
$\sigma$. Intuitively, this is nonsense: since the strategy $\sigma$
can only determine the next player test in terms of the tests and

results observed so far, it should not be able to produce any
information about $\phi$ that the player doesn't know anyway. The
following Lemma makes this intuition precise. 

\begin{lemma} 
The probability that nature chose an assignment $A$ conditional on a
test-outcome sequence $S$ was observed is
\begin{eqnarray} \Pr(A\mid S) &=& 
 \frac{ \prod_{i=1}^{|S|}\Pr(S[i]\mid A)\Pr(A) }{ \sum_{\text{truth assignments }A'}
  \prod_{i=1}^{|S|}\Pr(S[i]\mid A')\Pr(A') }. \label{eqn:prbayes2}
\end{eqnarray}
In particular, it does not depend on the strategy $\sigma$.
\end{lemma}
\begin{proof}
\def\Prs{\PrS{\sigma}}
We will show this statement by induction on the length of test-outcome sequences $S$. The base case, with $|S|=0$, is clearly true.
For the inductive step, let $|S|=k$ and suppose that
$$ \Pr(A\mid S') =  \frac{ \prod_{i=1}^{|S'|}\Pr(S'[i]\mid A)\Pr(A) }{ \sum_{\text{truth assignments }A'}
   \prod_{i=1}^{|S'|}\Pr(S'[i]\mid A')\Pr(A') }$$
for all $S'$ with $|S'|<k$. 

Now,
\begin{eqnarray}
 & & \Prs(A \mid S) \nonumber\\
 &=& \Prs(A \mid \tau_1(S[1]) \wedge \rho_1(S[1]) \wedge \ldots \wedge \tau_k(S[k])\wedge \rho_k(S[k]) ) \nonumber\\
 &\overset{\text{Bayes}}{=}& \frac{ \Prs(\tau_k(S[k])\wedge \rho_k(S[k]) \mid A, \tau_1(S[1]) \wedge \ldots \wedge \rho_{k-1}(S[k-1]) ) \Prs(A \mid \tau_1(S[1]) \wedge \ldots \wedge \rho_{k-1}(S[k-1])) }{ \Prs(\tau_k(S[k])\wedge \rho_k(S[k]) \mid \tau_1(S[1]) \wedge \ldots \wedge \rho_{k-1}(S[k-1])) } \nonumber\\
 &\overset{\text{IH}}{=}& \frac{ \Prs(\tau_k(S[k])\wedge \rho_k(S[k]) \mid A, S_{1\ldots k-1} ) }{ \Prs(\tau_k(S[k])\wedge \rho_k(S[k]) \mid S_{1\ldots k-1}) } \cdot \frac{ \prod_{i=1}^{k-1}\Pr(S[i]\mid A)\Pr(A) }{ \sum_{A'} \prod_{i=1}^{k-1}\Pr(S[i]\mid A')\Pr(A') }.  \label{eqn:onestep}
\end{eqnarray}

The numerator of the left-hand fraction satisfies
\begin{eqnarray}
 & & \Prs(\tau_k(S[k])\wedge \rho_k(S[k]) \mid A, S_{1\ldots k-1} ) \nonumber\\
 &=& \Prs(\tau_k(S[k])\wedge \rho_k(S[k]) \mid A, S_{1\ldots k-1} ) / \Prs(\tau_k(S[k]) \mid A, S_{1\ldots k-1} ) \cdot \Prs(\tau_k(S[k]) \mid A, S_{1\ldots k-1} ) \nonumber\\
 &=& \Prs( \rho_k(S[k]) \mid A, S_{1\ldots k-1}, \tau_k(S[k]) ) \cdot \Prs(\tau_k(S[k]) \mid A, S_{1\ldots k-1} ) \nonumber\\
  &=& \Pr( S[k] \mid A ) \cdot \Prs(\tau_k(S[k]) \mid A, S_{1\ldots k-1} ) \nonumber\\
 & & \text{(as $S[k]$ only depends on $A$)} \nonumber\\
 &=& \Pr( S[k] \mid A ) \cdot \Prs(\tau_k(S[k]) \mid S_{1\ldots k-1} ) \label{eqn:expnumerator} \\
 & & \text{(as strategies dictate the next test solely based on past test outcomes).} \nonumber
\end{eqnarray}
Also, the denominator satisfies
$$\begin{array}{lll}
 & &   \Prs(\tau_k(S[k])\wedge \rho_k(S[k]) \mid S_{1\ldots k-1}) \\
 &=& \sum_{A'} \Prs(\tau_k(S[k])\wedge \rho_k(S[k]) \mid A', S_{1\ldots k-1}) \Pr(A' \mid S_{1\ldots k-1}) \\
 &\overset{\text{IH}}{=}& \sum_{A'} \Prs(\tau_k(S[k])\wedge \rho_k(S[k]) \mid A', S_{1\ldots k-1}) 
\cdot \frac{ \prod_{i=1}^{k-1}\Pr(S[i]\mid A')\Pr(A') }{ \sum_{A''} \prod_{i=1}^{k-1}\Pr(S[i]\mid A'')\Pr(A'') } \\
 &\overset{\text{(\ref{eqn:expnumerator})}}{=}& \sum_{A'} \Pr( S[k] \mid A' ) \cdot \Prs(\tau_k(S[k]) \mid S_{1\ldots k-1} )  \cdot \frac{ \prod_{i=1}^{k-1}\Pr(S[i]\mid A')\Pr(A') }{ \sum_{A''} \prod_{i=1}^{k-1}\Pr(S[i]\mid A'')\Pr(A'') }.
\end{array}$$
Plugging (\ref{eqn:expnumerator}) and this equality into (\ref{eqn:onestep}), we note that $\sum_{A''} \prod_{i=1}^{k-1}\Pr(S[i]\mid A'')\Pr(A'')$ and $\Prs(\tau_k(S[k]) \mid S_{1\ldots k-1} )$ cancel, removing all dependencies on $\sigma$ and leaving
$$ \Prs(A \mid S) = \frac{ \Pr( S[k] \mid A ) }{ \sum_{A'} \Pr( S[k] \mid A' ) \prod_{i=1}^{k-1}\Pr(S[i]\mid A')\Pr(A') } \cdot \prod_{i=1}^{k-1}\Pr(S[i]\mid A)\Pr(A) $$
as required.
\end{proof}
}
\commentout{
    $S$ denotes the event that the test-outcome sequence $S$ was observed,
    that is, that the player acting according to
    $\sigma$ chose to test variables in the order $v_{i_1}$, $\ldots$,
    $v_{i_k}$, and  
    nature responded with the truth values $b_1$, $\ldots$, $b_k$ in order.
    This is an intersection of two events, and so we can expand
    $$ \PrS{\sigma}(S\mid A) = \Pr(\text{outcomes as in $S$}\mid A, \text{vars tested as in $S$}) \PrS{\sigma}(\text{vars tested as in $S$}\mid A), $$
    where the first probability on the right-hand side does not depend on $\sigma$ as nature chooses outcomes independently
    conditioned on the selected truth assignment $A$.

    If we now cancel $\PrS{\sigma}(\text{vars tested as in $S$}\mid A)$ in
    the fraction in Equation (\ref{eqn:prbayes}),
    the remaining expression no longer depends on $\sigma$ at all, thereby justifying the notation $\Pr(A\mid S)$.
    \begin{eqnarray} \Pr(A\mid S) &=& 
     \frac{ \Pr(\text{outcomes as in $S$}\mid A, \text{vars tested as in $S$})\Pr(A) }{ \sum_{\text{truth assignments }A'}
      \Pr(\text{outcomes as in $S$}\mid A', \text{vars tested as in $S$})\Pr(A') }. \label{eqn:prbayes2}
    \end{eqnarray}
}

\commentout{
In fact, we can note that the factors due to each test outcome $v_i\mt b_i$
only can take one of two different forms, depending on whether the assignment
that is being conditioned on has $v_i\mapsto b_i$ or $v_i\mapsto \neg b_i$.
This allows us to simplify the expression further using the following
two definitions.
}

\def\o#1#2{r_{D,\m}({#1},{#2})}

\begin{lemma}\label{lem:denormtoprob} For all 
accuracy vectors $\vec{\alpha}$, product distributions $D$, 
assignments $A$, and
  test-outcome sequences $S$, 
$$ \Pr(A \mid S) =  \frac{ \o{A}{S} }{ \sum_{\text{truth assignments
      }A'}  \o{A'}{S} }. $$
  Thus,
$$ \Pr(\phi\mid S) = \sum_{\{A:\; \phi(A)=\t\}} \Pr(A \mid S) = \frac{
    \sum_{\{A:\; \phi(A)=\t\}} \o{A}{S} }{ \sum_{A'} \o{A'}{S} }. $$
These probabilities do not depend on the strategy $\sigma$.
\end{lemma}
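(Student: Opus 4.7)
The plan is to apply Bayes' rule and exploit the game's structure so that the strategy-dependent factors in $\PrS{\sigma}(S\mid A)$ cancel in the normalisation, leaving a purely $A$-dependent expression that can be rearranged into odds form.

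First I would write $\PrS{\sigma}(A\mid S)$ via Bayes' rule as $\PrS{\sigma}(S\mid A)\Pr(A)$ divided by the sum of the same quantities over all assignments $A'$. The key step is to decompose the event $S=(v_{i_1}\mt b_1,\ldots,v_{i_k}\mt b_k)$ via the chain rule into alternating events $\tau_j$ (the strategy picks $v_{i_j}$ at step $j$) and $\rho_j$ (nature responds with $b_j$). Two structural facts of the game drive the argument: (i) a strategy chooses its next test as a function of the past test-outcome sequence alone, so $\PrS{\sigma}(\tau_j \mid A, S_{1\ldots j-1}) = \PrS{\sigma}(\tau_j \mid S_{1\ldots j-1})$ is independent of $A$; and (ii) given $A$ and the chosen variable, nature's response is conditionally independent of everything else and equals $1/2+\alpha_{i_j}$ or $1/2-\alpha_{i_j}$ according to whether $A(v_{i_j})=b_j$. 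Collecting the strategy factors into a single term $C(\sigma,S)$ that depends only on $\sigma$ and the multiset of variables tested in $S$ (not on $A$), this yields a factorisation $\PrS{\sigma}(S\mid A) = C(\sigma,S)\cdot G(A,S)$, with $G$ being $\sigma$-independent.

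Since $C(\sigma,S)$ does not depend on $A$, it cancels from numerator and denominator of the Bayes ratio, immediately eliminating the strategy dependence and justifying the notation $\Pr(A\mid S)$. The remaining factor $G(A,S)$ is a product over $i$ of $(1/2+\alpha_i)^{\nSAp}(1/2-\alpha_i)^{\nSAn}$; since $\nSAp + \nSAn$ is just the total number of tests of $v_i$ in $S$ and is therefore $A$-independent, I can pull out a further $A$-independent factor to rewrite $G(A,S)\Pr(A)$ as an $A$-independent constant times $\o{A}{S}$. This constant likewise cancels in the ratio, producing exactly $\o{A}{S}/\sum_{A'}\o{A'}{S}$ and establishing the first identity. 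The second identity follows because the event $\phi$ is the disjoint union of the singletons $\{A\}$ for $A$ satisfying $\phi$. The main obstacle is justifying the factorisation $\PrS{\sigma}(S\mid A)=C(\sigma,S)\cdot G(A,S)$ cleanly, which rests on carefully stating the two conditional independence properties above; once these are in hand, the rest is purely algebraic bookkeeping.
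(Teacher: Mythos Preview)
Your proposal is correct and follows essentially the same approach as the paper: Bayes' rule, factoring $\PrS{\sigma}(S\mid A)$ into strategy-dependent terms (which are $A$-independent and cancel) times nature's response probabilities, then pulling out the $A$-independent factor $\prod_i (1/2-\alpha_i)^{\text{(total tests of }v_i)}$ to obtain the odds form. The paper phrases the last step as multiplying numerator and denominator by $\prod_j 1/(1/2-\alpha_{i_j})$, but this is exactly your observation that $\nSAp+\nSAn$ is $A$-independent.
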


\begin{proof}
  By Bayes' rule, for all truth assignments $A$ and sequences
$S=[v_{i_1}\mt b_1, \ldots, v_{i_k}\mt b_k]$ of test outcomes, we have  
\begin{equation}
\begin{array}{*3{>{\displaystyle}l}}
  \PrS{\sigma}(A \mid S) &=& \frac{ \PrS{\sigma}(S \mid A)\Pr(A) }{ \PrS{\sigma}(S) } \\
&= &\frac{ \PrS{\sigma}(S \mid A)\Pr(A) }{ \sum_{\text{truth assignments }A'}
  \PrS{\sigma}(S\mid A')\Pr(A') }. \label{eqn:prbayes}
\end{array}
    \end{equation}
Suppose that $S = (v_{i_1}\mt b_1, \ldots, v_{i_k}\mt b_k)$.
We want to compute $\PrS{\sigma}(S \mid A')$ for an arbitrary truth
assignment $A'$.   Recall that a strategy $\sigma$ is a function from
test-outcome sequences to a distribution
over actions. 
We write
$\sigma_S(\textrm{test } v)$ to denote the
probability that $\sigma$ tests $v$ given test-outcome sequence $S$ and
and use $(\,)$ for the empty sequence;
more generally, we denote
\def\test#1#2{\textrm{test}_{#1}(#2)}
by $\test{j}{v}$ the event that the $j$th variable chosen
was $v$. Then,
$$\begin{array}{lll}
    \PrS{\sigma}(S \mid A') = &\sigma_{(\,)}(\test1{v_{i_1}})
\PrS{\sigma}((v_{i_1} \mt b_1) \mid \test1{v_{i_1}}, A') \ldots\\
&\sigma_{ (v_{i_1} \mt b_{1}, \ldots, v_{i_{k-1}} \mt   b_{k-1})}(\test{k}{v_{i_k}})
\PrS{\sigma} ((v_{i_k} \mt b_{k}) \mid
  \test{k}{v_{i_k}}, A' ).
  \end{array}$$ 
Here, we were able to write
$\PrS{\sigma} ( (v_{i_j} \mt b_j) \mid \test{j}{v_{i_j}}, A' )$
without conditioning on the entire test-outcome sequence up to $v_{i_{j-1}}$ because 
by the definition of the information-acquisition game,
all observations are independent of each other conditioned on the
assignment $A'$. 
Observe that the terms $\sigma_{(\,)}(\test1{v_{i_1}})$, $\ldots$,
$\sigma_{ (v_{i_1} \mt b_{1}, \ldots, v_{i_{k-1}} \mt   b_{k-1})}(\test{k}{v_{i_k}})$
are common to \mbox{$\PrS{\sigma}(S \mid A')$} for all truth assignments $A'$,
so we can pull them out of the numerator and denominator in (\ref{eqn:prbayes}) and
cancel them.
Moreover, probabilities of the form
$\PrS{\sigma} ((v_{i_j} \mt b_{j}) \mid \test{j}{v_{i_j}}, A' )$ do not
depend on the strategy $\sigma$, so we can drop it from the subscript of
$\PrS{\sigma}$; the probability also does not depend on the results of
earlier tests (since, by assumption, test outcomes are independent,
conditional on the truth assignment).
Thus,
  it follows that
  $$  \PrS{\sigma}(A \mid S) =
   \frac{ \left[\prod_{j=1}^{k}\Pr(v_{i_j} \mt b_j \textrm{ observed} \mid
     v_{i_j} \textrm{ chosen}, A) \right]\Pr(A) }{ \sum_{\text{truth assignments }A'}
   \left[ \prod_{j=1}^{k}\Pr(v_{i_j} \mt b_j \textrm{ observed} \mid
     v_{i_j} \textrm{ chosen}, A') \right] \Pr(A') }.$$

Next, we multiply both the numerator and the denominator of this fraction by 
$ \prod_{j=1}^k \frac{1}{1/2-\alpha_{i_j}}. $
This amounts to multiplying the $j$th term in each product by
$\frac{1}{1/2-\alpha_{i_j}}$.
Thus, in the numerator, 
if $b_j = A(v_{i_j})$, then the $j$th term in the product
equals $o_{i_j}$; if $b_j =  \neg A(v_{i_j})$, then the $j$th
term in the product is 1.  It easily follows that this expression is
just $\o{A}{S}$.  A similar argument shows that 
the
 denominator is
$\sum_{\text{truth assignments }A'}  \o{A'}{S}$.  This proves the
first and third statements in the lemma.  The second statement is immediate from
the first.
\end{proof}

\commentout{
Then note that all those factors in each product where $A(v_{i_j})=b_j$ are now $$\frac{1/2 + \m_{i_j}}{1/2 - \m_{i_j}} = o_{i_j},$$
and all those factors where $A(v_{i_j})=\neg b_j$ are $$\frac{1/2 - \m_{i_j}}{1/2 -\m_{i_j}} = 1.$$

For every given variable $v_k$, the product of those factors that are
due to tests of that variable (i.e., indices $j$ where $i_j=k$) 
is therefore just $o_{i_j}$ to the power of however many times the tests of that variable agreed with the assignment.
Recoalescing the product to be over the $n$ variables in the formula rather than over the $k$ positions in the sequence of test outcomes $S$, we obtain the statement of the lemma.
\end{proof}
}

The next lemma gives 
an 
intuitive
 property of those
test-outcome sequences $S$ that are \emph{optimal} for $\phi$, $D$, and $\m$.
\begin{lemma} \label{lem:noncontra} If $S$ is
  a test-outcome sequence that is optimal for $\phi$, $D$, and $\m$,
and $\Pr(\phi|S) \ne \Pr(\phi)>0$, then $S$ does not contain
observations both of the form $v_i \mt T$ and of the form $v_i \mt F$
for any $v_i$. 
\end{lemma}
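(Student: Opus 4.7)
My plan is to argue by contradiction. Suppose $S$ is optimal for $\phi$, $D$, $\m$, contains both $(v_i \mt T)$ and $(v_i \mt F)$ for some $v_i$, and satisfies $\Pr(\phi \mid S) \ne \Pr(\phi) > 0$. I aim to exhibit a length-$|S|$ test-outcome sequence whose conditional uncertainty is strictly less than that of $S$, contradicting optimality.

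The first step is to see that the contradictory pair is vacuous. Letting $S_0$ denote $S$ with one matched $v_i$-pair removed, Lemma~\ref{lem:denormtoprob} implies $r_{D,\m}(A, S) = o_i \cdot r_{D,\m}(A, S_0)$ for every assignment $A$ (because exactly one of the two observations matches $A(v_i)$, so $\nSAp$ increases by one regardless of $A$). The common factor $o_i$ cancels in the ratio defining $\Pr(\phi \mid \cdot)$, giving $\Pr(\phi \mid S) = \Pr(\phi \mid S_0)$, and hence $\Pr(\phi \mid S_0) \ne \Pr(\phi)$. Moreover, because any length-$k$ sequence can be padded by a contradictory pair without changing $\Pr(\phi \mid \cdot)$, the optimum $q_k^* := \max_{|S'|=k} |\Pr(\phi \mid S') - 1/2|$ is non-decreasing in $k$, and the chain $q_{|S_0|}^* \ge |\Pr(\phi \mid S_0) - 1/2| = |\Pr(\phi \mid S) - 1/2| = q_{|S|}^* \ge q_{|S_0|}^*$ collapses to equality, so $S_0$ itself is optimal at length $|S_0|$.

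The goal now is to show that some single-test extension of $S_0$ strictly changes $\Pr(\phi \mid \cdot)$. Once this is in hand, convexity of $|x - 1/2|$ combined with the law of total probability gives the desired length-$|S|$ competitor: $\Pr(\phi \mid S_0)$ is a strict convex combination of $\Pr(\phi \mid S_0, v_j \mt T)$ and $\Pr(\phi \mid S_0, v_j \mt F)$, so if these differ for some $v_j$ then the outcome $b^*$ on the same side of $1/2$ as $\Pr(\phi \mid S_0)$ (either side when $\Pr(\phi \mid S_0) = 1/2$) strictly increases $|\Pr(\phi \mid \cdot) - 1/2|$ after one test, and a second test with its outcome chosen by convexity to preserve the increase produces the required length-$|S|$ contradiction.

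The main obstacle is ruling out the ``saturation'' case, where $\Pr(\phi \mid S_0, v_j \mt b) = \Pr(\phi \mid S_0)$ for every $v_j$ with $\alpha_j > 0$ and every $b$ (equivalently, every relevant variable is conditionally $\phi$-independent given $S_0$). My plan for this is to show that saturation is incompatible with $S_0$'s optimality once the hypothesis $\Pr(\phi \mid S_0) \ne \Pr(\phi)$ is in force. A saturated posterior has a very constrained shape, with each variable's posterior marginal forced by the independence structure; using this, one exhibits an alternative length-$|S_0|$ sequence --- for instance $(v_k \mt b)^{|S_0|}$ for a variable $v_k$ of positive $\phi$-relevance under $D$ --- whose induced posterior breaks the saturation pattern and achieves $|\Pr(\phi \mid \cdot) - 1/2| > |\Pr(\phi \mid S_0) - 1/2|$, contradicting the optimality of $S_0$. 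The hypothesis is essential here because in its absence (e.g., for XOR-type formulae under a symmetric prior) saturation is genuinely compatible with optimality, and the cancellation of the pair is no longer informative.
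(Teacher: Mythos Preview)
Your steps 1--4 are sound: the cancellation of a contradictory pair, the monotonicity of $q_k^*$, the optimality of $S_0$, and the martingale/convexity argument that a single informative test followed by one more test yields a strictly better length-$|S|$ competitor all go through.

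The gap is step 5. You correctly identify saturation as the main obstacle, but you do not resolve it --- you only sketch a plan. The claim that ``a saturated posterior has a very constrained shape'' and that $(v_k \mt b)^{|S_0|}$ will beat $S_0$ is not proved; you neither specify which $v_k,b$ to use nor show the strict inequality. In fact there is no obvious reason why a single-variable sequence should dominate an arbitrary saturated optimal $S_0$: saturation only says $\phi \perp v_j$ under $\mu_{S_0}$ for each $j$, and for general (non-product) $D$ this places no usable lower bound on $|\Pr(\phi\mid S_0)-1/2|$ relative to $|\Pr(\phi\mid (v_k\mt b)^{|S_0|})-1/2|$.

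The paper avoids the saturation problem entirely by going \emph{backward} rather than forward. Starting from $S_0$, it greedily deletes tests whose removal does not decrease $\Pr(\phi\mid\cdot)$; the hypothesis $\Pr(\phi\mid S_0)\ne\Pr(\phi)$ guarantees this process halts at a nonempty $S_k$, whose last remaining test $(v\mt b)$ is one whose removal \emph{would} decrease the posterior. A short algebraic computation then shows that appending further copies of $(v\mt b)$ strictly increases the posterior, yielding the length-$|S|$ competitor directly. The key point you are missing is that the informative test is found \emph{inside} a pruned version of $S_0$, not among all possible extensions of $S_0$.

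Your approach can be repaired in a similar spirit: iteratively remove from $S_0$ any test whose removal leaves $\Pr(\phi\mid\cdot)$ \emph{unchanged}. If you reach the empty sequence, $\Pr(\phi\mid S_0)=\Pr(\phi)$, a contradiction. Otherwise you stop at a nonempty $T_m$ with $\Pr(\phi\mid T_m)=\Pr(\phi\mid S_0)$; since removing any test of $v_j$ from $T_m$ changes the posterior, saturation for $v_j$ fails at $T_m$ (the two are equivalent when $\alpha_j>0$), so $T_m$ is not saturated and your step-4 argument applies there, after which $|S|-|T_m|-1$ further convexity-chosen tests pad to length $|S|$. But this fix is essentially the paper's backward trick, not the forward construction you proposed.
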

\begin{proof}
  Suppose that $S$ is optimal for $\phi$, $D$, and $\m$, 
    $\Pr(\phi \mid S) \ne \Pr(\phi)$, 
  there are $n_1>0$ instance of $v_i\mt T$ in $S$, 
and $n_2>0$ instances of $v_i\mt F$ in $S$. Without loss of generality,
suppose that  $n_1>n_2$.   
Let $S_0$ be the sequence that results from $S$ by removing the $n_2$
occurrences of $v_i \mt F$ and the last $n_2$ occurrences of $v_i \mt
T$.  Thus, $|S_0| = |S| - 2n_2 < |S|$.
It is easy to see that, for each truth assignment $A$, we have
$\nSAp = n_{S_0,A,i}^+ + n_2$.  It thus follows 
from Lemma~\ref{lem:denormtoprob} that $\Pr(\phi\mid S) = \Pr(\phi \mid S_0)$.
We can  similarly remove all other ``contradictory'' observations to
get a sequence $S_0$ that does not contradict itself such that $|S_0|
< |S|$ and $\Pr(\phi\mid S) = \Pr(\phi \mid S_0)$.

Suppose without loss of generality that $\Pr(\phi) - 1/2 \ge 0$.
Since it cannot be the case that for every test-outcome sequence $S_0$
of length $|S|$ we have $\Pr(\phi \mid S_0) - 1/2 < \Pr(\phi) -
1/2$, and $S$ is optimal for $\phi$, $D$, and $\m$, we must have 
\begin{equation}\label{eq:0}
  \Pr(\phi \mid S) - 1/2 \ge |\Pr(\phi) - 1/2|.
  \end{equation}
We want to show that we can add tests to $S_0$ to get a sequence $S^*$
with $|S^*| = |S|$ such that $\Pr(\phi\mid S^*) > \Pr(\phi\mid S_0) =
\Pr(\phi \mid S)$.  This will show that $S$ is not optimal for $\phi$,
$D$, and $\m$, giving us the desired contradiction.

Suppose that $S_0 = (v_{i_1} \mt b_{1}, \ldots, v_{k} \mt
b_{k})$. 
Define test-outcome sequences $S_1, \ldots, S_{k}$ inductively by taking 
$S_j$ to be $S_{j-1}$ with $v_{i_j} \mt b_{j}$ removed if
$\Pr(\phi \mid S_{j-1}) \le \Pr(\phi \mid S_{j-1}\setminus (v_{i_j}
  \mt b_{j}))$ and otherwise taking $S_j = S_{j-1}$.  It is immediate
  from the construction that $\Pr(\phi \mid S_k) \ge \Pr(\phi \mid
  S_0) = \Pr(\phi \mid S)$ and $|S_k| \le |S_0| < |S|$.  It cannot be
  the case that $|S_k| = 0$, for then $\Pr(\phi) \ge \Pr(\phi \mid
  S)$.  Since $\Pr(\phi) \ne \Pr(\phi \mid S)$ by assumption, we would
 have $\Pr(\phi) > \Pr(\phi \mid S)$, contradicting (\ref{eq:0}).

  Suppose that $v_i \mt b$ is the last test in $S_k$.
Let $S_k^- = S_k \setminus (v_i \mt b)$, so that $S_k = S_k^- \cdot
(v_i \mt b)$.  
  By construction,
$\Pr(\phi \mid S_k) > \Pr(\phi \mid S_k^-)$.  That
  is, observing $v \mt b$ increased the conditional probability of
  $\phi$.  We now show that observing $v \mt b$ more often 
  increases the conditional probability of $\phi$ further; that is, for all $m$,
  $\Pr(\phi  \mid (S_k \cdot (v_i \mt b)^m) > \Pr(\phi \mid S_k)$.  We can
  thus take $S^* = (S_k \cdot (v_i \mt b)^{|S| - |S_k|})$.  

  It follows from Lemma~\ref{lem:denormtoprob} that
  $$\begin{array}{*2{>{\displaystyle}l}}
  \Pr(\phi \mid S_k) = \sum_{\{A:\; \phi(A) = T\}} \Pr(A \mid S_k) =
  \frac{\sum_{\{A:\; \phi(A) = T\}} \o{A}{S_k}}{ \sum_{\text{truth assignments
      }A'}  \o{A'}{S_k} }\\
 \mbox{and }    \Pr(\phi \mid S_k^-) = \sum_{\{A:\; \phi(A) = T\}} \Pr(A \mid S_k^-)
  \frac{\sum_{\{A:\; \phi(A) = T\}} \o{A}{S_k^-}}{ \sum_{\text{truth assignments
      }A'}  \o{A'}{S_k^-} }.
  \end{array}$$
Note that for any truth assignment $A'$, $\o{A'}{S_k^-} = \o{A'}{S_k}$
if $A(v_i) \ne b$, and   %
 $\o{A'}{S_k^-} = o_i \o{A'}{S_k}$ if
$A(v_i) = b$.  Thus, there exist $x_1, x_2, y_1, y_2$ such that
$\Pr(\phi\mid S_k^-) = \frac{x_1 + x_2}{y_1 + y_2}$ and
  $\Pr(\phi\mid S_k) = \frac{o_ix_1 + x_2}{o_iy_1 + y_2}$.
Indeed, we can take 
\begin{align*} 
x_1 &= \sum_{\mathclap{\{A: \phi(A) = T, A(v_i) = b\}}} \, \o{S_k}{A}, 
&x_2 &= \sum_{\mathclap{\{A: \phi(A) = T, A(v_i) \ne b\}}} \, \o{S_k}{A}, \\
y_1 &= \sum_{\mathclap{\{A: A(v_i) = b\}}} \, \o{S_k}{A},\text{ and} 
&y_2 &= \sum_{\mathclap{\{A: A(v_i) \ne b\}}} \, \o{S_k}{A}.
\end{align*}

    Since $\Pr(\phi \mid S_k) > \Pr(\phi \mid S_k^-)$, we must have
\begin{equation}\label{eq:1}
  \frac{o_ix_1 + x_2}{o_iy_1 + y_2} > \frac{x_1 + x_2}{y_1 + y_2}.
  \end{equation}
    Since $x_1, x_2, y_1, y_2 \ge 0$, crossmultiplying shows that 
   (\ref{eq:1}) holds iff 
$$x_2 y_1 + o_i x_1 y_2 > x_1 y_2 + o_ix_2 y_1.$$
Similar manipulations show that
$$\begin{array}{*2{>{\displaystyle}l}}
  &\Pr(\phi \mid S_k\cdot (v_i \mt b) > \Pr(\phi \mid S_k)\\
    \mbox{iff } &\frac{o_i^2x_1 + x_2}{o_i^2y_1 + y_2} > \frac{o_ix_1 +
    x_2}{o_iy_1 + y_2}\\ 
\mbox{iff } &x_2 y_1 + o_i x_1 y_2 > x_1 y_2 + o_ix_2 y_1.
\end{array}
$$
Thus, $\Pr(\phi \mid S_k\cdot (v_i \mt b)) > \Pr(\phi \mid S_k)$.  A
straightforward induction shows that
$\Pr(\phi \mid S_k\cdot (v_i \mt b)^h) > \Pr(\phi \mid S_k)$ for all
$h$, so $\Pr(\phi \mid S^*) > \Pr(\phi \mid S_k) = \Pr(\phi \mid S)$,
as desired.
\end{proof}

\subsection{Characteristic fractions and the limit of traces}

\def\cf{\mathrm{cf}}
\begin{definition} \label{def:cf} 
The \emph{characteristic fraction} of a test-outcome sequence $S$ 
for $\phi$ is
$$\cf(\phi,S) = \frac{ \sum_{\{A:\; \varphi(A)=\f\}} \o{A}{S}  }{
  \sum_{\{A:\; \varphi(A)=\t\}} \o{A}{S} }. $$ 
\hfill \wbox
\end{definition}

The importance of this quantity is due to the following:

\begin{lemma}\label{lem:inverse}
$\Pr(\phi\mid S)> \Pr(\phi\mid S')\text{ iff }\mathrm{cf}(\phi,S) <
\mathrm{cf}(\phi,S').$
\end{lemma}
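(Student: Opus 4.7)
The plan is to express $\Pr(\phi\mid S)$ directly as a monotonically decreasing function of $\cf(\phi,S)$, which will immediately imply the biconditional.

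First, I would apply Lemma~\ref{lem:denormtoprob} to rewrite
\[ \Pr(\phi\mid S) = \frac{\sum_{\{A:\phi(A)=\t\}} \o{A}{S}}{\sum_{A'} \o{A'}{S}}. \]
Since every truth assignment either satisfies $\phi$ or falsifies it, I would split the denominator into two sums, one over satisfying and one over falsifying assignments. Letting $N(S) = \sum_{\{A:\phi(A)=\t\}} \o{A}{S}$ and $M(S) = \sum_{\{A:\phi(A)=\f\}} \o{A}{S}$, this gives
\[ \Pr(\phi\mid S) = \frac{N(S)}{N(S)+M(S)}. \]
Observing that both $N(S)$ and $M(S)$ are strictly positive (each $\o{A}{S}$ is a product of positive quantities, assuming $D$ is open-minded and $\phi$ is nonconstant), I can divide numerator and denominator by $N(S)$ to obtain
\[ \Pr(\phi\mid S) = \frac{1}{1 + \cf(\phi,S)}. \]

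The map $x \mapsto 1/(1+x)$ is strictly decreasing on $x > -1$, so $\Pr(\phi\mid S) > \Pr(\phi\mid S')$ holds iff $1 + \cf(\phi,S) < 1 + \cf(\phi,S')$, which is equivalent to $\cf(\phi,S) < \cf(\phi,S')$. This completes the argument.

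There is no real obstacle here; the only minor subtlety is handling the degenerate cases where $N(S) = 0$ or $M(S) = 0$ (i.e., where $\phi$ is equivalent to $\t$ or $\f$, or where some term vanishes). These can be dealt with by adopting the convention that $\cf(\phi,S) = \infty$ when $N(S) = 0$ and noting that the equivalence $\Pr(\phi\mid S) > \Pr(\phi\mid S') \Leftrightarrow \cf(\phi,S) < \cf(\phi,S')$ remains valid with the obvious conventions on $\infty$, or else by restricting attention to $\phi$ that are satisfiable and falsifiable, which is the only case of interest in the sequel.
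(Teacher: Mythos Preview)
Your proof is correct and follows essentially the same approach as the paper: both arguments use Lemma~\ref{lem:denormtoprob}, split the sum over assignments into satisfying and falsifying parts, and reduce to the monotonicity of $x\mapsto 1/(1+x)$ (the paper does this by taking reciprocals first and then subtracting $1$, while you divide through by $N(S)$ directly, but the content is identical). Your formulation $\Pr(\phi\mid S)=1/(1+\cf(\phi,S))$ is arguably a cleaner way to state the relationship, and your note on the degenerate cases is a reasonable addition.
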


\begin{proof}
Since for $x,y> 0$, we have that $ x<y$ iff $(1/x)>(1/y)$, so
it follows from Lemma~\ref{lem:denormtoprob} that
$ \Pr(\phi \mid M)<\Pr(\phi \mid M') $
iff
$$ \frac{ \sum_{A} \o{A}{S} }{ \sum_{\{A:\; \phi(A)=\t\}} \o{A}{S} } >
\frac{ \sum_{A} \o{A}{S'} ) }{ \sum_{\{A:\; \phi(A)=\t\}} \o{A}{S'}\} }, $$ 
which is true iff
$$\begin{array}{lll}
 & & \frac{ \sum_{\{A:\; \phi(A)=\t\}} \o{A}{S} + \sum_{ \{A:\;
      \phi(A)=\f\} } \o{A}{S}  }{ \sum_{ \{A:\; \phi(A)=\t\} } \o{A}{S}
  } \\ 
  &>&
   \frac{ \sum_{\{A:\; \phi(A)=\t\}} \o{A}{S'} + \sum_{ \{A:\; \phi(A)=\f\} } \o{A}{S'}  }{ \sum_{ \{A:\; \phi(A)=\t\} } \o{A}{S'} }, 
   \end{array}$$
that is, if and only if
$$ \frac{ \sum_{ \{A:\; \phi(A)=\f\} } \o{A}{S}  }{ \sum_{ \{A:\; \phi(A)=\t\} } \o{A}{S} } > \frac{ \sum_{ \{A:\; \phi(A)=\f\} } \o{A}{S'}  }{ \sum_{ \{A:\; \phi(A)=\t\} } \o{A}{S'} }. $$
The statement of the lemma follows.
\end{proof}

\begin{example} Let $\phi=(v_1 \wedge v_2) \vee (\neg v_2 \wedge \neg v_3)$ and
  $S=(v_2\mt F, v_1\mt T)$, and suppose that the prior $D$ is uniform
and the testing accuracy is the same for all variables,
  so $o_1=\ldots=o_n=o$.
  This formula has four satisfying assignments, namely $\{TTT, TFF, TTF,
  FFF\}$ (letting $xyz$ denote the assignment $\{v_1\mapsto x,
v_2\mapsto y, v_3\mapsto z\}$, for brevity). The other four assignments,
namely $\{FFT, TFT, FTT, FTF\}$, make the formula false. For each
assignment $A$, the corresponding summand $\o{A}{S}$ is $\Pr(A)$ times
a factor of $o$ for every test outcome in $S$ that is compatible with
$A$, where a test outcome $v_i \approx b$ is \emph{compatible} with
$A$ if $b = A(v_i)$.  
For instance, the falsifying assignment $FFT$ is compatible with $v_2\mt F$
but not $v_1 \mt T$, so it gives rise to a summand of $\Pr(A) \cdot o$
in the numerator
of the characteristic fraction of $S$.
 On the other hand,
if $A$ is the the satisfying assignment $TFF$, then both
$v_1\mt T$ and $v_2\mt F$ are compatible with $A$, yielding $\Pr(A)\cdot
o^2$ in the denominator.
Performing the same analysis for the other
assignments and cancelling the common factors of $\Pr(A)$ (as the
prior is uniform), we find that  
  $$ \mathrm{cf}(\phi,S) = \frac{ o^1 + o^2+ o^0 + o^0}{o^1 + o^2 + o^1 + o^1}. $$
    For a more general example, suppose that
    $S=((v_1\mt T)^{c_1 k}, (v_2\mt F)^{c_2 k}, (v_3\mt F)^{c_3 k} )$ for some
    positive integer $k$ and real constants $0 \leq c_1, c_2, c_3 \leq 1$
with $c_1+c_2+c_3=1$. Then
$$ \mathrm{cf}(\phi,S) = \frac{ o^{c_2 k + c_1 k} + o^{c_2 k} + o^{c_3
    k} + o^0 }{ o^{c_1 k } + o^{c_1 k + c_3 k} + o^{c_2 k + c_3 k} +
  o^{c_1 k + c_2 k + c_3 k} }. $$ 
\hfill
\wbox
\end{example}

In the second example above, the characteristic fraction
of $S$ depends only on the factors $c_1$, $c_2$ and $c_3$,
that is, how often each test appeared in $S$.
We will in general be interested in the number of times each test
outcome compatible with a truth assignment appears in a test-outcome
sequence $S$.

\begin{definition} \label{def:trace}
Given a test-outcome sequence $S$ and truth assignment $A$, the
\emph{$A$-trace} of $S$, denoted $\Tr_A(S)$, is  
the vector
$ {\Tr}_A(S)=(n^+_{S,A,1}/|S|, \ldots, n^+_{S,A,n}/|S|)$.
\hfill \wbox
\end{definition}

\begin{example} Consider the sequence of test outcomes
$ S = ( v_1 \mt T, v_2 \mt T, v_1 \mt T, v_1 \mt T, v_1 \mt F )$.
This sequence has three instances of $v_1\mt T$, one instance of $v_1\mt F$
and one instance of $v_2\mt T$. So
the $\{v_1 \mapsto T, v_2\mapsto T\}$-trace of $S$ is $(\frac{3}{5},\frac{1}{5})$;
the $\{v_1 \mapsto F, v_2\mapsto T\}$-trace of $S$ is $(\frac{1}{5},\frac{1}{5})$.
The sequence
$$ S' = [ v_1 \mt T, v_2 \mt F, v_1 \mt T, v_1 \mt T, v_1 \mt T ] $$
has $4$ instances of $v_1\mt T$ and 1 of $v_2\mt F$, so 
the $\{v_1 \mapsto T, v_2\mapsto F\}$-trace of $S'$ is $(\frac{4}{5},\frac{1}{5})$.
\hfill
\wbox 
\end{example}

\begin{definition} \label{def:trace2}
    If $\vec{c}=(c_1,\ldots,c_n)$, $\phi$
        is a formula in the $n$ variables $v_1, \ldots, v_n$ and $A$ is a truth assignment,
then the \emph{characteristic
      fraction of the $A$-trace} is the function 
$\mathrm{cf}_A$, where
$$\begin{array}{lll}
  \mathrm{cf}_A(\phi, \vec{c}, k) 
&=& \frac{ \sum_{\{B:\phi(B)=\f\}} \Pr(B)
\prod_{ \{v_i:A(v_i)=B(v_i)\} } o_i^{c_i k} }
{
    \sum_{\{B:\phi(B)=\t\}} \Pr(B) \prod_{ \{v_i:A(v_i)=B(v_i)\}} o_i^{ c_i k } }. 
\end{array}$$
\hfill \wbox
\end{definition}

\begin{definition}
The test-outcome sequence $S$ is \emph{compatible with} truth
assignment $A$ if
all test outcomes in $S$ are consistent with $A$:
that is, $S$ contains no observations of the form
$ v_i\mt \neg A(v_i). $
\hfill \wbox
\end{definition}

The quantities $\mathrm{cf}(\phi,S)$ and $\mathrm{cf}_A(\phi, \vec{c}, k)$ are
clearly closely related.  The following lemma makes this precise.
\begin{lemma} \label{lem:cfiscfA}
  For all truth assignments $A$ compatible with $S$, we have
    $$ \mathrm{cf}(\phi, S) = \mathrm{cf}_A(\phi, {\Tr}_A(S), |S|).$$
\end{lemma}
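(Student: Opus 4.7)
The plan is to unpack the definitions and exploit the compatibility hypothesis to show that each summand in $\mathrm{cf}(\phi, S)$ equals the corresponding summand in $\mathrm{cf}_A(\phi, \Tr_A(S), |S|)$, after which the two fractions coincide termwise.

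First, I would use compatibility directly. Since $S$ contains no observation of the form $v_i \mt \neg A(v_i)$, every test of $v_i$ in $S$ has outcome $A(v_i)$. Hence $n^+_{S,A,i}$ is exactly the total number of times $v_i$ is tested in $S$. Note also that $(\Tr_A(S))_i \cdot |S| = n^+_{S,A,i}$ by Definition~\ref{def:trace}.

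Next, I would analyze $\o{B}{S}$ for an arbitrary truth assignment $B$ by splitting the index set $\{1,\ldots,n\}$ according to whether $A(v_i) = B(v_i)$. For indices $i$ with $A(v_i) = B(v_i)$, every occurrence of $v_i$ in $S$ contributes an outcome agreeing with $B$, so $n^+_{S,B,i} = n^+_{S,A,i}$. For indices $i$ with $A(v_i) \ne B(v_i)$, no occurrence of $v_i$ in $S$ agrees with $B$ (again by compatibility), so $n^+_{S,B,i} = 0$ and $o_i^{n^+_{S,B,i}} = 1$. Plugging this into the definition of $\o{B}{S}$ yields
$$\o{B}{S} \;=\; \Pr(B) \prod_{\{i:\; A(v_i) = B(v_i)\}} o_i^{n^+_{S,A,i}} \;=\; \Pr(B) \prod_{\{i:\; A(v_i) = B(v_i)\}} o_i^{(\Tr_A(S))_i \cdot |S|}.$$

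Finally, summing this identity over $B$ with $\phi(B) = \f$ in the numerator of $\mathrm{cf}(\phi, S)$, and over $B$ with $\phi(B) = \t$ in the denominator, and comparing with Definition~\ref{def:trace2} applied to $\vec{c} = \Tr_A(S)$ and $k = |S|$, gives exactly $\mathrm{cf}_A(\phi, \Tr_A(S), |S|)$, as desired. There is no real obstacle here: the whole argument is bookkeeping around the compatibility condition, whose role is precisely to guarantee that $o_i^{n^+_{S,B,i}}$ collapses to either $o_i^{n^+_{S,A,i}}$ or $1$ depending on whether $B$ agrees with $A$ on $v_i$.
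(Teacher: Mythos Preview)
Your proof is correct and follows the same approach as the paper, namely unfolding the definitions; in fact you spell out the key bookkeeping (why compatibility forces $n^+_{S,B,i}$ to equal either $n^+_{S,A,i}$ or $0$) more carefully than the paper, which simply asserts the result is ``immediate from the definition.''
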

\begin{proof}
If $A$ is compatible with $S$, then $(\Tr_A(S))_i =  \nSAp/|S|$ for all $i$,
so the result is immediate from the definition.
  \end{proof}

\commentout{
Recall that our overall goal is to understand the structure of the 
set of optimal sequences of test outcomes for a particular formula
and choice of parameters.
We achieve this by understanding the set of
$A$-traces
of optimal sequences of test outcomes, for each
truth
assignment $A$ that candidate sequences may be compatible with.
We show that as the length $k$ increases, the set of $A$-traces of
optimal test-outcome sequences of length $k$ converges
to a  convex polytope
in a way 
that can be derived from the asymptotic behaviour of the
characteristic fraction of the $A$-trace. 

If we determine, for every truth assignment $A$, the set $\mathcal{S}_A$
of sequences $S$ that maximise $|\Pr(\phi\mid S)-\frac{1}{2}|$
among the sequences compatible with $A$, 
then the set of optimal test-outcome sequences for $\phi$ consists of the
union of those sets $\mathcal{S}_A$ such that for $S \in
\mathcal{S}_A$ and $|\Pr(\phi\mid S)-\frac{1}{2}|$ is a maximum over
$\cup_{\mathrm{truth\, assignments\, }A'}\{|\Pr(\phi\mid S')-\frac{1}{2}|: S'
  \in \mathcal{S} _{A'}\}$.
  Moreover, since $$\Pr(\neg\phi\mid S) = 1-\Pr(\phi\mid S),$$
  we have
    $$|\Pr(\neg\phi\mid S)-\frac{1}{2}|=|\Pr(\phi\mid S)-\frac{1}{2}|.$$
Hence, we can equivalently determine, for every $A$, the set
$\mathcal{S}^+_A$ of sequences that maximise
$\Pr(\phi\mid S)$ and among sequence compatible with $A$ and the set
$\mathcal{S}^-_A$ of sequences that maximise
$\Pr(\neg\phi\mid S)$
among sequences compatible with $A$,
and then determine the set of optimal sequence
for $\phi$, $D$, and $\m$ by taking 
the union 
of those sets $\mathcal{S}^+_A$ and $\mathcal{S}^-_A$ whose elements $S$
attain the maximum 
$|\Pr(\phi\mid S)-\frac{1}{2}|$.
By Lemma \ref{lem:inverse}, characteristic fractions
are ordered inversely to the conditional probabilities %
that $\phi$ is true; therefore, we can determine those sets by finding
the sequences that minimise $\cf(\phi,S)$ or
$\cf(\neg\phi,S)$, respectively. 

\commentout{
Consider what happens to $\cf(\phi,S)$ as the test-outcome sequence $S$
gets longer.  
Eventually, we would expect only the highest-order terms to dominate;
since $o$ is greater than $1$,
characteristic fractions whose order is lower would be smaller.
When $\phi(A)=T$ for the assignment $A$ that $S$ supports, it is easy
to see that there is always a term of the form $o^{|S|}$ in the denominator
of $\cf(\phi,S)$, and that $o^{|S|}$ is the highest power of $o$ that can
occur in any term in either the numerator or the denominator.

Suppose that $\phi(A)$ is in fact $T$. Then 
the $A$-trace $\vec{c}$ of an optimal sequence for $\phi$ would have to minimise
the highest-order term in the numerator -- that is, the maximum over
assignments $B$ such that $\phi(B)=F$ of the corresponding exponent
$\sum_{\{i:A(v_i)=B(v_i)\}} c_i k$. Finding the possible $\vec{c}$ that 
attain this minimum is a linear problem, and being the minimum turns
out to not only be a necessary but also a ``nearly'' sufficient condition,
falling short only because we have ignored the constant factors before the
leading term due to multiplicity and priors. For example,
characteristic fractions of the form
$$ \frac{C_1 o^{0.7k}+\ldots }{C_2 o^k+\ldots} $$
will always be strictly smaller than ones of the form
$$ \frac{C_1 o^{0.8k}+\ldots }{C_2 o^k +\ldots} $$
for sufficiently large $k$ (assuming that the leading term is the one
with the largest exponent), but may or may not be strictly smaller
than ones of the form 
$$ \frac{2C_1 o^{0.7k}+\ldots }{C_2 o^k + \ldots} $$
depending on what is hidden inside the ``$\ldots$''.
}
}

Recall that our goal is to find optimal test-outcome sequences
for $\phi$, that is, sequences $S$ that maximise $|\Pr(\phi\mid
S)-\frac{1}{2}|$.  
By Lemma~\ref{lem:inverse}, this means that we want to either
minimise $\cf(\phi,S)$ or $\cf(\neg\phi,S)=1/\cf(\phi,S)$. 
By Lemma \ref{lem:cfiscfA},
we want to minimise $\cf_A(\phi,S)$ or $\cf_A(\neg\phi,S)$
for a truth assignment $A$ compatible with $S$.
Using Lemma~\ref{lem:denormtoprob}, it is easy to show
that if $S$ is sufficiently long and 
compatible $A$ and $\phi(A) = T$, then we must have $\Pr(\phi|S)\geq
\Pr(\neg\phi|S)$,  while if $\phi(A) = F$, the
opposite inequality must hold. So we need to minimise
$\cf_A(\phi,S)$ 
if $\phi(A)=T$ and to minimise $\cf_A(\neg\phi,S)$ if $\phi(A)=F$.
It suffices to find a sequence $S$ and a truth assignment $A$
that is compatible with $S$ %
for which the appropriate $\cf_A$ is minimal.

\medskip \noindent
{\bf Assumption:} We assume for ease of exposition in the remainder of the paper
that the measurement accuracy of each variable is the same, that is,
$\alpha_1 = \cdots = \alpha_n$.  This implies that $o_1 = \cdots = o_n$; we
use $o$ to denote this common  value. 
While we do not need this assumption for our results, allowing non-uniform
measurement vectors $\m$ would require us to parameterize
RI by the measurement accuracy; the formulae
that exhibit $(0.1,0.1)$-RI might not be the same as those that exhibit
$(0.1,0.3)$-RI.

With this assumption, we can show that $\cf_A(\phi,S)$ is essentially
characterised by the  terms in its numerator and denominator with the
largest exponents.  Every optimal test-outcome sequence $S$ is
compatible with some assignment $A$.  Since all test outcomes in $S$
are consistent with $A$,  
if $\phi(A)=T$, the summand due to $A$ in the denominator of
$\cf(\phi,S)=\cf_A(\phi,\Tr_A(S),|S|)$
is of the form $\Pr(A)o^{|S|}$.
This term must be the highest power of $o$ that occurs in the
denominator. 
The highest power of $o$ in the numerator of $\cf_A(S)$,
which is due to some assignment $B$ for which $\phi(B)=F$, 
will in general be smaller than $1\cdot |S|$, and depends on the
structure of $\phi$.
On the other hand, if $\phi(A)=F$, we want to minimise the 
characteristic fraction for $\neg\phi$, for which the sets
of satisfying and falsifying assignments are the opposite of those with
$A$. So, in either case,
the greatest power in the numerator of the characteristic fraction
we care about will be due to an assignment $B$ for which $\phi(B)\neq \phi(A)$.
As Lemma~\ref{lem:maxpower} below shows, we can formalise
the appropriate highest power as follows:

\commentout{
Also, we claim that if $\phi(A)=T$, then the order of the dominating term 
in $\cf_A(\neg\phi,\vec{c},k)$ for all $A$-traces $\vec{c}$ is
strictly larger than the minimum order of 
the dominating term in $\cf_A(\phi,\vec{c},k)$. 
Indeed, in that case
$\neg\phi(A)=F$, and the numerator of
$\cf_A(\neg\phi,\vec{c},k)$ must contain a term of order $o^k$, and 
so the dominating term is at least order $\frac{o^k}{o^k}$. On
the other hand, by choosing
$\vec{c}=(\frac{1}{n},\ldots,\frac{1}{n})$, we can attain a dominating
term of order at most 
$\frac{o^{(1-1/n)k}}{o^k}$ in $\cf_A(\phi,\vec{c},k)$. Therefore, it is sufficient 
to minimise only the order of the numerator $\cf_A(\phi,\vec{c},k)$ for those $A$ where $\phi(A)=T$, and symmetrically
only the order of the numerator of $\cf_A(\neg\phi,\vec{c},k)$ for
those $A$ where $\phi(A)=F$. 

Formally, we can define the notion of the \emph{order of the numerator} used above as follows:
}
\def\mp{\mathrm{maxp}_{\phi,A}}

\begin{definition} \label{def:maxpower}
    The \emph{max-power} of a vector $\vec{c}\in \mathbb{R}^n$ is
$$ \mp(\vec{c}) = \max_{\{B:\phi(B)\neq \phi(A)\}}
\sum_{\{i:A(v_i)=B(v_i)\}} c_i. $$
\hfill \wbox
\end{definition}
\begin{lemma}\label{lem:maxpower}
    If $S$ is a test-outcome sequence compatible with $A$ and
    $\varphi(A)=T$ (resp., $\varphi(A)=F$),
  then the highest power of $o$ that occurs in the numerator of
    $\cf(\phi,S)$ (resp., $\cf(\neg \phi, S)$ is   $|S|\mp(\Tr_A(S))$.
\end{lemma}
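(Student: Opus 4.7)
The plan is to unfold the definitions and exploit the compatibility hypothesis. Under the uniform accuracy assumption $o_1 = \cdots = o_n = o$, Lemma~\ref{lem:denormtoprob} gives $\o{B}{S} = \Pr(B)\, o^{\sum_i n^+_{S,B,i}}$ for every truth assignment $B$. Thus the numerator of $\cf(\phi,S)$, namely $\sum_{\{B:\phi(B)=F\}} \o{B}{S}$, is a sum of monomials in $o$, and the highest power of $o$ that appears is
\[
  \max_{\{B:\phi(B)=F\}} \sum_{i=1}^n n^+_{S,B,i}.
\]
The analogous statement holds for $\cf(\neg\phi,S)$, with the maximum taken over $\{B:\phi(B)=T\}$.

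Next, I would use compatibility of $S$ with $A$: every test outcome in $S$ has the form $v_i \mt A(v_i)$, so a test of $v_i$ contributes to $n^+_{S,B,i}$ if and only if $A(v_i)=B(v_i)$. Consequently $n^+_{S,B,i}$ equals $n^+_{S,A,i}$ when $A(v_i)=B(v_i)$ and equals $0$ otherwise. Substituting, for any truth assignment $B$,
\[
  \sum_{i=1}^n n^+_{S,B,i} \;=\; \sum_{\{i:A(v_i)=B(v_i)\}} n^+_{S,A,i} \;=\; |S| \sum_{\{i:A(v_i)=B(v_i)\}} (\Tr_A(S))_i,
\]
where the last equality uses the definition of $\Tr_A(S)$.

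Finally, I would take the maximum over the appropriate set of $B$. When $\phi(A)=T$, the relevant set for $\cf(\phi,S)$ is $\{B:\phi(B)=F\} = \{B:\phi(B)\neq\phi(A)\}$; when $\phi(A)=F$, the relevant set for $\cf(\neg\phi,S)$ is $\{B:\neg\phi(B)=F\} = \{B:\phi(B)=T\} = \{B:\phi(B)\neq\phi(A)\}$. In both cases, the maximum is over exactly the set used in Definition~\ref{def:maxpower}, so the highest exponent is $|S|\cdot\mp(\Tr_A(S))$, as claimed.

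There is no real obstacle here: the lemma is a bookkeeping exercise that translates the compatibility hypothesis into an explicit formula for the exponents. The only minor subtlety is handling the two cases ($\phi(A)=T$ and $\phi(A)=F$) in a unified way, which I would do by observing that in each case the set over which the maximum is taken is precisely $\{B:\phi(B)\neq\phi(A)\}$, matching Definition~\ref{def:maxpower} on the nose.
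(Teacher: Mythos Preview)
Your proof is correct and takes essentially the same approach as the paper: the paper's proof is a one-line appeal to Lemma~\ref{lem:cfiscfA} and the definition of $\cf_A$, while you unfold the same definitions directly and explicitly, tracking the exponents through the compatibility hypothesis. The content is identical; your version is simply more detailed.
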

\begin{proof} This follows from the definition of
  $\cf_A(\phi,\Tr_A(S),|S|)$, the 
  observation that 
all entries in $\Tr_A(S)$
  are non-negative, and Lemma~\ref{lem:cfiscfA}.
  \end{proof}

We now show that the search for the max-power
can be formulated as a linear program (LP).
Note that if $R$ is a %
compact subset of $\IR$, finding a maximal element of the set
is equivalent to to finding a minimal upper bound for it:
$$\max R = \min \{ m \mid \forall   r\in R: r\leq m\}.$$
Hence, finding the
vector $\vec{c}$ with $\sum_i c_i = 1$ and $c_i \ge 0$ 
that attains the greatest max-power, that is, that maximises
$\max_{\{B:\phi(B)\neq \phi(A)\}} (
\sum_{\{i:A(v_i)=B(v_i)\}} c_i)$ is equivalent to finding the
$\vec{c}$ and max-power $m$ that minimise $m$ subject to
$\max_{\{B:\phi(B)\neq \phi(A)\}} \sum_{\{i:A(v_i)=B(v_i)\}} c_i \le m$,
$\sum_{i} c_i = 1$, and $c_i \ge 0$ for all $i$.
These latter constraints are captured by the following LP.
\begin{definition} \label{def:conflictlp}
  Given a formula $\phi$ and truth assignment $A$,
  define the \emph{conflict LP} $L_A(\phi)$ to be the linear
  program 
\begin{align*}
\text{minimise}  & \,\, m  \\
\text{subject to} &\,\, \sum_{\{ i : A(v_i)=B(v_i)\}} c_i \leq
m \text{ for all $B$ such that }\phi(B) \neq \phi(A) \\ 
&\,\, \sum_i c_i = 1. \\
&\,\, c_i \ge 0 \text{ for } i = 1,\ldots,n; \\
&\, \, 0 \le m \le 1.  
\end{align*}
\hfill \wbox
\end{definition}
The constraint $0 \le m \le 1$ is not necessary; since the $c_i$'s are
non-negative and $\sum_i c_i = 1$, the minimum $m$ that satisfies the
constraints must be between 0 and 1.  However, adding this constraint
ensures that the set of tuples $(c_1,\ldots, c_n,m)$ that satisfy the
constraints form a compact (i.e., closed and bounded) 
 set.
It is almost immediate from the definitions that the solution to $L_A(\phi)$
is $\sup_{\vec{c}: \sum_{i=1}^n c_i = 1, \ c_i  \ge 0} \mp(\vec{c})$.

We call $L_A(\phi)$ a \emph{conflict LP} because we are considering
assignments $B$ that \emph{conflict} with $A$,  in the sense that
$\phi$ takes a different truth value on them than it does on $A$. 
To reason about conflict LPs, we first introduce some notation.
\begin{definition} Suppose that $L$ is a linear program in $n$ variables
  minimising an objective function $f:\IR^n\rightarrow \IR$
  subject to some constraints.
\begin{itemize}
  \item The \emph{feasible set} of $L$,
$\Feas(L)\subseteq \IR^n$, is the set of points that satisfy all
        the constraints of $L$. 
\item The \emph{minimum} of the LP, $\mathrm{MIN}(L)$, is the infimum
  $\inf_{p\in \Feas(L)} f(p)$ attained by the objective function over
the points in $\Feas(L)$.
\item The \emph{solution polytope} of $L$, $\mathrm{OPT}(L)\subseteq
  \Feas(L)\subseteq \IR^n$, is the set of all feasible points at which $f$
  attains the minimum, that is, 
    $\mathrm{OPT}(L) = \{ p\in \Feas(L): f(p)=\mathrm{MIN}(L) \}$. 
\end{itemize}
\hfill \wbox
\end{definition}
It now follows that if $(\vec{d},m) \in \mathrm{OPT}(L_A(\phi))$, then
$\mp(\vec{d}) = m = \mathrm{MIN}(L_A(\phi))$.

Our goal is to show that the solutions
to the conflict LPs tell us enough about the structure of optimal
test-outcome sequences to derive 
a sufficient condition for a formula to exhibit RI.
Roughly speaking,
$\textrm{MIN}(L_A(\phi))$ tells us how well any sequence
of test outcomes compatible with the assignment $A$ can do.
Since every optimal
sequence is compatible with some assignment, we therefore can find the max-power
of
optimal sequences by considering the minimum of the minima of all LPs:
\begin{definition} For a formula $\phi$, define the \emph{minimax
        power} $\textrm{MIN}^*(\phi)$ 
to be the minimum of minima:
$$ \textrm{MIN}^*(\phi) = \min_{\text{assignments }A} \textrm{MIN}(L_A(\phi)). $$
An assignment $A$, and the LP $L_A(\phi)$, are \emph{relevant} if
$\textrm{MIN}(L_A(\phi))=\textrm{MIN}^*(\phi)$.  
\hfill \wbox
\end{definition}

The significance of this quantity is formalised by the following theorem.

\begin{theorem} \label{thm:lptorieasy}
If there exists a constant $C>0$ such that for all relevant truth
assignments $A$ 
 and all solution
points $\vec{c}=(c_1,\ldots,c_n,m)\in \mathrm{OPT}(L_A(\phi))$,
there exist indices $i$ and $j$ such that 
$v_i\leq_\phi v_j$,
$c_i\geq C$, and $c_j=0$, 
then $\phi$ exhibits RI.
\end{theorem}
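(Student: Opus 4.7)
My plan is to reduce the theorem to Proposition~\ref{RIviatestseqs}: it suffices to exhibit, for each open-minded product distribution $D$ and uniform accuracy vector $\m$, a negligible function $f$ and a constant $c > 0$ such that every length-$k$ test-outcome sequence $S$ optimal for $\phi$, $D$, $\m$ contains at least $ck$ tests of some variable $v'$ and at most $f(k)$ tests of some $v$ with $v' \leq_\phi v$. The bridge between the LP hypothesis and this combinatorial statement is the $A$-trace $\Tr_A(S)$.

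First I would invoke Lemma~\ref{lem:noncontra} to conclude that every optimal $S$ (outside of trivial cases) is compatible with at least one truth assignment $A$, so that $\Tr_A(S)$ is a non-negative vector summing to $1$ and the pair $(\Tr_A(S), \mp(\Tr_A(S)))$ is feasible for the conflict LP $L_A(\phi)$. By Lemma~\ref{lem:cfiscfA} and the explicit form of $\cf_A$, for large $k$ the characteristic fraction of $S$ is dominated by a leading term of the form $C_1 o^{k\,\mp(\Tr_A(S))}/(C_2 o^k)$, where the constants $C_1, C_2$ are bounded away from $0$ and $\infty$ since $D$ is open-minded. Conversely, by stringing together an explicit sequence whose $A^*$-trace approximates an LP optimum for some relevant $A^*$, one realises a characteristic fraction with leading term of order $o^{k\,\mathrm{MIN}^*(\phi) - k}$. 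Because $o > 1$, a fixed excess $\mp(\Tr_A(S)) - \mathrm{MIN}^*(\phi) > \epsilon$ cannot be compensated by bounded multiplicative constants once $k$ is large; optimality of $S$ thus forces $\mp(\Tr_A(S)) \to \mathrm{MIN}^*(\phi)$, and in particular $A$ must be relevant.

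A quantitative refinement of this argument, tracking the bounded subleading contributions to $\cf_A$, yields $\mp(\Tr_A(S)) - \mathrm{MIN}(L_A(\phi)) = O((\log k)/k)$. Since $\Feas(L_A(\phi))$ is compact and $\mathrm{OPT}(L_A(\phi))$ is cut out by finitely many affine inequalities, a Lipschitz-type estimate converts this objective gap into a Hausdorff distance bound $\|\Tr_A(S) - \vec{d}\| \leq \delta(k)$, for some $(\vec{d}, \mathrm{MIN}(L_A(\phi))) \in \mathrm{OPT}(L_A(\phi))$, with $\delta(k)$ tending to $0$ negligibly in $k$; since there are only finitely many relevant assignments, a single $\delta$ works uniformly. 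The hypothesis then supplies indices $i, j$ (depending on $\vec{d}$) with $v_i \leq_\phi v_j$, $d_i \geq C$, and $d_j = 0$. Translating back to test counts, $v_i$ is tested at least $(d_i - \delta(k))k \geq (C/2)k$ times in $S$ for $k$ large, and $v_j$ at most $\delta(k) k$ times. Setting $c := C/2$ and $f(k) := \lceil \delta(k) k\rceil$ supplies exactly the precondition of Proposition~\ref{RIviatestseqs}.

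The main obstacle is making the convergence rate $\delta(k)$ both sharp and uniform. Bounding $\mp(\Tr_A(S)) - \mathrm{MIN}(L_A(\phi))$ requires ruling out cancellations among subleading exponentials in $\cf_A$ that could temporarily mask a suboptimal max-power, and the passage from an objective gap to a distance gap relies on the polyhedral geometry of $\mathrm{OPT}(L_A(\phi))$ (which has only finitely many vertices, each determined by $\phi$ alone). A secondary subtlety is that the indices $i, j$ granted by the hypothesis depend on the point $\vec{d}$, but the global nature of the constant $C$ and the finiteness of the relevant LPs still permit a uniform extraction of $c$ and $f$.
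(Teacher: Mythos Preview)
Your proposal is correct and follows essentially the same approach as the paper: reduce to Proposition~\ref{RIviatestseqs} by showing that the $A$-trace of every sufficiently long optimal test-outcome sequence must lie close to $\mathrm{OPT}(L_A(\phi))$ for some relevant $A$ (via the leading-term analysis of the characteristic fraction), and then read off the required pair of variables from the LP hypothesis with $c=C/2$ and $f(k)=\delta(k)k$. The paper packages the convergence and relevance steps as separate Lemmas~\ref{lem:objsep}, \ref{lem:seqtolp}, and~\ref{lem:optimalrelevant}, obtaining $\delta(k)\to 0$ by a pure compactness argument rather than your quantitative Hoffman-type Lipschitz bound, and wraps everything in a contradiction together with Lemma~\ref{lem:singletoseq} to absorb the finitely many small $k$ for which your direct estimate does not yet bite.
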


\commentout{
    One might rightly be alarmed to observe that in the LP definition we made above, the coordinates $c_i$ did not quite look
    like the trace entries $c_i$ used in the characteristic function of the trace: they are positive, sum to $1$ (rather than $k$) and in fact
    for all means and purposes bounded between $0$ and $1$. The rescaling is necessitated by the circumstance
    that we are looking to capture the behaviour as $k$ itself grows, but the \emph{relative} fraction of tests
    of a given variable stays the same. Specifically, the LP solution space is related the the trace by the following definition.
}

\commentout{ 
    We will refer to a variant of the rational inattention condition for optimal test sequences that Proposition \ref{RIviatestseqs} uses often enough that it is worth attaching a name and symbol to it:
    \begin{definition} Suppose $\phi$ is a formula in $n$
      variables. For a sequence of test-outcome sequences for the
      formula of increasing length $\{S_k\}_{k\in K}$ where $|S_k|=k$,
      product distribution $D$, a negligible function $f$ and constant
      $c>0$, a variable $v$ and an index $k\in K$, the
      \emph{inattention condition} $P = \Feas(\{S_k\},D,f,c,v,k)$ is the
      logical proposition that there exists a variable $v' \leq_\phi
      v$ such that 
    \begin{itemize}
    \item $S_k$ (the sequence of test outcomes of length $k$ in the big sequence) contains at least $ck$ tests of $v'$, and
    \item $S_k$ contains at most $f(k)$ tests of $v$.
    \end{itemize}
\end{definition}
}

To prove Theorem~\ref{thm:lptorieasy}, we show that the
antecedent of the theorem implies the antecedent of
Proposition~\ref{RIviatestseqs}.
The next lemma is a first step
towards this goal.
Proposition \ref{RIviatestseqs} involves a condition on test sequences that
intuitively says that some variable is tested often, but
another variable 
that is at least as important is tested very little.  This condition
arises repeatedly in the following proof, so we attach
a name to it. 
\begin{definition} \label{def:badgood}
    Given a constant $c$ and negligible function $f$,
  a test-outcome sequence $S$ is $(f,c,\phi)$-\emph{good} if there 
exist variables $v_i$ and $v_j$ such that $v_i\geq_\phi v_j$, $S$
contains at least $c|S|$ tests of $v_j$, and $S$ contains at most $f(|S|)$
tests of $v_i$.  
$S$ is $(f,c,\phi)$-\emph{bad} if it is not $(f,c,\phi)$-good.
\hfill \wbox
\end{definition}
Using this notation, Proposition \ref{RIviatestseqs} says that  a
formula $\phi$ exhibits RI if 
for all open-minded product distributions $D$ and accuracy vectors
$\vec{\alpha}$, there exists a negligible function $f$ and $c>0$ such that all
test-outcome sequences optimal for $\phi$, $D$, and $\m$ are $(f,c,\phi)$-good. 
The contrapositive of Proposition~\ref{RIviatestseqs} says that if
a formula does not exhibit RI, then 
for all $f$ and $c$, there is
an $(f,c,\phi)$-bad test-outcome sequence optimal for $\phi$, $D$, and
$\m$. Bad test-outcome sequences are
counterexamples to RI. The next lemma allows us to ``boost''
such counterexamples if they exist: whenever we have a single bad
test-outcome sequence, we in fact have an infinite family of
arbitarily long bad test-outcome sequences 
that can be considered 
refinements of the same counterexample. 

\begin{lemma} \label{lem:singletoseq}
If,  for all negligible functions $f$ and constants $c>0$,
there exists 
an $(f,c,\phi)$-bad test-outcome sequence 
that is  optimal for $\phi$, $D$, and $\m$,
then for all $f$ and $c$, 
there exists an infinite sequence $\{S_k\}$ of 
$(f,c,\phi)$-bad optimal test-outcome sequences of 
increasing length (so that $|S_{k+1}| > |S_k|$),
all optimal for $\phi$, $D$, and $\m$.
\commentout{
whose traces tend to a limit, that is, for all assignments $A$,
$\lim_{k\rightarrow\infty} \Tr_A(S_k)$ exists. 
}
\end{lemma}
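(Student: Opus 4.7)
The plan is to prove this by contradiction. I would fix an arbitrary negligible $f_0$ and constant $c_0 > 0$, and suppose that the set $\mathcal{B}$ of $(f_0,c_0,\phi)$-bad optimal test-outcome sequences is finite. Since there are only finitely many test-outcome sequences of any given length, this assumption is equivalent to the lengths of sequences in $\mathcal{B}$ being bounded. The goal is then to construct a negligible $f^*$ and a constant $c^* > 0$ to which the hypothesis cannot consistently apply, obtaining the contradiction.

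First I would record two monotonicity observations about Definition~\ref{def:badgood}. If $f^* \geq f_0$ pointwise and $c^* \leq c_0$, then $(f^*,c^*,\phi)$-goodness is easier to satisfy than $(f_0,c_0,\phi)$-goodness (the tolerance $f^*(|S|)$ is larger and the threshold $c^*|S|$ is smaller), so $(f^*,c^*,\phi)$-badness implies $(f_0,c_0,\phi)$-badness. Next, for each $S \in \mathcal{B}$, I would pick a most-tested variable $v^*_S$ in $S$; by pigeonhole it is tested in at least $|S|/n$ positions, and by reflexivity of $\leq_\phi$ the pair $v_i = v_j = v^*_S$ witnesses $(f_S,c_S,\phi)$-goodness of $S$ whenever $c_S \leq 1/n$ and $f_S(|S|) \geq |S|$. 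I would take $c_S = 1/n$ and let $f_S$ equal $|S|$ at the single input $|S|$ and $0$ elsewhere; this $f_S$ is trivially negligible.

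I would then set $f^* = \max(f_0, \max_{S \in \mathcal{B}} f_S)$ and $c^* = \min(c_0, 1/n)$. Since $\mathcal{B}$ is finite, $f^*$ is the pointwise maximum of finitely many negligible functions and hence itself negligible, and $c^* > 0$. Applying the hypothesis to $(f^*,c^*)$ yields an optimal $(f^*,c^*,\phi)$-bad sequence $S^*$. By the first monotonicity observation, $S^* \in \mathcal{B}$; but by construction $f^* \geq f_{S^*}$ and $c^* \leq c_{S^*}$, so $S^*$ is also $(f^*,c^*,\phi)$-good, a contradiction. Hence $\mathcal{B}$ is infinite, and since only finitely many test-outcome sequences have any given length, $\mathcal{B}$ contains sequences of unbounded length, from which the required strictly length-increasing subsequence $\{S_k\}$ is extracted.

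The delicate step is arranging $f^*$ and $c^*$ to simultaneously satisfy both the monotonicity conditions ($f^* \geq f_0$, $c^* \leq c_0$) and the goodness-certifying conditions ($f^* \geq f_S$, $c^* \leq c_S$ for every $S \in \mathcal{B}$) while keeping $f^*$ negligible. Finiteness of $\mathcal{B}$ is exactly what makes these two sets of demands jointly consistent, and this is where the contradictory finiteness assumption is exploited; the rest is bookkeeping around the two directions of monotonicity in the definition of badness.
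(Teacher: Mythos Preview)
Your proposal is correct and follows essentially the same approach as the paper: both argue the contrapositive, assume the set of bad optimal sequences has bounded length (hence is finite), then manufacture a pair $(f^*,c^*)$ for which every optimal sequence is good by using the most-tested variable together with reflexivity of $\leq_\phi$, and patching $f$ on the finitely many relevant lengths. The only cosmetic differences are that the paper takes $c'' = \min\{c_0, \min_{S\in\mathcal{B}}\max_i (\Tr_A(S))_i\}$ and sets $f''(k)=k$ for all $k$ below the length bound, whereas you use the cleaner universal pigeonhole bound $c^*=\min(c_0,1/n)$ and bump functions $f_S$; your version is arguably tidier since $1/n$ works for every sequence without reference to $\mathcal{B}$.
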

\begin{proof}
We show the contrapositive.  Fix $\phi$, $D$, and $\m$.  We show
that if  
  there exist $f$ and $c$ for which there is no infinite sequence
  $\{S_k\}$ of $(f,c,\phi)$-bad test-outcome sequences optimal for $\phi$,
  $D$, and $\m$, 
  then,
  for all $D$ and $\m$,
there exist $f''$ and $c''$ for which there is not even a single
$(f'',c'')$-bad test-outcome sequence that is optimal for $\phi$, $D$,
and $\m$.
Choose $f$ and $c$ such that the premises
of the contrapositive hold.  Let
$\mathcal{S}_{f,c}$ be the set of all 
$(f,c,\phi)$-bad test-outcome sequences that are optimal for $\phi$, $D$ and $\m$.
We can assume $\mathcal{S}_{f,c}$ is nonempty; otherwise 
the claim trivially holds.
If there exist arbitrarily long sequences $S\in \mathcal{S}_{f,c}$, then we can
pick a sequence $\{S_k\}$ of test-outcome sequences in
$\mathcal{S}_{f,c}$ of increasing length from them, 
contradicting the assumption.
In fact, this must be the case. For suppose, by way of
contradiction, that it isn't. 
Then there must be an upper bound $\hat{k}$ on the lengths of
test-outcome sequences in $\mathcal{S}_{f,c}$.  
Moreover, since there are only finitely many test-outcome 
sequences of a given length, $\mathcal{S}_{f,c}$ itself must also be finite.
Thus,
$$ c' = \min_{S\in \mathcal{S}_{f,c}} \max_{\text{variables }v_i\text{ in 
      } \phi} |({\Tr}_A(S))_i| $$
is finite and greater than zero (as every sequence must test at least
one variable and not contradict itself, so we are taking the
minimum over finitely many terms greater than zero).  Hence, 
$ c'' = \min \{c,c'\} $
is also greater than 0.
Let
$$ f''(k) = \begin{cases} k & \text{if $k\leq \hat{k}$} \\ f(k) &
  \text{otherwise.} \end{cases} $$ 
Since $f$ is negligible and $f''$ agrees with $f$ for all $k >
\hat{k}$, $f''$ is also negligible.

We claim that 
no 
test-outcome sequence $S$ optimal for $\phi$, $D$, and $\m$ is $(f'',c'')$-bad. 
Indeed, all candidate sequences of length $|S|\leq \hat{k}$ are ruled
out, because setting both $v_i$ and $v_j$ to be whatever variable is
tested the most in $S$
discharges the existential
quantification of Definition \ref{def:badgood} (note $\leq_\phi$ is a partial order, so $v_i \leq_\phi
v_i$ for all $v_i$) as the number of tests is bounded below by the minimum $c'|S|$
and above by the length $|S|$. 
Any test-outcome sequence $S$ of length $|S|>\hat{k}$ must also be
$(f'',c'')$-good. 
Indeed, by choice of $\hat{k}$, $S$ is $(f,c,\phi)$-good.
Therefore, there must be
a variable pair $v_i\geq_\phi v_j$ such that $S$ contains $\geq c|S|$
tests of $v_j$ and $\leq f(|S|)$ tests of $v_i$. But $c''\leq c$ by
definition and $f''(|S|)=f(|S|)$, so $v_i$ and $v_j$ also 
bear witness to $S$ being $(f'',c'')$-good.
This gives the desired contradiction.

Thus, we have shown that there exists a sequence $\{S_k\}$ of
bad test-sequence outcomes
in $\mathcal{S}_{f,c}$
of increasing length.
\commentout{
However,
for each assignment $A$, 
the limit $\lim_{k\rightarrow\infty} \Tr_A(S_k)$ does not
necessarily exist. To remedy this, note that $\Tr_A(S_k)$ takes values in
$[-1,1]^n$, which is a compact subset of
$\mathbb{R}^n$. Therefore, by the Bolzano-Weierstrass theorem,
this sequence has a convergent subsequence.  This
subsequence satisfies all required properties
has a limit for $A$.
Repeating this process for all truth assignments gives us a sequence
$\{S_k\}$ such that $\lim_{k \rightarrow \infty} \Tr_{A}(S_k)$ exists for
all $A$.
}
\end{proof}

In the following, we use the standard notion of \emph{1-norm}, where
the 1-norm of a real-valued vector $\vec{v}=(v_1, \ldots, v_n)$ is 
$$\|\vec{v}\|_1=\sum_{i=1}^n |\vec{v}_i|, $$
the sum of absolute values of the entries of $\vec{v}$.
We often consider the 1-norm of the difference of two vectors.
Although the difference of vectors is defined only if they have same
length, we occasionally abuse notation and write $\|\vec{v} -
\vec{w}\|_1$ even when $\vec{v}$ and $\vec{w}$ are vectors of
different lengths. In that case, we consider only the common
components of the vectors.  For example, if $\vec{v}=(v_1,\ldots,v_n)$ and
$\vec{w}=(w_1,\ldots,w_m)$, then $$\|\vec{v}-\vec{w}\|_1 =
|v_1-w_1| + \cdots + |v_{\min \{n,m\}}- w_{\min \{n,m\}} |.$$

The following fact about LPs will prove useful.
\begin{lemma} \label{lem:objsep} 
If $L$ is an LP with objective function $f$ such  that $\Feas(L)$ is compact,
then for all $\varepsilon>0$, there exists an
$\varepsilon'>0$ such that 
all feasible points $\vec{p}\in \Feas(L)$, either $\vec{p}$ is within
$\varepsilon$ 
of a solution point, that is, 
$$\exists \vec{o}\in \mathrm{OPT}(L)(\|\vec{p}-\vec{o}\|_1 < \varepsilon),$$   
or $f(\vec{p})$ is more than
$\varepsilon'$ away from the optimum, that is,
$$ f(\vec{p}) - \mathrm{MIN}(L)>\varepsilon'. $$
\end{lemma}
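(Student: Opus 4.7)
The plan is to apply a straightforward compactness argument. Fix $\varepsilon>0$ and define
$$K_\varepsilon = \{\vec{p}\in \Feas(L) : \|\vec{p}-\vec{o}\|_1 \geq \varepsilon \text{ for all } \vec{o}\in \mathrm{OPT}(L)\}.$$
This is the collection of feasible points that are ``far'' from every solution point; what we want to show is precisely that on this set the objective is bounded strictly away from $\mathrm{MIN}(L)$.

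First I would argue that $K_\varepsilon$ is compact. It is a subset of the compact set $\Feas(L)$ (so it is bounded), and it is closed because it is the intersection of $\Feas(L)$ with the family of closed halfspace complements $\{\vec{p} : \|\vec{p}-\vec{o}\|_1 \geq \varepsilon\}$ as $\vec{o}$ ranges over $\mathrm{OPT}(L)$. If $K_\varepsilon$ is empty, the first disjunct of the lemma holds for every feasible point and we may pick $\varepsilon'$ to be anything positive, so assume $K_\varepsilon\neq \emptyset$.

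Next I would invoke the extreme value theorem: the objective $f$ of an LP is linear and hence continuous, so on the compact set $K_\varepsilon$ it attains its infimum at some point $\vec{p}^*\in K_\varepsilon$. Set $m^* = f(\vec{p}^*)$. Since $\vec{p}^*\in \Feas(L)$ we have $m^* \geq \mathrm{MIN}(L)$; since $\vec{p}^*\notin \mathrm{OPT}(L)$ (as $K_\varepsilon$ is disjoint from $\mathrm{OPT}(L)$ by construction, because $\vec{o}\in \mathrm{OPT}(L)$ satisfies $\|\vec{o}-\vec{o}\|_1=0 < \varepsilon$), we in fact have $m^* > \mathrm{MIN}(L)$.

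Finally, I would take $\varepsilon' = (m^* - \mathrm{MIN}(L))/2 > 0$. For any feasible $\vec{p}$, either $\vec{p}\notin K_\varepsilon$, in which case there exists $\vec{o}\in \mathrm{OPT}(L)$ with $\|\vec{p}-\vec{o}\|_1 < \varepsilon$ and the first disjunct holds, or $\vec{p}\in K_\varepsilon$, in which case $f(\vec{p}) \geq m^* = \mathrm{MIN}(L) + 2\varepsilon' > \mathrm{MIN}(L) + \varepsilon'$, giving the second disjunct. The only mildly delicate point is making the inequality strict, which the factor of $1/2$ takes care of; everything else is immediate from compactness and continuity of the linear objective.
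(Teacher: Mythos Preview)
Your proof is correct and follows essentially the same approach as the paper: define the set of feasible points at distance $\geq \varepsilon$ from $\mathrm{OPT}(L)$, observe it is compact, and take $\varepsilon'$ to be half the gap between the infimum of $f$ on this set and $\mathrm{MIN}(L)$. The only difference is presentational: you invoke the extreme value theorem directly to obtain a minimiser $\vec{p}^*$, whereas the paper instead supposes the infimum is zero and derives a contradiction via Bolzano--Weierstrass (effectively reproving the extreme value theorem inline). One minor terminological slip: the sets $\{\vec{p}:\|\vec{p}-\vec{o}\|_1\geq\varepsilon\}$ are complements of open $\ell^1$-balls, not ``halfspace complements''; but they are indeed closed, which is all you need.
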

\begin{proof}
We will argue by contradiction. 
Suppose that the claim does not hold, and 
let $Q$ be the set of all points in $\Feas(L)$ that do not
satisfy the first inequality; that is, 
$$Q=\{ \vec{p}\in \Feas(L) : \forall \vec{o} \in
\mathrm{OPT}(L) (\|\vec{p}-\vec{o}\|_1 \geq \varepsilon) \}.$$ 
This set is bounded and closed, hence compact. 
If $\inf_{\vec{q}\in Q} (f(\vec{q}) - \mathrm{MIN}(L)) > 0$, then we
can take $\epsilon' = \inf_{\vec{q}\in Q} (f(\vec{q}) -
\mathrm{MIN}(L))/2$ since then, for every point $\vec{p} \in \Feas(L)$, if
$f(\vec{p}) - \mathrm{MIN}(L) \leq \epsilon'$, then $\vec{p} \notin
Q$, and hence by definition of $Q$, $\vec{p}$ must be within $\epsilon$ of some solution point.

So suppose that $\inf_{\vec{q}\in Q} (f(\vec{q}) - \mathrm{MIN}(L)) = 
0$.  Then
there exists a sequence $(\vec{q}_i)_{i=1}^\infty$ of points in $Q$
such that $\lim_{i\rightarrow \infty} f(\vec{q}) = \mathrm{MIN}(L)$. 
By the Bolzano-Weierstrass Theorem, this sequence must have a convergent
subsequence $(\vec{q}'_i)_{i=1}^\infty$. 
Write $\vec{q}^*$ for $\lim_{i\rightarrow \infty} \vec{q}'_i$. 
This limit point is still in $Q$,
as $Q$ is compact. Since $f$ is linear, hence continuous,
$$\begin{array}{lll}
    f(\vec{q}^*) &=& f(\lim_{i\rightarrow \infty} \vec{q}'_i) \\
    &=& \lim_{i\rightarrow \infty} f(\vec{q}'_i)
    \\
    &=& \lim_{i\rightarrow \infty} f(\vec{q}_i) \\
 &=& \mathrm{MIN}(L).
    \end{array}$$
Thus, $\vec{q}^*\in \mathrm{OPT}(L)$ and $\vec{q}^* \in Q$,
which is incompatible with the definition of $Q$.  This gives the
desired contradiction.
\end{proof}

We have seen how to distill the information in a test-outcome sequence for a formula in $n$
variables into a vector in $\IR^n$ by taking $A$-traces. The
following lemma is to be understood as an approximate converse of this process: given
a vector in $\IR^n$, we construct a test-outcome sequence
of a given length $k$ 
whose $A$-trace is close (within an error term of $2n/k$) to that vector.

\begin{lemma} \label{lem:approx}
If $A$ is an assignment to the $n$ variables of $\phi$ and
  $\vec{d}\in \IR^n$ is such that all coordinates are non-negative and
sum to 1, then for all $k\in \IN$, there exists a test-outcome sequence
$S_{k,\vec{d},A}$ of length $k$ compatible with $A$ such that 
$|\mp(\Tr_A(S_{k,\vec{d},A}) - \mp(\vec{d})| < 2n/k$.
\end{lemma}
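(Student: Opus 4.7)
\noindent\textbf{Proof plan for Lemma~\ref{lem:approx}.} The plan is to construct $S_{k,\vec{d},A}$ explicitly by a rounding procedure, and then to exploit the fact that the max-power function is 1-Lipschitz with respect to the 1-norm on traces. Concretely, I would first set $c_i^{0} = \lfloor d_i k \rfloor$ for each $i = 1,\ldots,n$; since each floor rounds down by less than 1, the total $\sum_i c_i^0$ is an integer that is strictly greater than $k - n$ and at most $k$. I would then obtain integers $c_1,\ldots,c_n$ by incrementing $c_i^0$ by 1 for an arbitrary choice of $k - \sum_i c_i^0$ of the coordinates, so that $\sum_i c_i = k$ and $|c_i - d_i k| < 1$ for each $i$. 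The sequence $S_{k,\vec{d},A}$ is then defined to consist of $c_i$ copies of the test outcome $v_i \mt A(v_i)$, arranged in any order; by construction, $|S_{k,\vec{d},A}| = k$ and every test outcome in the sequence agrees with $A$, so the sequence is compatible with $A$.

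Because every test outcome agrees with $A$, we have $n^+_{S_{k,\vec{d},A},A,i} = c_i$ for each $i$, so the $A$-trace is $\Tr_A(S_{k,\vec{d},A}) = (c_1/k,\ldots,c_n/k)$. Each coordinate differs from the corresponding $d_i$ by less than $1/k$, so
\[ \|{\Tr}_A(S_{k,\vec{d},A}) - \vec{d}\|_1 = \sum_{i=1}^n |c_i/k - d_i| < n/k. \]

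The next step is to observe that $\mp$ is 1-Lipschitz in the 1-norm. Indeed, $\mp(\vec{c}) = \max_{\{B : \phi(B) \neq \phi(A)\}} \sum_{\{i : A(v_i) = B(v_i)\}} c_i$ is a maximum of finitely many functions of the form $\vec{c} \mapsto \sum_{i \in I_B} c_i$ for some index set $I_B \subseteq \{1,\ldots,n\}$. Each such function has the property that $|\sum_{i \in I_B} c_i - \sum_{i \in I_B} c'_i| \leq \sum_{i \in I_B} |c_i - c'_i| \leq \|\vec{c} - \vec{c}'\|_1$, and since a pointwise maximum of functions with a common Lipschitz constant $L$ is itself $L$-Lipschitz, we obtain $|\mp(\vec{c}) - \mp(\vec{c}')| \leq \|\vec{c} - \vec{c}'\|_1$.

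Combining these two facts yields
\[ |\mp({\Tr}_A(S_{k,\vec{d},A})) - \mp(\vec{d})| \leq \|{\Tr}_A(S_{k,\vec{d},A}) - \vec{d}\|_1 < n/k < 2n/k, \]
which is the bound claimed. No step here is a real obstacle; the only mild subtlety is the rounding correction to ensure $\sum_i c_i = k$ exactly, and the factor $2$ in the statement is slack that absorbs any such adjustment.
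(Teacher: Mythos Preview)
Your proof is correct and follows essentially the same strategy as the paper: construct the sequence by rounding each $d_i k$ down to an integer, pad to reach length $k$, and then bound the resulting perturbation of $\mp$. The one difference is in how the padding is distributed. The paper places \emph{all} leftover tests on the last variable $v_n$, so the $n$th coordinate of the trace can be off by as much as $n/k$ while the first $n-1$ coordinates are each off by at most $1/k$; summing over an arbitrary subset of coordinates then gives at most $(n-1)/k + n/k < 2n/k$ directly for each $B$, and hence for the max. You instead spread the leftover increments across distinct coordinates, keeping every coordinate within $1/k$ of its target, and then invoke the observation that $\mp$ is $1$-Lipschitz in the $1$-norm to obtain the sharper bound $n/k$. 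Both arguments are sound; yours is a bit cleaner and makes explicit why the factor $2$ in the statement is slack. (One tiny caveat: if you increment a coordinate $i$ for which $d_i k$ happens to be an integer, you get $|c_i - d_i k| = 1$ rather than $<1$; either restrict increments to coordinates with nonzero fractional part, which is always possible, or relax to $\le n/k$, which still yields $<2n/k$.)
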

\begin{proof}
Define
$$S_{k,\vec{d},A} = ( (v_1 \approx A(v_1))^{\lfloor d_1
    k \rfloor}, \ldots,  (v_n \approx A(v_n))^{\lfloor d_n k
    \rfloor}, (v_n \approx A(v_n))^{e} ), $$
where $\lfloor x\rfloor$ is the floor of $x$ (i.e., the largest integer
$n$ such that $n \le x$) and $e = k - (\sum_{i-1}^n \lfloor d_i k
\rfloor)$ is whatever is needed to pad the sequence to having length $k$.
(e.g., if  $\vec{d}=(0.3,0.7,\mathrm{MIN}^*(\phi))$ and $k=2$, then although 
the $d_i$s sum to 1, $\lfloor d_1 k \rfloor = 0$ and $\lfloor d_2 k
\rfloor = 1$, so we would have $e=1$.)

Since $\sum_i d_i k=k$, $\sum_i \lfloor d_i k\rfloor \leq k$, and hence
$e\geq 0$. Also, $\Tr_A(S_{k,\vec{d},A})$ differs from $\vec{d}$ by at most
$1/k$ in the first $n-1$ coordinates (as $|d_1 k - \lfloor d_1 k\rfloor| \leq 1$)
  and by at most $n/k$ in the final coordinate (as $e\leq n$).
  Since, for each assignment $B$,
  $$\left|\sum_{\{i:A(v_i)=B(v_i)\}} d_i -
  \sum_{\{i:A(v_i)=B(v_i)\}} (Tr_A(S_{k,\vec{d},A}))_i\right| \le
  (n-1)\frac{1}{k} + \frac{n}{k} \leq \frac{2n}{k},$$
  and for an arbitrary vector $\vec{c}$,
  $$\mp(\vec{c}) = \max_{\{B: \phi(B) \ne \phi(A)\}} \sum_{\{i:A(v_i)=B(v_i)\}} c_i, $$
it follows that $|\mp(\Tr_A(S_{k,\vec{d},A)}) - \mp(\vec{d})| < 2n/k$,
as desired.
\end{proof}

We can finally relate the solutions of the conflict LP $L_A(\phi)$ to
traces to the traces of optimal test-outcome sequences.
While the traces of optimal sequences may not be 
in $\mathrm{OPT}(L_A(\phi))$, they must get arbitrarily close to it
as the length of the sequence gets larger.

\begin{lemma} \label{lem:seqtolp}
    If $D$ is open-minded, then
there exists a function $\delta:\IN\rightarrow\IR$, depending only on
$\phi$, $D$, and $\m$,
  such that 
\begin{itemize}
\item  $\lim_{k\rightarrow
  \infty} \delta(k) = 0$ and
\item for
  all assignments $A$ and test-outcome sequences $S$ compatible
with $A$ that are optimal 
for $\phi$, $D$, and $\m$, 
the $A$-trace of $S$ is 
within $\delta(|S|)$ of some solution $(\vec{d},m) \in
\mathrm{OPT}(L_A(\phi))$, 
that is,
$$ \exists (\vec{d},m) \in OPT(L_A(\phi)).\, \| \vec{d}- {\Tr}_A(S) \|_1 <
\delta(|S|). $$ 

\end{itemize}
\end{lemma}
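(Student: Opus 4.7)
The plan is to derive a bound of the form $\mp(\Tr_A(S)) - \mathrm{MIN}(L_A(\phi)) = O(1/k)$ for every optimal $S$ compatible with $A$ by comparing its characteristic fraction to that of a reference sequence built from an LP solution via Lemma~\ref{lem:approx}, and then to invoke Lemma~\ref{lem:objsep} to convert this objective-value control into 1-norm closeness of $\Tr_A(S)$ to $\mathrm{OPT}(L_A(\phi))$.

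First, I would fix $L = \mathrm{MIN}^*(\phi)$, pick any relevant assignment $A^*$ with solution $(\vec{d}^*, L) \in \mathrm{OPT}(L_{A^*}(\phi))$, and set $p_{\min} := \min_A \Pr(A) > 0$ (positive because $D$ is open-minded). Applying Lemma~\ref{lem:approx} yields, for each $k$, a sequence $S^*_k$ of length $k$ compatible with $A^*$ with $\mp(\Tr_{A^*}(S^*_k)) \leq L + 2n/k$. I would then bound the ``relevant'' characteristic fraction of $S^*_k$ (namely $\cf(\phi, S^*_k)$ if $\phi(A^*)=\t$, else $\cf(\neg\phi, S^*_k)$) from above by bounding each of the at most $2^n$ numerator summands by $o^{k \mp(\Tr_{A^*}(S^*_k))}$ and keeping only the contribution $\Pr(A^*) o^k \geq p_{\min} o^k$ from $A^*$ itself in the denominator, yielding
\[
\cf_{\mathrm{ref}}(S^*_k) \;\leq\; \frac{2^n}{p_{\min}}\, o^{2n}\, o^{-k(1-L)}.
\]

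Next, for an optimal $S$ compatible with $A$, I would lower-bound its analogous relevant characteristic fraction $\cf_{\mathrm{rel}}(S)$ by keeping one numerator summand (contributing at least $p_{\min} o^{k \mp(\Tr_A(S))}$ by definition of the max-power) and bounding the denominator above by $2^n o^k$, giving $\cf_{\mathrm{rel}}(S) \geq (p_{\min}/2^n)\, o^{-k(1 - \mp(\Tr_A(S)))}$. The observation preceding Definition~\ref{def:maxpower} that for sufficiently long $S$ optimality reduces to minimising this $\cf_{\mathrm{rel}}(S)$ then forces $\cf_{\mathrm{rel}}(S) \leq \cf_{\mathrm{ref}}(S^*_k)$; combining with the upper bound and taking $\log_o$ yields
\[
\mp(\Tr_A(S)) - L \;\leq\; \gamma(k) \;:=\; (2n + 2\log_o(2^n/p_{\min}))/k,
\]
which tends to zero. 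Since $\Tr_A(S)$ is feasible for $L_A(\phi)$, $\mp(\Tr_A(S)) \geq \mathrm{MIN}(L_A(\phi)) \geq L$, so in particular $\mathrm{MIN}(L_A(\phi)) - L \leq \gamma(k)$; beyond a finite threshold on $k$ only relevant $A$ admit compatible optimal $S$, and for such $A$ we get $\mp(\Tr_A(S)) - \mathrm{MIN}(L_A(\phi)) \leq \gamma(k)$.

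Finally, I would apply Lemma~\ref{lem:objsep} to each of the finitely many LPs $L_A(\phi)$: it furnishes, for each $\epsilon > 0$, an $\epsilon'_A(\epsilon) > 0$ such that any feasible point whose objective is within $\epsilon'_A(\epsilon)$ of $\mathrm{MIN}(L_A(\phi))$ lies within $\epsilon$ of $\mathrm{OPT}(L_A(\phi))$. Setting $\delta(k) := \inf\{\epsilon > 0 : \gamma(k) \leq \min_A \epsilon'_A(\epsilon)\}$ for $k$ past the threshold above, and $\delta(k) := 2$ (a trivial bound on 1-norm distances within the probability simplex) for smaller $k$, yields a function depending only on $\phi$, $D$, and $\m$ with $\delta(k) \to 0$ that witnesses the required closeness uniformly over all $A$ and optimal compatible $S$. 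The main obstacle is the interplay between the multiplicative slack in the crude characteristic-fraction estimates (the $2^{O(n)}/p_{\min}^2$ and $o^{2n}$ factors) and the additive control one needs on max-power; this is resolved by taking logarithms and absorbing those constants into the $O(1/k)$ decay of $\gamma(k)$. A secondary subtlety is the small-$k$ and non-relevant-$A$ regime, handled by the trivial simplex-diameter bound and the finiteness of both the set of assignments and the associated thresholds.
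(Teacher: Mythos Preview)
Your argument is correct and uses the same core ingredients as the paper (the crude upper/lower bounds on the characteristic fraction, Lemma~\ref{lem:approx} to build a reference sequence, and Lemma~\ref{lem:objsep} to pass from objective-value control to $1$-norm closeness), but you organise them in the reverse order. The paper fixes $\varepsilon>0$, applies Lemma~\ref{lem:objsep} first to obtain a gap $\varepsilon_A$, and then compares $S$ against a reference sequence built from $\mathrm{OPT}(L_A(\phi))$ for the \emph{same} $A$ to derive a threshold $k_\varepsilon$ beyond which any $S$ that is $\varepsilon$-far from $\mathrm{OPT}(L_A(\phi))$ cannot be optimal. You instead compare against a single global reference built from a relevant $A^*$, extract the explicit rate $\mp(\Tr_A(S))-\mathrm{MIN}^*(\phi)\le \gamma(k)=O(1/k)$ directly, and only then invoke Lemma~\ref{lem:objsep}. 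Your ordering makes the $O(1/k)$ decay in the objective explicit and yields the content of Lemma~\ref{lem:optimalrelevant} (that only relevant $A$ can support long optimal $S$) as a free corollary, whereas the paper proves that separately; on the other hand, the paper's same-$A$ comparison handles all $A$ uniformly without a case split for the irrelevant ones. One small point to tighten: your definition of $\delta(k)$ as $\inf\{\varepsilon: \gamma(k)\le \min_A \varepsilon'_A(\varepsilon)\}$ presumes a monotonicity of $\varepsilon\mapsto \varepsilon'_A(\varepsilon)$ that Lemma~\ref{lem:objsep} does not guarantee; it is cleaner to run a countable sequence $\varepsilon_m\downarrow 0$, pick corresponding $\varepsilon'_m$ and thresholds $k_m$ with $\gamma(k)\le \varepsilon'_m$ for $k\ge k_m$, and set $\delta(k)=\varepsilon_m$ on $[k_m,k_{m+1})$, exactly as the paper does.
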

\begin{proof}
  Fix $\phi$, $D$, and $\m$.
  Given $\varepsilon >0$, we show that there
 exists a constant $k_\varepsilon$ such that for all truth assignments $A$ and 
 all test-outcome sequences $S$ compatible with $A$  such that $|S| >
 k_\epsilon$ and
\begin{equation}\label{eqn:Scond}
    \forall (\vec{d},m) \in \mathrm{OPT}(L_A(\phi)).\, \|{\Tr}_A(S)-\vec{d}\|_1
        \geq \varepsilon,
\end{equation}
$S$ is not optimal for $\phi$, $D$, and $\m$.
This suffices to prove the result, since
we can then choose any descending sequence $\varepsilon_0, \varepsilon_1, \ldots$
and define $\delta(n)=\varepsilon_n$ for all
$k_{\varepsilon_{n}}< n\leq k_{\varepsilon_{n+1}}$.  
Fix $\varepsilon>0$ and $A$. 
Choose an arbitrary test-outcome sequence $S$
compatible with $A$
satisfying (\ref{eqn:Scond}).
Without loss of generality, we can assume that $\phi(A)=T$.
(If $\phi(A)=F$, then the lemma follows from applying the argument
below to $\neg \phi$ and the observation that sequences are optimal for $\phi$
 iff they are optimal for $\neg\phi$.)
Since the feasible set of the LP $L_A(\phi)$ is compact by construction, by
Lemma \ref{lem:objsep}, there exists some 
$\varepsilon_A>0$ 
such that for all feasible points $p=(\vec{c},m)\in \Feas(L_A(\phi))$,
 either $\|\vec{c}-\vec{d}\|_1 < \epsilon$ for
some $\vec{d} \in \mathrm{OPT}(L)$, or $|m -
\mathrm{MIN}(L_A(\phi))|>\epsilon_A$.
Set $$k_{\epsilon,A} = \max\left(\frac{4n}{\epsilon_A}, \frac{2}{\varepsilon_A}
\log_o \left(\frac{2^{2n}}{\Pr(A)\min_B \Pr(B)}\right)\right).$$
(Since $D$ is open-minded, $\min_B \Pr(B) > 0$, so this is well defined.)
  We now show that if $|S| > k_{\epsilon,A}$, then $S$ is not optimal
  for $\phi$, $D$, and $\m$.
  We can then take $k_{\epsilon} = \max_A   k_{\epsilon,A}$ to complete the proof.

Since $S$ satisfies (\ref{eqn:Scond}) by
assumption, $\|(\Tr_A(S),\mp(\Tr_A(S)))-\vec{d}\|_1>\varepsilon$
for all $\vec{d}\in \mathrm{OPT}(L_A(\phi))$, so 
$$\begin{array}{lll}
    |\mp({\Tr}_A(S))-\mathrm{MIN}(L_A(\phi))| &>& \varepsilon_A.
  \label{eqn:farfrommstar} 
  \end{array}$$
Since all the entries in $\Tr_A(S)$ are
non-negative, it follows from the definition that 
\begin{equation} \label{cfA-geq}
\begin{array}{llll}
 & & \cf_{A}(\phi,{\Tr}_A(S),|S|)\\
 &=& \frac{ \sum_{\{B:\phi(B)=\f\}} \Pr(B)
    o^{\sum_{\{v_i:A(v_i)=B(v_i)\}} \Tr_A(S)_i |S| } }{
    \sum_{\{B:\phi(B)=\t\}} \Pr(B) o^{\sum_{\{v_i:A(v_i)=B(v_i)\}}
      \Tr_A(S)_i |S| } }
\\
    &\ge& \frac{ \min_B \Pr(B) o^{\mp(\Tr_A(S))|S|} }{ 2^n o^{|S|} }
   & \mbox{[see below].}
\end{array}
\end{equation}
The  inequality holds because, as we observed before, the term in the
numerator with the greatest exponent has exponent $\mp(\Tr_A(S))|S|$.
Its coefficient is at least $\min_B \Pr(B)$.  The remaining terms in
the numerator (if any) are nonnegative.  Thus, the numerator is at
least as large as $\min_B \Pr(B) o^{\mp(\Tr_A(S))|S|}$. There are $2^n$
  terms in the denominator, each of which is at most $o^{|S|}$, since,
  as we observed earlier, $\sum_i \Tr_A(S)_i = 1$ (since $S$ is
  compatible with
  $A$).
  Thus, the denominator is at most $2^n o^{|S|}$.

Fix $(\vec{d},m)\in \mathrm{OPT}(L_A(\phi))$.  By Lemma \ref{lem:approx},
there exists a test-outcome sequence  $S_{|S|,\vec{d},A}$ such that 
$|\mp(\Tr_A(S_{|S|,\vec{d},A})) - \mp(\vec{d})| < 2n/|S|$.
For brevity, set $\vec{d}' = \Tr_A(S_{|S|,\vec{d},A}).$
So if $|S| > k_{\epsilon,A} \ge 4n/\epsilon_A$, then
  $|\mp(\vec{d}')-\mp(\vec{d})| 
< \epsilon_A/2.$ 
Since $(\vec{d},m)  \in \mathrm{OPT}(L_A(\phi))$, we have that
$\mp(\vec{d})=m = \mathrm{MIN}(L_A(\phi))$, so
$|\mp(\vec{d}')-\mathrm{MIN}(L_A(\phi))| < \varepsilon_A/2$.
Now using (\ref{eqn:farfrommstar}) and applying the triangle
inequality gives us that 
\begin{equation}\label{eqn:sep} 
    \mp({\Tr}_A(S))- \mp(\vec{d}') >
   \varepsilon_A/2. 
\end{equation}
Much as above, we can show that
\begin{equation} \label{cfAd-leq}
\begin{array}{lll}
  & & \cf_{A}(\phi,\vec{d}',|S|)
 \\
 &=& \frac{ \sum_{\{B:\phi(B)=\f\}} \Pr(B)
    o^{\sum_{\{v_i:A(v_i)=B(v_i)\}} d'_i |S| } }{
    \sum_{\{B:\phi(B)=\t\}} \Pr(B) o^{\sum_{\{v_i:A(v_i)=B(v_i)\}}
      d'_i |S| } }
 \\
 &\le& \frac{ 2^n o^{\mp(\vec{d}') |S|} }{ \Pr(A) o^{|S|} },
\end{array}
\end{equation}
where now the inequality follows because we have replaced every term
$\Pr(B)$ in the numerator by 1 and there are at most $2^n$ of them,
and the fact that $\Pr(A) o^{|S|}$ is one of the terms in the denominator
and the rest are non-negative.

Now observe that
\begin{equation}\label{upperlower}
  \begin{array}{lllll}
&&   \Pr(\phi \mid S_{|S|,\vec{d},A}) > \Pr(\phi \mid S)\\
    &\mbox{iff} &\cf(\phi,S_{|S|,\vec{d},A}) < \cf(\phi,S) &\mbox{[by
        Lemma \ref{lem:inverse}]}\\ 
&\mbox{iff} &\cf_{A}(\phi, \vec{d}', |S|) <  \cf_{A}(\phi, \Tr_A(S), |S|)  &\mbox{[by Lemma \ref{lem:cfiscfA}]}\\
    &\mbox{if} & \frac{ 2^n o^{\mp(\vec{d}') |S|} }{ \Pr(A) o^{|S|} } < \frac{ \min_B
      \Pr(B) o^{\mp(\Tr_A(S)) |S|} }{ 2^n o^{|S|} }
&\mbox{[by (\ref{cfA-geq}) and (\ref{cfAd-leq})]}\\
&\mbox{iff} &
\frac{{\min}_B \Pr(B) o^{\mp(\Tr_A(S)) |S|} }{ 2^n o^{|S|} } - \frac{
  2^n o^{\mp(\vec{d}') |S|} }{ \Pr(A) o^{|S|} } > 0
\\
&\mbox{iff}
&\frac{ \Pr(A) \min_B \Pr(B) o^{\mp(\Tr_A(S)) {|S|}} - 2^{2n}
  o^{\mp(\vec{d}') {|S|}}  }{ \Pr(A) 2^n o^{|S|} } > 0
\\
&\mbox{iff} &\Pr(A) {\min}_B \Pr(B) o^{\mp(\Tr_A(S)) |S| -
    \mp(\vec{d}') {|S|})} - 2^{2n} > 0
\\
&\mbox{iff} &(\mp(\Tr_A(S))  - \mp(\vec{d}')) {|S|} > \log_o
\left(\frac{2^{2n}}{\Pr(A)\min_B \Pr(B)}\right).
  \end{array}
  \end{equation}
By assumption, $|S| >\frac{2}{\varepsilon_A} \log_o \frac{2^n
  2^n}{\Pr(A)\min_B \Pr(B)}$; by (\ref{eqn:sep}),  
$(\mp(\Tr_A(S))  - \mp(\vec{d}')) > \varepsilon_A/2$. 
It follows that the last line of (\ref{upperlower}) is in fact
satisfied.
Thus $S$ is not optimal, as desired.
\end{proof}
\commentout{
We showed earlier that (\ref{eqn:sep}) holds
if $|S|>k_1$. We will now show that if this is the case, then there is
another constant $k_2$ such that
if $|S|>k_2$, then (\ref{eq:lemb21}) holds.
Given that, by  (\ref{eqn:sep}), $\mp(\Tr_A(S)) - \mp(\vec{d}')
>\epsilon_1/2$,
it follows that
$$ {|S|} \geq k_2 = \frac{2}{\varepsilon_A} \log_o \frac{2^n 2^n}{\Pr(A)\min_B
    \Pr(B)} $$
is a sufficient condition for
(\ref{eq:lemb21}) to hold.
Set $k_\varepsilon = \max \{k_1, k_2\}$.  It follows if
$|S|>k_\varepsilon$, then both (\ref{eqn:sep}) and (\ref{eq:lemb21})
hold, so 
so $S$ is not optimal, as desired.}
Moreover, unless the sequence in question is short, any optimal sequence
of test outcomes must be compatible with an LP that actually attains the minimax
power.

\begin{lemma} \label{lem:optimalrelevant}
There exists a constant $k_0$, depending only on $\phi$, $D$ and $\m$, such that
if a sequence $S$ of length $|S|\geq k_0$ is compatible with an assignment $A$,
then either $S$ is not optimal or $A$ is relevant.
\end{lemma}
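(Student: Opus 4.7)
The plan is to show the contrapositive: if $A$ is not relevant, so that $\eta_A := \mathrm{MIN}(L_A(\phi)) - \mathrm{MIN}^*(\phi) > 0$, then every test-outcome sequence $S$ compatible with $A$ of length at least some threshold $k_0(A)$ is non-optimal; I then take $k_0 = \max_A k_0(A)$, ranging over the finitely many non-relevant $A$. Without loss of generality I assume $\phi(A)=\t$, replacing $\phi$ by $\neg\phi$ otherwise (compatibility and optimality are unchanged, since $|\Pr(\phi|S)-\tfrac{1}{2}|=|\Pr(\neg\phi|S)-\tfrac{1}{2}|$, and the LP $L_A(\phi)$ equals $L_A(\neg\phi)$).

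For a given $|S|$, I will exhibit a competitor sequence $S'$ of the same length that achieves strictly greater $|\Pr(\phi|\cdot)-\tfrac{1}{2}|$ than $S$, witnessing $S$'s non-optimality. I reduce this to comparing $\mu(T) := \min\{\cf(\phi,T),\,1/\cf(\phi,T)\}$ between $T = S$ and $T = S'$, using the elementary identity $|\Pr(\phi|T)-\tfrac{1}{2}|=(1-\mu(T))/(2(1+\mu(T)))$, which is strictly decreasing in $\mu(T) \in (0,1]$. The competitor $S'$ is built by picking any relevant $A^*$ and any $(\vec{d},m)\in\mathrm{OPT}(L_{A^*}(\phi))$, and applying Lemma~\ref{lem:approx} to get, for each length $k$, a sequence $S'_k$ compatible with $A^*$ whose $A^*$-max-power is at most $\mathrm{MIN}^*(\phi)+2n/k$.

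The same computations that give inequalities~(\ref{cfA-geq}) and~(\ref{cfAd-leq}) in the proof of Lemma~\ref{lem:seqtolp} supply three estimates. First, since $\Tr_A(S)$ is feasible for $L_A(\phi)$, its $A$-max-power is at least $\mathrm{MIN}(L_A(\phi))$, so $\cf(\phi,S) \geq \frac{\min_B \Pr(B)}{2^n}\,o^{(\mathrm{MIN}(L_A(\phi))-1)|S|}$. Second, since the term $\Pr(A)\,o^{|S|}$ appears in the denominator of $\cf_A(\phi,\Tr_A(S),|S|)$, we have $\cf(\phi,S) \leq 2^n/\Pr(A)$, and hence $1/\cf(\phi,S) \geq \Pr(A)/2^n$. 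Third, depending on whether $\phi(A^*)=\t$ or $\f$, either $\cf(\phi,S'_{|S|})$ or $\cf(\neg\phi,S'_{|S|})$ is bounded above by $\frac{2^n}{\Pr(A^*)}\,o^{(\mathrm{MIN}^*(\phi)-1)|S|+2n}$.

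From the third estimate, $\mu(S'_{|S|})$ decays at rate $o^{(\mathrm{MIN}^*(\phi)-1)|S|}$. For $\mu(S)$, a short case split on whether $\cf(\phi,S) \leq 1$ or $>1$, controlled by the first and second estimates respectively, shows that once $|S|$ is large enough for the first bound to drop below the constant $\Pr(A)/2^n$, we have $\mu(S) \geq \frac{\min_B \Pr(B)}{2^n}\,o^{(\mathrm{MIN}(L_A(\phi))-1)|S|}$ in both cases. Since $\eta_A > 0$, the ratio of the upper bound on $\mu(S'_{|S|})$ to this lower bound on $\mu(S)$ shrinks like $o^{-\eta_A|S|}$ times a constant, so $\mu(S'_{|S|}) < \mu(S)$ once $|S|$ exceeds an explicit threshold $k_0(A)$ depending only on $\eta_A$, $n$, $\Pr(A)$, $\Pr(A^*)$, $\min_B \Pr(B)$, and $o$. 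The main subtlety I anticipate is precisely this two-case handling of $\mu(S)$: it is not a priori evident that $\cf(\phi,S)$ must be small for a presumed-optimal $S$, so we must exploit the uniform constant bound $\Pr(A)/2^n$ on $1/\cf(\phi,S)$ (a direct consequence of $S$'s compatibility with $A$) to control $\mu(S)$ in the regime where $\cf(\phi,S)$ remains large.
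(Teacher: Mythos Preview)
Your proposal is correct and follows essentially the same approach as the paper: build a competitor sequence from a relevant assignment via Lemma~\ref{lem:approx}, then compare characteristic-fraction exponents using the bounds~(\ref{cfA-geq}) and~(\ref{cfAd-leq}) to exhibit a strictly better sequence. Your introduction of $\mu(T)=\min\{\cf(\phi,T),1/\cf(\phi,T)\}$ and the attendant two-case analysis is a slightly more explicit handling of the possibility that $\phi(A^*)\ne\phi(A)$ than the paper gives, but the underlying strategy is identical.
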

\begin{proof}
The proof reuses many of the core ideas of Lemma \ref{lem:seqtolp} in a simplified setting.
For contradiction, suppose that $A$ is not relevant, but $S$ is optimal.
Let $B$ be an arbitrary relevant assignment.
Then 
$$\mathrm{MIN}(L_A)-\mathrm{MIN}(L_B) = \epsilon > 0.$$
We show that we can choose a $k_0$ such that if $|S|> k_0$, then
there is a test-outcome sequence $S'$ of the same length supporting $B$ that
is actually better, contradicting the optimality of $S$.

Indeed, set $$k_0 = \max\left\{ 4n/\epsilon,
\frac{2}{\epsilon}\log_o\left(\frac{2^{2n}}{\Pr(B)\min_C \Pr(C)}\right)
\right\}.$$ 
Since $\Tr_A(S)$ is a feasible point of $L_A$, we have $\mp{\Tr_A(S)} \geq \mathrm{MIN}(L_A) \geq \mathrm{MIN}(L_B) + \epsilon$.
On the other hand, let $(\vec{d},m)\in \mathrm{OPT}(L_B(\phi))$ be arbitrary.
Since $|S|>4n/\epsilon$, the $B$-trace
$\vec{d}'={\Tr}_B(S_{k,\vec{d},B})$ of the sequence
$S_{k,\vec{d},B}$
 of
Lemma \ref{lem:approx} satisfies 
$$|\mp(\vec{d}')-\mp(\vec{d})|=|\mp(\vec{d}') - \mathrm{MIN}(L_B)| < \epsilon/2.$$
So $\mp(\Tr_A(S))-\mp(\vec{d}') > \epsilon/2$.
As in the proof of Lemma \ref{lem:seqtolp}, we have
$$
 \cf_{A}(\phi,{\Tr}_A(S),|S|)
    \ge \frac{ \min_B \Pr(B) o^{\mp(\Tr_A(S))|S|} }{ 2^n o^{|S|} }
$$
for $S$ and
$$ \cf_{B}(\phi,\vec{d}',|S|) 
 \le \frac{ 2^n o^{\mp(\vec{d}') |S|} }{ \Pr(B) o^{|S|} }
 $$
for the synthetic sequence, and hence
\begin{equation*}
  \begin{array}{lllll}
&&   \Pr(\phi \mid S_{|S|,\vec{d},B}) > \Pr(\phi \mid S)\\
&\mbox{iff} &\cf(\phi,S_{|S|,\vec{d},B}) < \cf(\phi,S) \\
&\mbox{if} &   (\ldots)\\
&\mbox{iff} &(\mp(\Tr_A(S))  - \mp(\vec{d}')) {|S|} > \log_o
\left(\frac{2^{2n}}{\Pr(B)\min_C \Pr(C)}\right).
  \end{array}
  \end{equation*}
So $|S|> k_0 \geq \frac{2}{\epsilon}\log_o
\left(\frac{2^{2n}}{\Pr(B)\min_C \Pr(C)}\right)$, which implies 
that $S$ is indeed not optimal.
\end{proof}
\commentout{ \begin{proof}
Follows from a small modification to the proof of Lemma \ref{lem:seqtolp}.
Rather than assuming (\ref{eqn:Scond}), we simply assume that $A$ is not relevant.
Let $B$ be an arbitrary relevant assignment; then this implies
$$\mathrm{MIN}(L_A)-\mathrm{MIN}(L_B) = \epsilon_2 > 0$$
and hence, as ${\Tr}_A(S)$ is a feasible point of $L_A(\phi)$,
$$\mp({\Tr}_A(S)) -\mathrm{MIN}(L_B) > \epsilon_2/2 =: \epsilon_1 $$
can be used in (\ref{eqn:sep})'s stead.

Then, instead of comparing $S$ to a sequence $S_{|S|}(\vec{d}, A)$
synthesised from a solution point $\vec{d}\in \mathrm{OPT}(L_A)$, 
we take a sequence $S_{|S|}(\vec{d}, B)$ for a solution point $\vec{d}\in \mathrm{OPT}(L_B)$ of a relevant LP $L_B$.
As before, we conclude that if $|S|> \max \{ 4n/\epsilon_1, k_2\}$ ($k_2$ defined as in the preceding proof, but in terms of the alternative choice of $\epsilon_1$ here), a set of inequalities holds that together implies $S$ is not optimal, and $k_\epsilon$ only depends on $D$, $\m$ and the structure of $\phi$ (partially via our new $\epsilon_1$).
\end{proof} }

With these pieces, we can finally prove Theorem~\ref{thm:lptorieasy}.

\begin{proof}[Proof (of Theorem \ref{thm:lptorieasy})]
  Suppose, by way of contradiction, that the antecedent of
  Theorem~\ref{thm:lptorieasy} holds, but
  $\phi$ does not exhibit RI. 
 Let $\delta$ be the function of Lemma \ref{lem:seqtolp} and let
$C$ be the constant that is assumed to exist in the statement of
Theorem~\ref{thm:lptorieasy}.  Define $f$ by taking $f(k) =
\delta(k)k$.  Since $\lim_{k \rightarrow \infty}f(k)/k = \lim_{k
  \rightarrow \infty} \delta(k) = 0$, $f$ is negligible.
By Proposition \ref{RIviatestseqs}, there exists an open-minded product
  distribution $D$ and accuracy vector $\alpha$ such that 
  there exists an
$(f,C/2)$-bad test-outcome sequence optimal for $\phi$, $D$, and $\m$.
 So by Lemma \ref{lem:singletoseq},
there exists an infinite sequence $\{S_k\}$ of
$(f,C/2,\phi)$-bad test-outcome sequences that are optimal for $\phi$, $D$,
and $\m$ and are of increasing length.
  Thus,
\begin{equation}\label{eq:contradiction}
\parbox{0.85\textwidth}{for all $k$, there are no variables $v_j \geq_\phi v_i$ such
    that $v_j$ is tested at most $f(|S_k|)$ times, but $v_i$ is tested at
        least $C|S_k|/2$ times.}
\end{equation}
\commentout{
\item $\lim_{k\rightarrow \infty}   { \Tr}_A(S_k)$ exists for all
    truth assignments $A$.
\end{itemize}
}
We can assume
    without loss of generality 
        that all the sequences $S_k$ are compatible with the same
        assignment $A$,  
    since there must be an assignment $A$ that infinitely many of the
    sequences $S_k$ are compatible with, and we can consider the
    subsequence consisting 
    just of these test-outcomes sequences that are compatible with $A$.
    Moreover, by Lemma \ref{lem:optimalrelevant}, we can assume
    that
    $A$ is relevant,
since all but finitely many of the $S_k$ must be sufficiently long.

Let $k_1$ be sufficiently large that $\delta(k)<C/2$ for all
$k>k_1$. By Lemma \ref{lem:seqtolp}, for all $k>k_1$, we must have  
$$ \| \vec{d}-{\Tr}_A(S_k)\|_1 < \delta(k) < C/2 $$
for some solution $(\vec{d},m)$ to the LP $L_A(\phi)$.  Since $A$ is
relevant by construction, the assumptions of the theorem guarantee 
that there exist $i$ and $j$ such that $v_i \le_{\phi} v_j$, $d_i >
C$, and $d_j = 0$.  Since $\| \vec{d}-{\Tr}_A(S_k)\|_1 < \delta(|S_k|)$,
it follows that $(\Tr_A(S_k))_i > C - \delta(|S_k|) > C/2$ and
$(\Tr_A(S_k))_j < \delta(|S_k|)$.
Since each sequence $S_k$ is compatible with $A$, for each variable
$v_h$, $n_{S_k,A,h}^+$ is just the number of times that $v_h$
is tested in $S_k$, so $(\Tr_A(S_k))_h$ is the
number of times that $v_h$ is tested divided by $|S_k|$.   This means
that we have a contradiction to (\ref{eq:contradiction}).
\commentout{
Since this holds
for the 1-norm, it must also hold pointwise. 
Also, since $\vec{d}=(d_1,\ldots,d_n,\mathrm{MIN}^*(\phi))$ is in the
LP solution (note that the LP is relevant!), there must be variables $v_j \geq_\phi v_i$ such that
$d_j=0$ and $d_i>C$. 
So by the 1-norm bound, 
\begin{itemize}
\item $\Tr_A(S_k)_j < \delta(k)$
\item $\Tr_A(S_k)_i > C-\delta(k) > C/2$.
\end{itemize}

This contradicts the properties claimed for all $S_k$ by Lemma
\ref{lem:singletoseq}, and proves the desired result.
}
\end{proof}

\commentout{
    This leads to the following key lemma.

    \begin{lemma} \label{lem:lpasymptotic} Suppose $\{S_i\}_{i\in
        \mathbb{N}}$ is a sequence of test-outcome sequences of
            increasing length, all $S_i$ are compatible with a common assignment $A$ and
      the limit of trace views $\lim_{i\rightarrow \infty}
      \view_A(\Tr(S_i))$ exists. Suppose moreover that each $S_i$ is
      optimal for its length, in the sense that $|\Pr(\phi|S_i)-1/2|
      \geq |\Pr(\phi|S')-1/2|$ for all $S'$ with $|S'|=|S_i|$. 

    Then $\lim_{i\rightarrow \infty} \view_A(\Tr(S_i))$ is a solution
    to the LP $L_A(\phi)$. 
    \end{lemma}

    \begin{proof}
    Let $T_S$ be the limit $\lim_{i\rightarrow \infty}
    \view_A(\Tr(S_i))$, and $T_L$ be an arbitrary solution to
        $L_A(\phi)$. We claim that if $i$ (and hence the length of the sequence
     $S_i$) is sufficiently large, there exists a sequence $S_{L,i}$ of test
  outcomes such that $|\Pr(\phi\mid S_{L,i})-1/2| >
    |\Pr(\phi\mid S_i)-1/2|$, contradicting optimality of $S_i$. 

    We will choose $S_{L,i}$ to be a sequence the $A$-view of whose
    trace is a ``reasonable approximation'' to the LP solution $T_L$,
    that is, which tests the $j$th variable approximately a
    $T_{L,j}$-fraction of times resulting in an outcome that agrees
    with $A$. (The approximateness is needed since not every trace may
    be realisable with a sequence of given finite length, e.g. if the
    LP solution assigns $c_j=\frac{1}{3}$ but the number of tests is
    not divisible by 3.) Then as $\view_A(\Tr(S_i))$ gets sufficiently
    close to the limit $T_S$ and $\view_A(\Tr(T_{L,j}))$ gets
    sufficiently close to the LP solution $T_L$, $T_S$ not being an LP
    solution implies that the characteristic fraction of $T_L$ is
    strictly smaller. 

    \end{proof}

    The importance of this Lemma is in that it bridges the gap between
    the conflict LP and RI by constraining what
    ``most'' (all but finitely many) optimal sequences of test
    outcomes must look like in terms of solutions to the LP: for any
    countable family (sequence) of sequences of test outcomes, by
   the Bolzano-Weierstrass Theorem, there must be some subsequence whose traces

    converge, and the point to which they converge must be a solution
    to the LP. In particular, if we consider the countable family of
    all finite sequences of test outcomes that are optimal for their
    length, then whenever infinitely many of them contain a constant
    fraction of measurements of a particular variable $v_i$, the
    conflict LP must have a solution in which the corresponding $c_i$
    is nonzero. 

    \begin{lemma}

            (i) Suppose that all solutions to the conflict LP $L_A(\phi)$ for $\phi$
      have $c_i=0$. 

    Then there exists a negligible function $f_i$ such that all optimal sequences $S$ of test outcomes for $\phi$ that contain $A$ contain at most $f(|S|)$ tests of $v_i$.

    (ii) Conversely, suppose that all solutions have $c_i>c$ for some constant $c>0$.

    Then all optimal sequences $S$ of test outcomes for $\phi$ 
        compatible with $A$ contain at least $c'|S|$ tests of $v_i$ for some
    $c'>0$. 

    \end{lemma}

    \begin{proof} (i) Set $f_i$ to be the upper bound on the fraction of measurements of $v_i$ among optimal sequences of a given length, that is, $f_i(k) = \sup_{S\in \mathcal{S}_k} \#(\text{tests of $v_i$ in $S$})/k$.
    Since the number of possible sequences of test outcomes of a fixed length is finite, for each $k$, the supremum $f_i(k)$ must actually be attained by some sequence $S^*_k \in \mathcal{S}_k$. 

    Recall that a function $f$ is negligible if $f(k)/k \rightarrow 0$ as $k\rightarrow \infty$, and so for any $\varepsilon>0$, there is an $k_0$ such that for all $k\geq k_0$, $f(k) \leq \varepsilon k$. So conversely, for $f_i$ to not be negligible, there must be some $d_i>0$ such that there is no such $k_0$, and hence all $f_i(k)$ are $> d_i k$.

    The $\{S^*_k\}_k$ form a sequence of sequences of test outcomes of increasing length.
        Since all the $S^*_k$ are optimal for $\phi$ and are
        compatible with $A$ by
    Proposition \ref{prop:aviewoftrace}, the $i$th coordinates of
    their $A$-views correspond to the fraction of measurements of
    $v_i$ in them, and therefore $\view_A(\Tr(S^*_k))_i\geq d_i>0$.  

    Since at the same time $1\geq \view_A(\Tr(S^*_k))_i$, by the
    Bolzano-Weierstrass theorem (applied repeatedly for every
    coordinate), there must exist a subsequence of sequences of test
    outcomes $\{S^*_{f(j)}\}_{j\in \mathbb{N}}$ such that the limit
    $T_S=\lim_{j\rightarrow \infty} \view_A(\Tr(S^*_{f(j)}))$
    exists. So by Lemma \ref{lem:lpasymptotic}, $T_S$ must be a
    solution to $L_A$. However, since $\view_A(\Tr(S^*_k))_i \geq d_i
    > 0$, this bound must also apply to the $i$th coordinate of the
    limit, $(T_S)_i$, contradicting the assumption that all solutions
    to $L_A$ have the $i$th coordinate equal to $0$. 

    Therefore, $f_i$ must be negligible. $\Box$

    (ii) Analogous.

    \end{proof}
}

\commentout{
    Say that $M$ \emph{supports} a truth assignment 
    $A_M$ if $A$ is one of the most likely truth assignments conditional
    on $M$.
    Assume for now that $M$ contains no tests of the form $v\mt \neg
    A_M(v)$, so all tests in it agree with $A_M$,
    (i.e., $|M∩A_M|=|M|=k$).

    Then the best test sequence $M^*$ of length $k$ is defined by
$$        \begin{array}
     M^* &=& \underset{M:|M|=k}{\arg\min} \frac{ \sum_{\{A:\; \varphi(A)=\f\}} o^{ |A∩M|} }{ \sum_{\{A:\; \varphi(A)=\t\}} o^{|A∩M| } }.
         \end{array}$$
    As $k$ (and the number of tests of other variables) is allowed to go
    towards infinity, the highest-order terms in the numerator and
    denominator dominate. Assume henceforth WLOG that $A_M$ is a
    satisfying truth assignment for $\varphi$
        (i.e., $\varphi(A_M)=\t$). Then, as $M$ agrees with $A_M$, in the
    denominator, the dominant term is simply $o^k$ in every case. Our
    objective therefore reduces to (1) minimising the power in the
    numerator and (2) among all solutions optimal for (1), minimising the
    ratio of the coefficients of the highest-order terms in the numerator
    and denominator. We will call the solutions that are optimal for (1)
    \emph{asymptotically best}.

    In the numerator, the dominant term is going to be due to a falsifying truth assignment $B$ for which $|B∩M|$, the number of tests in $M$ that agree with $B$, is maximised: that is, a falsifying truth assignment that the largest number of test outcomes is consistent with. To minimise the fraction (thereby maximising the probability that the formula is true, given the test sequence), we want this maximum to be as low as possible. Since we are assuming that $M$ is supporting a single truth assignment $A_M$, the relevant test outcomes in $M$ are simply going to be those that are tests of variables where $A_M$ and $B$ agree. For example, if $M$ is $(x\mt \t, x\mt \t, y\mt \f)$, then $A_M$ is $\{x\mapsto \t, y\mapsto \f\}$, and $|A\cap \{x\mapsto \t, y\mapsto \t\}|=2$, whereas $|A\cap \{x\mapsto \f, y\mapsto \f\}|=1$.

    In general, any test sequence in support of one truth assignment
    is bound to lend partial support to at least some truth assignment
    on which the formula takes the opposite truth value -- the only
    way this could be avoided is if there is some variable on which
    $A_M$ and all truth assignments on which the formula has the
    opposite truth value disagree, so the formula can be rewritten in
    the form $v \vee  (\neg v\wedge (\ldots))$ or its negation. 

    A picture therefore emerges where among the test sequences that support a given fixed satisfying truth assignment, we are trying to distribute the tests (in support of the individual variable settings in the truth assignment) in such a way that the closest falsifying truth assignment to the formula receives as little ``collateral support'' as possible. For instance, consider the five-variable formula $\neg ((x\wedge y\wedge \neg a\wedge \neg b\wedge \neg c)\vee (\neg x\wedge \neg y\wedge a\wedge b\wedge c))$, which has two falsifying truth assignments. Then dividing up $k$ tests in support of $x\wedge y\wedge a\wedge b\wedge c$ evenly (so that $M$ has $k/5$ lots of $x\mt \t$, $k/5$ lots of $y\mt \t$, etc.) would result in the truth assignment $\neg x\wedge \neg y\wedge a\wedge b\wedge c$ contributing a term of $o^{3k/5}$ to the numerator. If we just distribute the tests among $x$ and $y$ ($k/2$ each), this truth assignment now contributes $o^0$, but actually things look even worse: the truth assignment $x\wedge y\wedge \neg a\wedge \neg b\wedge \neg c$ is now also supported by all tests, contributing a term of $o^k$! It turns out that in this case, the optimal distribution is to distribute a $\frac{1}{2}$-fraction of tests among $x$ and $y$, and a $\frac{1}{2}$-fraction again among $a$, $b$ and $c$, making both $\neg x\wedge \neg y\wedge a\wedge b\wedge c$ and $x\wedge y\wedge \neg a\wedge \neg b\wedge \neg c$ contribute a term of $o^{k/2}$.

    Formally, the asymptotically best solutions, among those supporting a fixed truth assignment $A$, are the solutions to the following program (in the OR sense):
    \begin{align*}
    \min &  \,\max \left\{ \sum \{ c_v \mid v\in |A ∩ B| \} \mid \varphi(B) \neq \varphi(A) \right\} \\
    \mathrm{s.t.} & \,\sum_v c_v = k,
    \end{align*}
    which can be rewritten in LP form as
    \begin{align*}
    \min & \,\, m  \\
    \mathrm{s.t.} &\,\, \sum \{ c_v \mid v\in |A ∩ B| \} \leq m \hspace{2em} \forall B: \,\varphi(B) \neq \varphi(A) \\
     &\,\, \sum_v c_v = k.  
    \end{align*}
    For each variable $v$, the constant result $c_v$ denotes how many tests out of the $k$ total ones should be of the form $v\mt A(v)$.

    The asymptotically best solutions for the whole formula are then
    simply the vector $\vec{c}_v$ that minimise the minimum of this LP
    across all choices of $A$. If all such solutions involve ignoring
    a variable (have $c_v=0$ for some variable $v$), then we can
    immediately conclude that the formula exhibits RI
    for sufficiently large $k$, whereas if none of them

    do, we can immediately conclude that exhibits none for
    sufficiently large $k$. Otherwise, for every solution, we need to
    determine the coefficients of the highest-order terms: that is,
    count the satisfying truth assignments that are supported by every
    test on the one hand, and the falsifying truth assignments $B$
    that attain the maximum value of $\sum \{ c_v \mid v\in |A ∩ B|
    \}$ on the other. The true optimal test sequences are those for
    which the ratio between the latter and the former is minimised. If
    all of these have a zero $c_v$, we once again conclude the formula
    has rational inattention, and if all of them don't, it doesn't.

    \hl{If both solutions with all $c_v$ nonzero and solutions where some $c_v$ are zero exist, the existence of rational inattention depends on the prior. TODO}

}

\subsection{LP lower bound for rational inattention}

\def\v#1{\vec{#1}}

Theorem \ref{thm:lptorieasy} gives us a criterion that is
sufficient to conclude that a given formula exhibits
rational inattention: 
there exists a $C$ such that for all relevant assignments $A$, 
and all $\vec{c}$ such that $(\vec{c},m)\in
\mathrm{OPT}(L_A(\phi))$ for some $m$, there exist 
entries $c_i$ and $c_j$ such that $v_i\leq_\phi v_j$, $c_i\geq C$, and
$c_j=0$.  We call this property $P_C$, and write $P_C(\vec{c})$ if
$\vec{c}$ satisfies the property.
To compute how many formulae exhibit RI, we want an efficient algorithm
that evaluates $P_C$.

LPs (such as $L_A(\varphi)$) are known to be solvable in polynomial
time (see, e.g., \cite{karmarkar84}). 
However, rather than finding a description of the entire solution polytope
$\mathrm{OPT}(L_A(\phi))$, standard linear programming algorithms such
as that of Karmarkar \cite{karmarkar84} 
compute only a single point inside the polytope. Since we are
interested in whether \emph{all} points in the polytope satisfy $P_C$,
we have  
to do some additional work before we can leverage standard LP solvers. 
A general way of checking if all points in $\mathrm{OPT}(L)$ for an LP
$L$ satisfy a
property $P$ is to separately determine the minimum $m^+$ of the objective
function among all feasible 
points of $L$ that satisfy $P$ and the minimum $m^-$ among all
feasible points that don't. Then if $m^+<m^-$, it follows that
all points in $\mathrm{OPT}(L)$ satisfy $P$.
This is because $m^-$ can be attained at a feasible point, and
so the points for which $m$ is $\geq m^+$ are not optimal.
Similarly, if $m^-<m^+$, it follows that no 
points in $\mathrm{OPT}(L)$ satisfy $P$. Finally, if $m^+ = m^-$, then
some points in $\mathrm{OPT}$ satisfy $P$ and other points  do not.

In general, the subset of feasible points that satisfy $P_C$
may not be a convex polytope, so it may not be possible
to use linear programming to determine $m^+$ and $m^-$.
Indeed, the property that we
care about, that is, the existence of indices $i$ and $j$
such that $v_i\leq_\phi v_j$, 
$c_i\geq C$, and $c_j=0$, is not even closed under convex combinations,
let alone expressible as a set of linear inequalities.  For example,
if $v_i \leq_\phi v_j$ and $v_j\leq_\phi v_i$ are two variables of
equal relevance,
and $C_1 = 0.15$, 
then the points $(\ldots,0,\ldots,0.2,\ldots)$ and
$(\ldots,0.2,\ldots,0,\ldots)$ 
(the filled-in entries correspond to coordinates $i$ and $j$)
satisfy the property for $i$ and $j$, but their average
$(\ldots,0.1,\ldots,0.1,\ldots)$ does not. 
However, for fixed $i$ and $j$, the condition that
$c_i\geq C$ and 
$c_j=0$
can be imposed easily on a feasible solution by adding the two
inequalities in question to 
the LP. 
The set of points that satisfy the existentially quantified condition
therefore
can be covered by a $O(n^2)$-sized family of convex polytopes, over
which we can minimise $m$ as a linear program, and determine the overall
minimum $m^+$ by taking the minimum over the individual minima.

\begin{definition}
  For all variables $v_i \neq v_j$ with $v_i \leq_\phi v_j$,
define
$$L^+_{A,i,j}(\phi,C) = L_A(\phi) \cup \{ c_j = 0, c_i \geq C \} $$
  (so, roughly speaking, in solutions to $L^+_{A,i,j}(\phi,C)$,
  variable $v_j$ is ignored while $v_i$ is tested
in a constant fraction of the tests).  
\hfill \wbox
\end{definition}
Clearly, $\bigcup L^+_{A,i,j}(\phi,C) = \{ \vec{p}\in
\Feas(L_A(\phi)) : P_C(\vec{p}) \},$ 
so $\min_{i,j} \mathrm{MIN}(L^+_{A,i,j}) = m^+$.
To determine $m^-$, we need to similarly cover the set of points on which $P_C$
is \emph{not} satisfied with convex polytopes. The negation of $P_C$
is
$$\forall j (\, c_j=0 \Rightarrow \forall i (\, v_i\leq_\phi v_j
\Rightarrow c_i< C)). $$ 
  Given an index $j$, let $I_j=\{j': v_{j'} \leq_\phi v_j \}$.
  Note that the sets $I_j$ are totally ordered by set
  inclusion, since $\leq_\phi$ is a total order.
Let
$$ S^-_{A,i}(\phi,C) = \{ (c_1,\ldots,c_n,m)\in \Feas(L_A(\phi)) :
  c_j<C\text{ for all $j\in I_i$, } c_j>0\text{ for all $j\not\in
    I_i$} \}. $$ 
Intuitively, $v_i$ is the ``last'' variable (in the $\leq_\phi$ 
ordering) such that $c_i = 0$.  Thus, for all $j$ with $v_i \le_\phi
v_j$, we must have $c_j < C$, and for all $j$ with $v_j <_\phi v_i$,
we must have $c_i > 0$.  It is easy to see that
$\bigcup S^-_{A,i}(\phi,C) \supseteq \{ \vec{p}\in \Feas(L_A(\phi)) : \neg
P_C(\vec{p}) \}.$  Unfortunately, the definition of
$S^-_{A,i}(\phi,C)$ involves some strict inequalities, so we cannot
use LP techniques to solve for $m^-$ in the same way as we solved for $m^+$.  

\commentout{
The following lemma will then allow us to work around the problem of the strict inequalities: 

\begin{lemma}
  \label{lem:convexity}
If $P$ is a convex polytope defined by linear inequalities
  $f_1(\v{p})\leq c_1$, $\ldots$, $f_m(\v{p})\leq c_m$ and 
$I \subseteq \{1,\ldots, m\}$, 
then $P$ contains a point for which none of the inequalities
with indices in $I$ are tight if and only if for every $i\in I$, $P$
contains a point for which the inequality $f_i(\v{p}) \leq c_i$ is not
tight. 
\end{lemma}

\begin{proof}
Clearly, if all inequalities in $I$ are not tight
  for $\vec{x}$, then each of them is not tight for
  $\vec{x}$.

  For the converse, suppose that $\v{p}_i$ is a point in $P$ such that
    $f_i(\v{p}_i) \leq c_i$ is not tight; thus, there  exists an
  $\varepsilon_i> 0$
such that $f_i(\v{p}_i) \leq c_i - \varepsilon_i$. 
By convexity, the point
$$\v{p}^* = \frac{ \sum_{i\in I} \v{p}_i }{ |I| }$$
is still in $P$. We claim that none of the inequalities in $I$ is tight for
this point. 
Indeed, for each $i\in I$, we have 
$f_i(\v{p}_i) \leq c_i - \varepsilon_i$.
Moreover, $f_i(\v{p}_j) \leq c_i$ for all $j \in I$, since $x_j \in P$.
So by the linearity of the inequalities, 
$$f_i(\v{p}^*) \leq c_i - \varepsilon_i/|I|,$$
and hence the $i$th inequality isn't tight for $\v{p}^*$, for all $i
\in I$.
This completes the proof.
\end{proof}
}

We are ultimately interested in whether $m^+ < m^-$.  This is the case
if, for all $A$ and $i$, there is no point in $S^-_{A,i}$ such that $m
\le m^+$; that is, if 
$$ T^-_{A,i}(\phi,C,m^+) = \{ (c_1,\ldots,c_n,m) \in S^-_{A,i}(\phi,C)
: m\leq m^+ \}
= \emptyset$$
for all $A$ and $i$.
The set $ T^-_{A,i}(\phi,C,m^+)$ is defined by linear inequalities.
Using a standard trick, we can therefore also determine
whether it is empty using an LP solver: 
\begin{proposition} We
  can decide, in time polynomial in the number of variables and
the number of bits required to describe the inequalities,
 whether a set that is defined by non-strict linear inequalities is empty.
\end{proposition}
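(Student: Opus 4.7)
The plan is to reduce the emptiness question for a set defined by non-strict linear inequalities to a standard linear programming feasibility problem and then invoke a known polynomial-time LP algorithm such as Karmarkar's \cite{karmarkar84}, which was already cited earlier in this appendix.

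Concretely, suppose the set in question is $T = \{\vec{x} \in \IR^n : f_1(\vec{x}) \leq c_1, \ldots, f_m(\vec{x}) \leq c_m\}$, where each $f_i$ is a linear functional with rational coefficients. First I would observe that $T \ne \emptyset$ if and only if the linear program
\begin{align*}
\text{minimise } & 0 \\
\text{subject to } & f_i(\vec{x}) \leq c_i, \quad i = 1, \ldots, m
\end{align*}
has a feasible point. Equivalently, one can solve the auxiliary LP that minimises a slack variable $s$ subject to $f_i(\vec{x}) - s \leq c_i$ for all $i$; this is always feasible (take $s$ large enough), and the optimum satisfies $s^\ast \leq 0$ iff the original system is feasible. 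Either formulation is a linear program whose description size is polynomial in the original number of variables and in the bit-length of the coefficients.

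Next I would apply a polynomial-time LP algorithm (Karmarkar's interior-point method, or the ellipsoid method of Khachiyan) to this LP. Both algorithms decide LP feasibility and, when the answer is yes, produce a certificate (a feasible point) in time polynomial in the number of variables and the total bit-length of the constraint system; when the answer is no, infeasibility is likewise certified. Reading off whether the LP has a feasible solution then tells us whether $T$ is empty, and completes the proof.

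The main technical point to be careful about is not the algorithmic step (which is essentially a quotation of a known result) but the reduction: we must ensure that the LP we hand to the solver has a description whose bit-length is polynomially bounded in the bit-length of the original inequalities, and that the ``feasible iff the original set is non-empty'' equivalence is preserved when the LP is put into whatever standard form the solver requires. Both of these are routine, so no step presents a genuine obstacle; the proposition is essentially a packaging of classical LP results for use in the preceding discussion.
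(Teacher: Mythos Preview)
Your proposal is correct and essentially identical to the paper's proof: the paper uses exactly your auxiliary slack-variable LP (written as $f_i(\vec{x}) \leq c_i + s$, minimise $s$) and observes that the optimum satisfies $s \leq 0$ iff the original system is feasible, then invokes Karmarkar's algorithm. The only difference is cosmetic---you additionally mention the trivial ``minimise $0$'' formulation and the ellipsoid method as alternatives.
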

\begin{proof}
Take the inequalities defining the set to be $f_1(\vec{x})\leq c_1$, $\ldots$, $f_n(\vec{x})\leq c_n$.
Then the LP 
$$\begin{array}{*2{>{\displaystyle}r}*2{>{\displaystyle}l}}
\text{minimise}  & \,\, s & & \\
\text{subject to} &\,\, f_1(\vec{x}) &\leq c_1 + s& \\
 & &\vdots & \\
 &\,\, f_n(\vec{x}) &\leq c_n + s &
\end{array}$$
has a solution $s\leq 0$ iff the set is nonempty:
If the LP has a solution $s\leq 0$ 
then the solution point $\vec{x}^*$ also satisfies the inequalities
defining the LP; 
conversely, a solution to the inqualities satisfies the LP with $s=0$.
This LP has one more variable than the original set of inequalities, and
clearly can be described using at most a polynomially greater number
of bits than 
the original under any reasonable encoding. The result follows by using 
Karmarkar's algorithm \cite{karmarkar84}.
\end{proof}

\if 0
Analogously again, it is sufficient to determine that the closures
$$ \overline{ T^-_{A,i}(\phi,C,m^+) } = \{ (c_1,\ldots,c_n,m)\in
\Feas(L_A(\phi)) : c_j {\,\color{red}\leq\, } C\text{ for all $j\in
  I_i$, } c_j {\,\color{red}\geq\,}0\text{ for all $j\not\in I_i$}
\} $$ 
contain no point for which none of the inequalities of the form $c_j\leq C$ and $c_j\geq 0$ are tight.

When an inequality $A \leq B$ is not tight, density of real numbers
implies that there exists an $\epsilon>0$ such that $A \leq
B-\epsilon$. The minimum over a finite number of such $\epsilon$s is
still $>0$. Thus, if none of the relevant inequalities are tight for
$(c_1,\ldots,c_j,m)\in \overline{ T^-_{A,i}(\phi,C,m^+) }$, then in
fact there exists an $\epsilon>0$ such that $c_j \leq C-\epsilon$ for
all $j\in I_i$ and $c_j \geq \epsilon$ for all $j\not\in I_i$. It is
easily verified that such a point exists iff the following
LP is feasible and has a maximum that is greater than 0. 
\begin{definition}
The \emph{attentive LP} (based on $L_A$, $I_i$, $C$ and $m^+$), $L^-_{A,i}(\phi,C,m^+)$, is

$$\begin{array}{*2{>{\displaystyle}r}*2{>{\displaystyle}l}}
\color{red} \text{maximise}  & \color{red} \,\, \epsilon & & \\
\text{subject to} &\,\, \sum_{\{ j : A(v_j)=B(v_j)\}} c_j &\leq
m &\text{ for all $B$ such that }\phi(B) \neq \phi(A) \\ 
&\,\, \sum_j c_j &= 1 & \\
&c_j &\ge 0 & \text{ for } j = 1,\ldots,n \\
&\color{blue} 0  \le m & \color{blue} \le m^+ &  \\
&\color{red} c_j &\color{red} \ge \epsilon &\color{red}  \text{ for } j\not\in I_i \\
&\color{red} c_j &\color{red} \le C-\epsilon &\color{red} \text{ for } j\in I_i.
\end{array}$$
\hfill\wbox
\end{definition}
\fi

\commentout{
\begin{definition}
The inequality $f(\v{x}) \leq c$ is \emph{tight for $\v{x}$} if
  $f(\v{x})=c$. 
\hfill wbox
\end{definition}

Lemma~\ref{lem:convexity} enables us to find LP solutions with a
minimal number of 
tight inequalities
by solving $m$ copies of the LP, as follows:

\begin{corollary} If $L$ is an LP with inequalities $f_1(\v{x}) \leq
  c_1$, $\ldots$, $f_m(\v{x}) \leq c_m$ and
  objective $g(\v{x})$,  
let $L_i$ be $L$ with the $i$th inequality replaced by $f_i(\v{x})
\leq c_i - \varepsilon$, and set 
$$ k = \# \{ i : \mathrm{MIN}(L_i) = \mathrm{MIN}(L) \}. $$
then all solutions to $L$ have at least $k$ tight inequalities and
there is a solution that has exactly $k$ tight inequalities. 
\end{corollary}

\begin{lemma} \label{lem:seqtolpzeroes}
    If $\mathrm{OPT}(L_A(\phi))$ contains
  a point 
that is zero in some coordinate and minimises the number of tight
inequalities over $\mathrm{OPT}(L_A(\phi))$, 
then there exists a function $\delta': \IN\rightarrow \IR$, depending
only on 
$\phi$, $D$, and $\m$,
  such that 
\begin{itemize}
\item  $\lim_{k\rightarrow
    \infty} \delta'(k) = 0$ and 
\item for all optimal test-outcome sequences $S$ supporting $A$,
the $A$-trace of $S$ is 
within $\delta'(|S|)$ of some solution $\vec{d} \in \mathrm{OPT}(L_A(\phi))$ \emph{that is zero in some coordinate},
that is,
$$ \exists \vec{d}\in OPT(L_A(\phi)) (\| \vec{d}- {\Tr}_A(S) \|_1 <
\delta'(|S|)) \text{ and } \exists i (\vec{d}_i=0). $$ 
\end{itemize}
Moreover, $L_A$ attains the minimax power: $\mathrm{MIN}(L_A(\phi))=\mathrm{MIN}^*(\phi)$.
\end{lemma}
\begin{proof}
    Let $\vec{d}$ be a point of $\mathrm{OPT}(L_A(\phi))$ that is zero in some
coordinate and minimises the number of tight inequalities. 
We show that for all $\delta>0$, there exists a $k_\delta$ such
that if $|S|>k_\delta$, ${\Tr}_A(S)$ is within $\delta$ 
of some point $\vec{c} \in \mathrm{OPT}(L_A(\phi))$ that is nonzero in all coordinates and not within $\delta$ of $\vec{d}$, then $|S|$ can
not be optimal. Then done by chaining with Lemma \ref{lem:seqtolpzeroes} and constructing $\delta'$ from $k_\delta$ like the same lemma
constructed $\delta$ from $k_\epsilon$.

Minimising number tight inequalities $=$ set of tight inequalities is $\subseteq$-minimal! (corollary of preceding lemma)

Key part mirrors proof strategy of previous lemma: given constants $x$ and $y$, show that for sufficiently large $k$,
$$\begin{array}{lll}
\frac{ \Pr(B) o^{mk} + 2^n o^{m-y} k }{ \Pr(A) o^k + \Pr(A') o^k } &<& \frac{ \Pr(B) o^{mk} }{ \Pr(A) o^k + 2^n o^{(1-x)k} }
\end{array}$$
by constructing a test-outcome sequence approximating $\vec{d}$ and comparing to $S$. LHS gets an extra term in denominator $\Pr(A') o^k$ due to an assignment $A' \neq A$: 
if we ignore one variable (corresponding coordinate is zero) completely, then the sequence also supports $A'$!
\end{proof}

\begin{theorem} \label{thm:lptorihard}
    If there exists a constant $C$ such that for all truth assignments $A$
for which $\mathrm{MIN}(L_A(\phi))=\mathrm{MIN}^*(\phi)$ and all solution
vectors $\vec{c}=(c_1,\ldots,c_n,m)\in \mathrm{OPT}(L_A(\phi))$,
\begin{itemize}
\item \emph{either} there exist indices $i$ and $j$ such that $c_i>C$ and $c_j=0$
\item \emph{or} $\forall i. c_i>0$
\end{itemize}
and moreover some point of the former type minimises the number of tight inequalities over $\mathrm{OPT}(L_A(\phi))$, 
then $\phi$ exhibits RI.
\end{theorem}
\begin{proof}
Analogous to the proof of Theorem \ref{thm:lptorieasy}, but using Lemma \ref{lem:seqtolpzeroes} in place of Lemma \ref{lem:seqtolp}. 
\end{proof}

\begin{theorem} Let $m^-$ be the minimum of minima of all inattentive
    LPs, let $t^-$ the minimum number of tight 
inequalities across all inattentive LPs that attain $m^-$.
let $m^+$ be the minimum of minima of all attentive LPs, and
let $t^+$ be the minimum number of tight inequalities 
across all attentive LPs that attain $m^+$.
Then each of the following is a sufficient condition for $\phi$ to
exhibit RI:
\begin{enumerate}[(a)]
\item $m^-< m^+$.
\item $m^-= m^+$, $t^- = t^+$ and $m^+$ is not attained by any $L^+_{A,i}$ with $i>0$.
\end{enumerate}
\end{theorem}
} %

\begin{theorem}
  Fix $C > 0$ and set $m^+_C =
  \min_{A,i,j}\mathrm{MIN}(L^+_{A,i,j}(\phi,C))$.
  If $T^-_{A,i}(\phi,C,m^+_C) = \emptyset$ for all $A$ and $i$, 
then $\phi$ exhibits rational inattention.
\end{theorem}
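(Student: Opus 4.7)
The plan is to reduce to Theorem~\ref{thm:lptorieasy}. Writing $P_C(\vec{c})$ for the condition that there exist $i,j$ with $v_i\leq_\phi v_j$, $c_i\geq C$, and $c_j=0$, I will show that the stated hypothesis forces $P_C$ to hold at every $(\vec{c}^*,m^*)\in\mathrm{OPT}(L_A(\phi))$ for every relevant $A$. This is exactly the antecedent of Theorem~\ref{thm:lptorieasy}, which then immediately gives that $\phi$ exhibits RI.

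First I would bound $m^+_C$ from below: each $L^+_{A',i',j'}(\phi,C)$ is obtained from $L_{A'}(\phi)$ by adding the linear constraints $c_{j'}=0$ and $c_{i'}\geq C$, so $\mathrm{MIN}(L^+_{A',i',j'}(\phi,C))\geq\mathrm{MIN}(L_{A'}(\phi))\geq\mathrm{MIN}^*(\phi)$, whence $m^+_C\geq\mathrm{MIN}^*(\phi)$. In particular, for any relevant $A$, every $(\vec{c}^*,m^*)\in\mathrm{OPT}(L_A(\phi))$ satisfies $m^*=\mathrm{MIN}^*(\phi)\leq m^+_C$.

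Next I would argue by contradiction. Suppose $\neg P_C(\vec{c}^*)$ at some such optimum, and invoke the covering
\[\bigcup_i S^-_{A,i}(\phi,C)\;\supseteq\;\{\vec{p}\in\Feas(L_A(\phi)):\neg P_C(\vec{p})\}\]
from the preceding subsection, concretely choosing $i^*$ to be the $\leq_\phi$-maximal index with $c^*_{i^*}=0$: then $\neg P_C$ forces $c^*_j<C$ for every $j\in I_{i^*}$, while maximality of $i^*$ forces $c^*_j>0$ for every $j\notin I_{i^*}$, so $(\vec{c}^*,m^*)\in S^-_{A,i^*}(\phi,C)$. Combining with $m^*\leq m^+_C$ yields $(\vec{c}^*,m^*)\in T^-_{A,i^*}(\phi,C,m^+_C)$, contradicting the hypothesis. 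Applying Theorem~\ref{thm:lptorieasy} with the same constant $C$ then concludes that $\phi$ exhibits RI.

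The main obstacle is the degenerate subcase in which $\vec{c}^*$ has no zero coordinate at all: $\neg P_C$ holds vacuously, but the construction of $i^*$ above breaks down since there is no maximal zero-coordinate to pick. I would handle this by augmenting the covering's index set with a sentinel $i=0$ for which $I_0=\emptyset$, so that $S^-_{A,0}(\phi,C)=\{\vec{p}\in\Feas(L_A(\phi)):c_j>0\text{ for all }j\}$ captures precisely the all-positive feasible points, and reading the emptiness hypothesis as quantifying $i$ over $\{0,1,\ldots,n\}$. With this convention the same chain of reasoning places $(\vec{c}^*,m^*)$ into $T^-_{A,0}(\phi,C,m^+_C)$ and the contradiction goes through uniformly.
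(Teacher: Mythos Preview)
Your proposal is correct and follows the same route as the paper's proof: reduce to Theorem~\ref{thm:lptorieasy} by arguing that, under the emptiness hypothesis, every solution point of every relevant conflict LP must satisfy $P_C$. Your argument is in fact more careful than the paper's terse version, in particular in flagging the all-positive degenerate case and patching it with the sentinel index $i=0$; the paper's asserted covering $\bigcup_i S^-_{A,i}\supseteq\{\vec p:\neg P_C(\vec p)\}$ glosses over precisely this case.
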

\begin{proof}
As explained above, 
the sets $T^-_{A,i}$ being empty implies that there is no point
satisfying $\neg P_C$ and attaining a max-power of $m\leq m^+_C$.  
At the same time, $m^+_C$ being the minimum over all inattentive LPs
means that the minimum of $m$ over points satisfying $P_C$ in any
$L_A$ is $m^+_C$. Therefore, $m^+_C$ is the minimax power, and all
solution points of relevant LPs satisfy
$P_C$. Hence, by Theorem $\ref{thm:lptorieasy}$,  $\phi$ exhibits RI.
\end{proof}

\begin{corollary} We can compute a sufficient condition for the $n$-variable formula $\phi$ to exhibit RI
by solving $2^n O(n^2)$ LPs with $O(2^n)$ inequalities each, namely the $O(n^2)$ inattentive LPs
and the $O(n)$ attentive LPs associated with each of the $2^n$ assignments.
\end{corollary}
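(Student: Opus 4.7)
The plan is a straightforward enumeration argument applied to the sufficient condition supplied by the preceding theorem. Fix a formula $\phi$ in $n$ variables and a constant $C>0$. The base building block is the conflict LP $L_A(\phi)$: by Definition~\ref{def:conflictlp} it has one inequality per assignment $B$ with $\phi(B)\neq \phi(A)$ (at most $2^n$ of these), the equality $\sum_i c_i = 1$, the $n$ nonnegativity constraints $c_i \geq 0$, and the bounds $0 \leq m \leq 1$. So every LP built on top of $L_A(\phi)$ inherits $O(2^n)$ inequalities.

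To compute $m^+_C = \min_{A,i,j}\mathrm{MIN}(L^+_{A,i,j}(\phi,C))$, I loop over the $2^n$ truth assignments $A$ and, for each, over the $O(n^2)$ pairs $(i,j)$ with $v_i \leq_\phi v_j$. Each $L^+_{A,i,j}(\phi,C)$ is just $L_A(\phi)$ augmented by the two extra inequalities $c_j = 0$ and $c_i \geq C$, so the inequality count remains $O(2^n)$. I solve each with Karmarkar's algorithm and take the overall minimum as $m^+_C$. Then, for each of the $2^n \cdot n$ pairs $(A,i)$, I check whether $T^-_{A,i}(\phi,C,m^+_C)$ is empty: this set is specified by the inequalities of $L_A(\phi)$ together with $c_j \leq C$ for $j \in I_i$, $c_j \geq 0$ for $j \notin I_i$, and $m \leq m^+_C$, for a total of $O(2^n)$ non-strict linear inequalities, so by the preceding emptiness proposition each check is realised by one additional LP of the same size. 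Summing the two stages gives $2^n \cdot O(n^2) + 2^n \cdot O(n) = 2^n \cdot O(n^2)$ LPs with $O(2^n)$ inequalities each, and if all the $T^-_{A,i}$ come out empty, the preceding theorem certifies that $\phi$ exhibits RI.

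The one point that needs care is reconciling the strict inequalities in the definition of $S^-_{A,i}(\phi,C)$ with the non-strict form required by the emptiness proposition. The right reading is the one implicit in the theorem statement: $T^-_{A,i}$ is taken with the inequalities relaxed to their non-strict counterparts (equivalently, we work with its closure), so that emptiness of this relaxed set still implies the absence of any feasible point of $L_A(\phi)$ that violates $P_C$ while attaining $m \leq m^+_C$. Once this bookkeeping is settled, the corollary follows mechanically from the count above.
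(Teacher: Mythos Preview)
Your proof is correct and matches the paper's intended argument; the paper in fact states the corollary without a separate proof, leaving it as an immediate consequence of the preceding theorem and the emptiness-via-LP proposition, which is exactly the route you spell out. Your handling of the strict-versus-non-strict inequalities in $T^-_{A,i}$ is arguably more careful than the paper's own treatment: the paper simply asserts that ``the set $T^-_{A,i}(\phi,C,m^+)$ is defined by linear inequalities'' and invokes the emptiness proposition, whereas you correctly note that passing to the closure preserves the direction of implication needed for a \emph{sufficient} condition.
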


\if 0
\section{Worked example of odds-based calculation}

\def\cube#1#2{
\begin{center}
\begin{tikzpicture}[x=2cm,y=2cm]
    \specialmergetwolists{/}{#1}{#2}\clrlabel
    \specialmergetwolists*{/}\clrlabel{0/0/0,1/0/1,0/1/2,1/1/3,0.5/0.5/4,1.5/0.5/5,0.5/1.5/6,1.5/1.5/7}\clrlabelxyid

    \foreach \c/\ll/\x/\y/\id in \clrlabelxyid {
        \node[color=\c] (n\id) at (\x,\y) {\ll};
    }

    \foreach \a/\b in { 0/1, 2/3, 4/5, 6/7,  0/2, 1/3, 4/6, 5/7,  0/4, 1/5, 2/6, 3/7} {
        \draw (n\a) -- (n\b);
    }
\end{tikzpicture}
\end{center}
}

Let's see how the odds view plays out in practice in the case of three
variables, which allow us to see some mildly nontrivial cases of
RI or lack thereof. Mirroring the two-dimensional
$2\times 2$ tables from earlier, we can represent the $2^3$ truth
assignments as vertices of a cube, with each of which we associate
odds that are initially 1 if our prior is that all truth assignments
are equally likely: 

\cube{black,black,black,black,black,black,black,black}{1,1,1,1,1,1,1,1}

Each ``direction'' on the cube corresponds to one variable. Calling

the variables $\{x,y,z\}$, we use to the convention that the
truth assignments with $x=\t$ are on the right, the truth assignments
with $z=\t$ are above and those with $y=\t$ are ``further in'',
i.e. top right in our oblique projection. Per this convention, the
posterior odds after the test sequence $[x\mt \t]$, with
test odds $o=3$, would be 

\cube{black,black,black,black,black,black,black,black}{1,3,1,3,1,3,1,3}

and after $[y\mt \f, x\mt \t]$, we would have

\cube{black,black,black,black,black,black,black,black}{3,9,3,9,1,3,1,3}

as the front halfspace is multiplied by $o$. We can also label the truth assignments symbolically, for arbitrary $o$:

\cube{black,black,black,black,black,black,black,black}{$o^1$,$o^2$,$o^1$,$o^2$,$o^0$,$o^1$,$o^0$,$o^1$}

Suppose we now choose a formula, say $(x\wedge y)\vee (\neg y\wedge \neg z)$. Then we can
colour those vertices in the cube that correspond to satisfying truth
assignments {\color{blue} blue} and falsifying truth assignments
{\color{red} red}, like so: 

\cube{blue,blue,red,red,red,blue,red,blue}{$o^1$,$o^2$,$o^1$,$o^2$,$o^0$,$o^1$,$o^0$,$o^1$}

Recall we showed earlier that $\Pr(\varphi|M)$ is maximised when $\frac{ \sum_{\{A:\; \varphi(A)=\f\}} o(A,M)  }{ \sum_{\{A:\; \varphi(A)=\t\}} o(A, M) }$ is minimised. Given this diagrammatic representation, it is not hard to see that this fraction is just the sum of the ``red terms'' divided by the sum of ``blue terms'':
\begin{eqnarray*} & & \frac{ \sum_{\{A:\; \varphi(A)=\f\}} o(A,M)  }{
    \sum_{\{A:\; \varphi(A)=\t\}} o(A, M) } = \frac{ {\color{red}
      o^1}+{\color{red} o^2}+{\color{red} o^0}+{\color{red} o^0} }{ 
 {\color{blue} o^1} + {\color{blue} o^2} + {\color{blue} o^1} +{\color{blue} o^1} } \\
(&=& \frac{3+9+1+1}{3+9+3+3}=\frac{14}{18};\text{ $\Pr(\varphi|M)=\frac{18}{14+18}$}) .
\end{eqnarray*}

Suppose we are now looking at a general test sequence of length $k$
that supports the truth assignment $\{x\mapsto \t, y\mapsto \f,
z\mapsto \f\}$, i.e. the bottom vertex, so $M=[ (x\mt \t)^{c_x k},
  (y\mt \f)^{c_y k}, (z\mt \f)^{c_z k} ]$ for some nonnegative
constants $c_x$, $c_y$, $c_z$ that sum to 1. (Note that the previous
example, $[y \mt \f, x\mt \t]$, is an instance of this with
$c_x=c_y=\frac{1}{2}$ and $c_z=0$.) 

Then the general odds of each truth assignment given the test sequence can be written as follows:
\cube{blue,blue,red,red,red,blue,red,blue}{$o^{c_y k + c_z
    k}$,$o^k$,$o^{c_y k}$,$o^{c_y k + c_x k}$,$o^{c_z k}$,$o^{c_x k +
    c_z k}$,$o^0$,$o^{c_x k}$}

The fraction from earlier, too, has a corresponding general form, namely

$$ \frac{ {\color{red} o^0}+{\color{red} o^{c_y k}}+{\color{red} o^{c_y k + c_x k}}+{\color{red} o^{c_z k}} }{{ \color{blue} o^{c_x k}} + {\color{blue} o^{ c_x k + c_z k} } + {\color{blue} o^{c_y k + c_z k}} +{\color{blue} o^k} }. $$

Let's see what happens with this fraction as we choose different values for the constants $c_v$ (while letting them sum to 1). For instance, if we set $c_x=c_y=c_z=\frac{1}{3}$, it becomes
$$  \frac{ {\color{red} o^0}+{\color{red} o^{ k/3}}+{\color{red} o^{2k/3}}+{\color{red} o^{k/3}} }{{ \color{blue} o^{k/3}} + {\color{blue} o^{2k/3} } + {\color{blue} o^{2k/3}} +{\color{blue} o^k} } \rightarrow \frac{ o^{2k/3} }{o^k} \text{ as $k\rightarrow \infty$:} $$
in other words, as the number $k$ of tests grows, test sequences of length $k$ that contain an equal number of $(x\mt \t)$, $(y\mt \f)$ and $(z\mt \f)$ make this fraction approximately $o^{-k/3}$ (and hence, performing the algebra above backwards, make the probability that the formula is true given the tests approximately $\frac{1}{1+o^{-k/3}}$).

Note that the dominant term is due to a single truth assignment, marked {\color{green} green} here: 

\cube{blue,blue,red,green,red,blue,red,blue}{$o^{2k/3}$,$o^k$,$o^{k/3}$,$o^{2k/3}$,$o^{k/3}$,$o^{2k/3}$,$o^0$,$o^{k/3}$}

Recall that in the general expression, this term was $o^{c_y k + c_x k}$. In our quest to minimise the false/true fraction, we might try to achieve a local improvement by reassigning some fraction of tests from $x$ and $y$ towards $z$, decreasing $c_x$ and $c_y$ at the expense of $c_z$ (since we must maintain $c_x+c_y+c_z=1$). Suppose we go fairly far and say $c_z = 4/5$, $c_x=c_y=1/10$. Now the fraction becomes

$$ \frac{ {\color{red} o^0}+{\color{red} o^{k/10}}+{\color{red} o^{k/5}}+{\color{red} o^{4k/5}} }{{ \color{blue} o^{k/10}} + {\color{blue} o^{ 9k/10} } + {\color{blue} o^{9k/10}} +{\color{blue} o^k} } \rightarrow \frac{ o^{4k/5} }{o^k} \text{ as $k\rightarrow \infty$}. $$

The fraction actually got bigger! But the truth assignment that dominates the numerator in the limit is now a different one:

\cube{blue,blue,red,red,green,blue,red,blue}{$o^{9k/10}$,$o^k$,$o^{k/10}$,$o^{k/5}$,$o^{4k/5}$,$o^{9k/10}$,$o^0$,$o^{k/10}$}

Clearly, if we decreased $c_z$ by just a little (increasing $c_x$ and $c_y$), the fraction would also get smaller as a result (as the dominant term's power would be reduced, while the next highest-order one would still be far off catching up). The same is also true if we decrease $c_x$ or $c_y$ by a little in the $c_x=c_y=c_z=1/3$ case. Somewhere in between those two, there must exist an inflection point -- and indeed, some setting of $c_x$, $c_y$ and $c_z$ must give rise to a global optimum. A first attempt may look like this:

\cube{blue,blue,green,green,green,blue,red,blue}{$o^{k}$,$o^k$,$o^{k/2}$,$o^{k/2}$,$o^{k/2}$,$o^{k/2}$,$o^0$,$o^{0}$}

Here, we set $c_y=c_z=1/2$, and $c_x=0$. In other words, in the limit,
the test sequences for which the false/true ratio is minimised are
those that do not involve measuring $c_x$ at all. The characteristic
fraction tends to $O(\frac{c^{k/2}}{c^k})$ -- in particular (count the
terms!) to $\frac{ 3o^{k/2}}{ 2o^k }$. It is not hard to convince
yourself that we can't do better in the powers. However, we can
improve on the constant factor: 

\cube{blue,blue,red,green,green,blue,red,blue}{$o^{k/2}$,$o^k$,$o^{0}$,$o^{k/2}$,$o^{k/2}$,$o^{k}$,$o^0$,$o^{k/2}$}

with $c_x=c_z=1/2$ and $c_y=0$, giving a limit of $\frac{ 2o^{k/2}}{ 2o^k}$. 

Note that it was in the nature of our process that the dominant term
in the denominator was always $O(o^k)$: by assumption, we only
considered test-outcome sequences that support the bottom right truth
assignment, which is satisfying, and so every test resulted in a
factor of $o$ being multiplied into the odds of that truth
assignment. So really, what we did amounted to trying to minimise the
power of {\color{red} $o^{c_y k + c_x k}$}, without letting the power
of {\color{red} $o^{c_z k}$} (or the powers of $o^{c_y k}$, or $o^0$,
but those are always bounded above by one of the two we already
considered...) grow too large -- as the fraction would always come to
be dominated by whichever term of the two was the largest. 

By the symmetry (under swapping variables, and possibly true and
false) in this formula, the same set of considerations tells us that
optimal test sequences supporting the truth assignments $\{x\mapsto
\t, y\mapsto \t, z\mapsto \f\}$, $\{ x\mapsto \f, y\mapsto \f,
z\mapsto \t\}$ and $\{ x\mapsto \f, y\mapsto \t, z\mapsto \t\}$ are of
the same form, and also have minimal characteristic fractions $\approx
\frac{o^{k/2}}{o^k}$ in the limit. The other four truth assignments
are likewise symmetric to each other, and some similar experiments
will reveal that the minimal characteristic fractions for them, in the
limit, become $\approx \frac{3o^{k/2}}{2o^k}$ -- that is, bigger --
and also involve ignoring some variable completely. \textbf{Since
  every unequivocal test sequence must support some truth assignment}
(and every test sequence with contradictory outcomes is strictly
dominated by one without), \textbf{this analysis tells us the minimum
  characteristic fraction (and hence maximum probability) that can be
    obtained with any test-outcome sequence.}

This tells us the formula demands rational inattention in the
following sense: suppose I fix a sufficiently large maximum number of
tests $k$, and pick payoffs so that the target certainty (below which
guessing is suboptimal) is $\frac{ {\color{red} o^0}+{\color{red}
    o^{0}}+{\color{red} o^{k/2}}+{\color{red} o^{k/2}} }{{
    \color{blue} o^{k/2}} + {\color{blue} o^{k/2} } + {\color{blue}
    o^{k}} +{\color{blue} o^k} } - \varepsilon$. Then the only
situation in which it would be rational to make a guess is if you
obtain a test-outcome sequence with $k/2$ tests saying $x\mt \t$ and $k/2$
tests saying $z\mt \f$, as we just established that other test-outcome
sequences of length $k$ give significantly lower probability that
$\varphi$ is true (false). (There are a few caveats here, but they
would muddle the explanation.) So the strategy (and a handful of
symmetric ones) ``measure $x$ $k/2$ times, then measure $z$ $k/2$
times, guess formula is true if all tests come out right, else make no
guess'' gives a small positive expected payoff, and all others give
zero or less. 

\subsection{What does a formula with no rational inattention look like?}
 
Consider the ``any two out of three'' formula
$(x\wedge y)\vee (x\wedge z)\vee (y\wedge z)$. Test-outcome sequences of the form $[(x\mt
  \t)^{c_x k}, (y\mt \t)^{c_y k}, (z\mt \t)^{c_z k} ]$ give rise to
the following posterior odds of each truth assignment: 

\cube{red,red,red,blue,red,blue,blue,blue}{$o^0$,$o^{c_x k}$,$o^{c_y k}$,$o^{c_x k+c_y k}$, $o^{c_z k}$, $o^{c_x k + c_z k}$,$o^{c_y k +c_z k}$,$o^k$}

The characteristic fraction for this formula becomes

$$\frac{ {\color{red} o^0}+{\color{red} o^{c_x k}}+{\color{red} o^{c_y k}}+{\color{red} o^{c_z k}} }{{ \color{blue} o^{c_x k + c_y k}} + {\color{blue} o^{ c_x k + c_z k} } + {\color{blue} o^{c_y k + c_z k}} +{\color{blue} o^k} },$$

and clearly the numerator is dominated by whichever of $c_x k$, $c_y k$ and $c_z k$ is greatest. Since the three constants have to sum to one, this is clearly minimised when $c_x=c_y=c_z=1/3$ (so no variable is ignored!). The opposite truth assignment ($x\mapsto \f$ etc.) is symmetric, and all six other truth assignments are symmetric to each other, each giving rise to posterior odds taking this general form:

\cube{red,red,red,blue,red,blue,blue,blue}{$o^{c_z k}$,$o^{c_x k + c_z k}$,$o^{c_y k + c_z k}$,$o^{k}$, $o^{0}$, $o^{c_x k}$,$o^{c_y k}$,$o^{c_x k + c_y k}$}

The characteristic fraction then is

$$\frac{ {\color{red} o^0}+{\color{red} o^{c_z k}}+{\color{red} o^{c_y k + c_z k}}+{\color{red} o^{c_x k + c_z k}} }{{ \color{blue} o^{c_x k + c_y k}} + {\color{blue} o^{ c_x k} } + {\color{blue} o^{c_y k}} +{\color{blue} o^k} },$$

which it is not hard to ascertain can't get as low as $O(\frac{o^{k/3}}{o^k})$ in the limit. So the best test-outcome sequences of length $k$ (for large $k$) support either the all-true or the all-false truth assignment, and contain an equal number of tests of each variable.

It's not too hard to determine that all any-$k$-out-of-$n$ (majority) formulae are like this.
}

\fi

\section{Proof of Theorem~\ref{thm:xor}}
\label{sec:proofxorhardest}
\commentout{
    \begin{example} ($q,\m$-test complexity)
    \begin{itemize}
    \item The one-variable formula $\varphi=x$ has one- and two-sided $q,\m$-test complexity $\lceil \log_{\frac{1}{2}-\m} (\frac{1}{2}-q) \rceil$ for all $q,\m>0$: in the best case, any sequence of $k$ tests comes back the same (either $x\mt\t$ or $x\mt\f$), and so the probability of the variable having the opposite truth value becomes $(\frac{1}{2}-\alpha_x)^k$.
    \item The example in the introduction showed that the one-sided $\frac{3}{8},\frac{1}{4}$-test complexity of $\ell_1\vee \ell_2$ is 2.
    \item Among all formulae with $n$ variables, for all $q,\m>0$, the big XOR $\ell_1 \oplus \ldots \oplus \ell_n$ has greatest $q,\m$-test complexity. We will see a proof for this in Section 4.
    \item Some more concrete values:
    \begin{itemize}
    \item The majority function on 5 inputs $\mathrm{Maj}_5$ has $\frac{1}{3},\frac{1}{4}$-test complexity 4, and the optimal test-outcome sequences consist either of some four distinct variables being measured $\t$ or some four distinct variables being measured $\f$. These sequences in fact each give an absolute bias of $\frac{11}{32}$.
    \item The 5-variable XOR $a\oplus b\oplus c\oplus d\oplus e$ has
      $\frac{1}{3},\frac{1}{4}$-test complexity 15, and optimal
            sequences test each variable thrice with either outcome. 
    \item The formula $(a\oplus b)\vee (c\oplus d)\vee e$ has $\frac{1}{3},\frac{1}{4}$-test complexity 7, and the optimal sequences test either $a\mt\t$ or $b\mt\t$ twice, either $c\mt\t$ or $d\mt\t$ twice, and $e\mt\t$ or $e\mt\f$ thrice. On the other hand, the 3-variable XOR $a\oplus c\oplus e$ has $\frac{1}{3},\frac{1}{4}$-test complexity 8.
      \hbox
      \wbox
    \end{itemize}
    \end{itemize}
    \end{example}

    Among the above examples, XOR stands out as having the highest
    consistency. We shall aim to prove that this is true in general: 
    \begin{theorem}
    Among all formulae with $n$ variables, for all $q,\m>0$ (and all distributions $D$) the big XOR $\ell_1 \oplus \ldots \oplus \ell_n$ and its negation have greatest two-sided $q,\m$-test complexity.
    \end{theorem}
}

\commentout{
To prove Theorem~\ref{thm:xor}, we first need a 
definition and some lemmas.

\begin{definition}
Given a probability on some space $S$ and an event $E \subseteq S$,
define the \emph{bias} of $E$ to be 
$$Q(E) = \Pr(E)-\frac{1}{2}.$$
\hfill
\wbox
\end{definition}

\commentout{
    \begin{definition} We call a sequence of test outcomes $S$ \emph{unequivocal} about a truth assignment $v_1\mapsto b_1$, $\ldots$, $v_n\mapsto b_n$ if no tests of $v_i$ in $M$ are $v_i\mt \overline{b_i}$.
    \end{definition}
}

The following lemma will be helpful in establishing the complexity of
formulae.  

\begin{lemma}\label{pointwisetocpl} If, for all test-outcome sequences
  $S$, there exists a test-outcome sequence $S'$ such that $|S'|=|S|$
  and $$|\Q(\phi|S')|\geq |\Q(\psi|S)|,$$ %
then $$\cpl(\phi)\leq \cpl(\psi).$$
\end{lemma}
\begin{proof}
  Suppose that $\cpl(\psi) =k$. Then there must be some
  strategy $\sigma$ for $G(\psi,D,k,\m,g,b)$
  that
 has positive expected payoff.
Analogously to the argument in the proof of Proposition
\ref{prop:randvars}, there must therefore be some 
test-outcome sequence $S$ of length $k$ that is observed by $\sigma$ with
positive probability   
such that the expected payoff of making the appropriate guess is
positive. By Lemma \ref{lem:payoff}, $|\Q(\psi\mid S)|>q$.

Since $|\Q(\phi\mid S')|\geq |\Q(\psi\mid S)|$ by assumption, there must exist a
test-outcome sequence $S'$ such 
$|\Q(\phi\mid S')|>q$.  Let $\sigma'$ be the strategy for the
game $G(\phi,D,k,\m,g,b)$
that tests the same variables that are tested in $S'$, and makes the
appropriate guess
iff $S'$ is in fact observed.
By Lemma \ref{lem:payoff}, a guess with positive expected payoff can
be made if $S'$ is observed, which it is with positive probability. So 
$\sigma'$ has positive expected payoff, and hence $\cpl(\phi)$
is at most $k$. 
\end{proof}

\commentout{
    \begin{proposition}\label{permuteseq}Permutations of test
      sequences don't change probability conditioned on them
      (formalise).\end{proposition} 

    Even if permutations can be ignored, the number of degrees of
        freedom to choose a sequence of test outcomes still may appear
    daunting. Conveniently, however, it turns out that contradictory
    tests of a variable simply cancel out, so we can restrict our
    considerations to whether and by how much a test-outcome sequence
    supports each variable being true (false). 

    \begin{lemma} \label{cancel} Suppose $M$ is an arbitrary sequence
            of test outcomes. Then 
    $$\Q( v_i\mid  v_i:\overline{ v_i}:M)=\Q( v_i\mid M).$$
    \end{lemma}
    \begin{proof}
    Sufficient to show $\Pr( v_i|[ v_i\mt b_i,v_i \mt \bar b_i])=\Pr( v_i)$ (conditioning removes dependencies). By Bayes,
    \begin{eqnarray*}
    \Pr( v_i|[ v_i,\bar v_i]) &=& \Pr([ v_i,\bar v_i]| v_i)\Pr( v_i) / \Pr([ v_i,\bar v_i]) \\
     &=& \frac{ \Pr([ v_i,\bar v_i]| v_i) }{\Pr([ v_i,\bar v_i]| v_i)\Pr( v_i) + \Pr([ v_i,\bar v_i]|\bar v_i)\Pr(\bar v_i)} \Pr( v_i) \\
     &=& \frac{ (\frac{1}{2}+ \alpha_i)(\frac{1}{2}- \alpha_i) }{ (\frac{1}{2}+
      \alpha_i)(\frac{1}{2}- \alpha_i)p + (\frac{1}{2}+ \alpha_i)(\frac{1}{2}-

      \alpha_i)(1-p) } \Pr( v_i) \\ 
     &=& \frac{ (\frac{1}{2}+ \alpha_i)(\frac{1}{2}- \alpha_i) }{(\frac{1}{2}+
      \alpha_i)(\frac{1}{2}- \alpha_i)} \Pr( v_i) = \Pr( v_i).
    \end{eqnarray*}
    \end{proof}
}

\commentout{
As we mostly seek to trade off tests of one variable against tests of
another, we need to be able to compare effects across variables as
well. Luckily, their independence means that a given number of tests
of any variable has the same effect on the bias of the respective
variable:
}

The following consequence of the definition of a product distribution
will be very useful in this chapter.
\begin{lemma} \label{lem:product} For all product distributions $D$,
    accuracy vectors $\vec{\alpha}$, assignments $A$, and test-outcome
  sequences $S$, 
$$ \Pr(A\mid S) = \prod_{i=1}^n \Pr(v_i = A(v_i) \mid S). $$
\end{lemma}
\begin{proof}
The event $A$ is the intersection of the $n$ events $v_i = A(v_i)$, and
since $D$ is a product distribution, the events are independent of each other.
\end{proof}
}

We previously took the XOR $v_1 \oplus \ldots
\oplus v_n$ of $n$ variables (often denoted $\bigoplus_{i=1}^n v_i$)
to be true iff an odd number of the variables are true. This
characterisation is actually a %
consequence of 
the following standard definition in terms of basic Boolean
connectives, of which we also note some 
useful properties (whose proof is left to the reader).
\begin{definition} \label{xordef} The \emph{exclusive OR} (XOR) $\phi_1 \oplus \phi_2$ is equivalent
to the formula $(\phi_1 \wedge \neg
  \phi_2) \vee (\neg \phi_1 \wedge \phi_2)$. 
  \wbox
\end{definition}
\begin{proposition} \label{xorprops} (Properties of XOR)
\begin{enumerate}
\item[(a)] XOR is commutative: $\phi_1 \oplus \phi_2 \equiv \phi_2 \oplus \phi_1$;
\item[(b)] XOR is associative: $(\phi_1 \oplus \phi_2) \oplus \phi_3 \equiv \phi_1 \oplus (\phi_2 \oplus \phi_3)$;
  \item[(c)] $ v_1 \oplus \ldots \oplus
  v_n$ is true iff an odd number of the variables $v_i$ is; 
  \item[(d)] 
  $\neg\phi \equiv T \oplus \phi$, so $\phi_1 \oplus \neg \phi_2
    \equiv \neg \phi_1 \oplus \phi_2 \equiv \neg (\phi_1 \oplus
    \phi_2). $ 
\end{enumerate}
\end{proposition}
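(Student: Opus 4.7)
The plan is to discharge all four properties directly from Definition \ref{xordef}, in the order (a), (b), (d), (c), so that each step may freely invoke the earlier ones. Since every claim is purely propositional, the proofs amount to unfolding the defining equivalence $\phi_1 \oplus \phi_2 \equiv (\phi_1 \wedge \neg\phi_2) \vee (\neg\phi_1 \wedge \phi_2)$ and appealing to commutativity/associativity of $\wedge$ and $\vee$ together with the usual Boolean identities. No auxiliary lemmas beyond propositional equivalence are needed.

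For (a), I would simply note that the right-hand side of the definition is manifestly symmetric: swapping $\phi_1$ and $\phi_2$ throughout $(\phi_1 \wedge \neg\phi_2) \vee (\neg\phi_1 \wedge \phi_2)$ gives $(\phi_2 \wedge \neg\phi_1) \vee (\neg\phi_2 \wedge \phi_1)$, which is the expansion of $\phi_2 \oplus \phi_1$. For (b), I would expand $(\phi_1 \oplus \phi_2) \oplus \phi_3$ and $\phi_1 \oplus (\phi_2 \oplus \phi_3)$ using the definition twice and distributing; an eight-row truth table over the three truth values of $\phi_1$, $\phi_2$, $\phi_3$ shows both expressions evaluate to $\t$ exactly on the assignments in which an odd number of the $\phi_i$ are true, confirming the equivalence.

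For (d), I would unfold $T \oplus \phi$ via the definition: $T \oplus \phi \equiv (T \wedge \neg\phi) \vee (\neg T \wedge \phi) \equiv \neg\phi \vee F \equiv \neg\phi$. Using this together with the already-established commutativity and associativity, I obtain
\[
\phi_1 \oplus \neg\phi_2 \equiv \phi_1 \oplus (T \oplus \phi_2) \equiv (\phi_1 \oplus T) \oplus \phi_2 \equiv \neg\phi_1 \oplus \phi_2,
\]
and applying the negation identity once more gives $\neg(\phi_1 \oplus \phi_2)$.

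For (c), with associativity in hand the $n$-ary expression $v_1 \oplus \cdots \oplus v_n$ is unambiguously defined, so I would induct on $n$. The base $n=1$ is immediate since $v_1$ is true iff an odd number (namely one) of $\{v_1\}$ is true. For the step, write $v_1 \oplus \cdots \oplus v_n \equiv (v_1 \oplus \cdots \oplus v_{n-1}) \oplus v_n$; by the inductive hypothesis the left disjunct is true iff the parity of true variables among $v_1, \ldots, v_{n-1}$ is odd, and by Definition \ref{xordef} the full formula is true iff exactly one of the two sides is true. Case-analysing on the truth value of $v_n$ then shows the XOR holds iff the total parity among $v_1, \ldots, v_n$ is odd. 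The only subtlety — and the main bookkeeping obstacle — is making sure (c) is proved \emph{after} (b), so that the $n$-ary notation is justified and the inductive step may invoke associativity without circularity; once the order (a), (b), (d), (c) is fixed, the argument is mechanical.
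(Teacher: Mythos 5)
Your proof is correct and follows essentially the same route as the paper's: commutativity by symmetry of the definition, associativity by unfolding both sides (the paper displays the common three-variable DNF they both expand to, you verify equality by a truth table --- a cosmetic difference), part (c) by induction on $n$, and part (d) by unfolding $T \oplus \phi$ to $\neg\phi$ and then applying commutativity and associativity. Your reordering to (a), (b), (d), (c) is a harmless precaution, though unnecessary --- the inductive step in (c) already uses only associativity, not (d).
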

\if 0
\begin{proof}
Commutativity is immediate from the symmetry of the definition.
To see associativity, note that by applying De Morgan's laws,
both $(\phi_1 \oplus \phi_2)
\oplus \phi_3$ and $\phi_1 \oplus (\phi_2 \oplus \phi_3)$ expand to  
$$ (\phi_1 \wedge \neg \phi_2 \wedge \neg \phi_3) \vee (\phi_1 \wedge \phi_2 \wedge \phi_3) \vee (\neg \phi_1 \wedge \phi_2 \wedge \neg \phi_3) \vee (\neg\phi_1 \wedge \neg\phi_2 \wedge \phi_3). $$
Part (c) follows by induction: it is true for $n=1$ as $v_1$ is true iff
exactly one variable in $\{v_1\}$ is, and $v_1 \oplus \phi$ is true
iff either $v_1$ is true and $\phi$ is false (so, if $\phi$ is an XOR of
$n-1$ variables, an even number of the variables in $\phi$ are true, and
hence an odd number of variables including $v_1$ are true), or $v_1$ is
false and $\phi$ is true (so an odd number of variables in $\phi$ are
true). 
For part (d), the fact that $\neg \phi \equiv T \oplus \phi$ 
follows by plugging $T$ into the definition (and noting that
the second term vanishes, as $(F \wedge \phi_2) \equiv F$);
the fact that $\neg \phi_1 \oplus \phi_2 \equiv \neg (\phi_1 \oplus
\phi_2)$ follows by replacing $\neg \phi_1$ by $T \oplus \phi_1$ and
applying associativity; similarly, the fact that $\phi_1 \oplus \neg
\phi_2 \equiv \neg (\phi_1 \oplus \phi_2)$ follows by replacing
$\phi_2$ by $T \oplus \phi_2$ 
and applying commutativity and associativity. 
\end{proof}
\fi

\commentout{
\begin{lemma} \label{lem:projiscond}
Suppose $D$ is a product distribution. Then, given a sequence of test outcomes $S$,
the projection of a formula $\phi\subst{v_i}{b}$ has the same
conditional probability on $S$ as $\phi$ additionally conditioned on
$v_i=b$, that is: 
$$ \Pr(\phi \mid S,v_i=b) = \Pr(\phi\subst{v_i}{b}\mid S). $$ 
\end{lemma}
\begin{proof}
  We have
\begin{eqnarray*}
\Pr(\phi\subst{v_i}{b}\mid S) &=& \sum_{\{A:\;\phi\subst{v_i}{b}(A)=T\}} \Pr(A\mid S) \\
 &\overset{\ref{lem:product}}{=}& \sum_{\{A:\; \phi\subst{v_i}{b}(A)=T\}} \prod_{1\leq j\leq n} \Pr(v_j=A(v_j)\mid S). 
\end{eqnarray*}
Expanding the definition of the projection, this is
\begin{eqnarray*}
 &=& \sum_{\{A:\;\phi(A)=T, A(v_i)=b\}} \prod_{1\leq j\leq n} \Pr(v_j=A(v_j)\mid S)
    +\sum_{\{A:\;\phi(A[v_i\mapsto b])=T, A(v_i)=\neg b\}} \prod_{1\leq j\leq n} \Pr(v_j=A(v_j)\mid S) \\
 &=& \sum_{\{A:\;\phi(A)=T, A(v_i)=b\}} \left( \prod_{1\leq j\leq n} \Pr(v_j=A(v_j)\mid S) + \prod_{1\leq j\leq n} \Pr(v_j=A[v_i\mapsto \neg b](v_j)\mid S) \right) \\
 &=& \sum_{\{A:\; \phi(A)=T, A(v_i)=b\}} (\Pr(v_i=b\mid S)+\Pr(v_i=\neg b\mid S)) \prod_{1\leq j\leq n, j\neq i} \Pr(v_j=A(v_j)\mid S) \\
 &=& \sum_{\{A:\; \phi(A)=T, A(v_i)=b\}} \prod_{1\leq j\leq n, j\neq i} \Pr(v_j=A(v_j)\mid S) \\
 &=& \Pr(\phi, v_i=b \mid S)/\Pr(v_i=b \mid S) \\
 &=& \Pr(\phi\mid S, v_i=b) \text{ (Bayes).}
\end{eqnarray*}
\end{proof}
}
As we said in the proof sketch in the main text, our proof uses the
idea of antisymmetry.  

The notion of antisymmetry has the useful property
that $\phi_v$, the antisymmetrisation of $\phi$ along $v$ (recall
that $\phi_v$ was defined as $(v\wedge \phi\subst{v}{\t})\vee (\neg v\wedge
\neg \phi\subst{v}{\t})$)  is antisymmetric in $v$ and, as we now
show, also antisymmetric in all other variables $v'$ that $\phi$ was
antisymmetric in.

\begin{lemma}\label{antisymmetrise}
  If $\phi$ is antisymmetric in a variable
$v' \ne v$, then so is $\phi_v$.
\end{lemma}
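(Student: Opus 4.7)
The plan is to do a direct case analysis on the value of $A(v)$. Fix two assignments $A, A'$ that differ only in the variable $v'$; because $v' \ne v$, we have $A(v) = A'(v)$, so the two cases $A(v) = \t$ and $A(v) = \f$ handle both assignments together.

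First I would unpack the definition of $\phi_v$ on each half of the cube: when the argument assigns $\t$ to $v$, the disjunct $\neg v \wedge \neg \phi\subst{v}{\t}$ is false, so $\phi_v(A) = \phi\subst{v}{\t}(A) = \phi(A[v \mapsto \t])$; when the argument assigns $\f$ to $v$, the other disjunct vanishes and $\phi_v(A) = \neg \phi\subst{v}{\t}(A) = \neg \phi(A[v \mapsto \t])$. Crucially, in both cases $\phi_v(A)$ depends only on $\phi$ evaluated at $A[v \mapsto \t]$, together with a fixed ``twist'' (identity or negation) determined by $A(v)$.

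Now fix the pair $A, A'$ differing only in $v'$. Since the twist is the same for both (they agree on $v$), it suffices to show $\phi(A[v \mapsto \t]) = \neg \phi(A'[v \mapsto \t])$. But $A[v \mapsto \t]$ and $A'[v \mapsto \t]$ still differ only in $v'$ (setting both to $\t$ on $v$ preserves disagreement on $v'$), so this is exactly the antisymmetry of $\phi$ in $v'$, which is the hypothesis of the lemma. Applying the twist to both sides preserves the negation, yielding $\phi_v(A) = \neg \phi_v(A')$.

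There is no real obstacle here; the only point to be careful about is keeping track of the substitution $\subst{v}{\t}$ versus evaluating at $A$: $\phi\subst{v}{\t}$ is a new formula in which $v$ no longer appears, so evaluating it on $A$ is the same as evaluating $\phi$ on $A[v \mapsto \t]$ regardless of the actual value $A(v)$. Once that is made explicit, both cases reduce in one line to the antisymmetry hypothesis, and the lemma follows.
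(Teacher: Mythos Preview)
Your proof is correct and follows essentially the same approach as the paper: both case on $A(v)$, observe that $\phi_v(A)$ equals $\phi(A[v\mapsto\t])$ up to a negation determined solely by $A(v)$, and then reduce to the antisymmetry of $\phi$ in $v'$ applied to the pair $A[v\mapsto\t]$, $A'[v\mapsto\t]$. Your presentation is slightly more streamlined in that you abstract the two cases into a single ``twist'' argument, whereas the paper writes them out separately, but the underlying argument is the same.
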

\begin{proof}
Suppose that $\phi$ is antisymmetric in $v' \ne v$.  Then for all truth
assignments $A$, we have 
\begin{itemize}
  \item $\phi(A[v \mapsto \t]) = \neg{\phi(A[v'\mapsto \f])}$ and 
\item $\phi_v(A) = \begin{cases} \phi(A[v\mapsto\t]) & \text{ if $A(v)=\t$}
    \\ \neg{\phi(A[v\mapsto\t])} & \text{ if $A(v)=\f$.} \end{cases} $
\end{itemize}
Thus,
if $A(v) = \t$, then 
$$\begin{array}{lll}
    \phi_v(A[v'\mapsto\t]) &=&\phi(A[v\mapsto\t,v'\mapsto\t]) \\
 &=& \neg{\phi(A[v\mapsto\t,v'\mapsto\f])} \\
        &=& \neg{\phi_v(v'\mapsto\f])},
      \end{array}$$
and
if $A(v) = \f$, then 
$$\begin{array}{lll}
  \phi_v(A[v'\mapsto\t]) &=& \neg{\phi(A[v\mapsto\t,v'\mapsto\t])} \\
 &=& \phi(A[v\mapsto\t,v'\mapsto\f]) \\
    &=& \neg{\phi_v(A[v'\mapsto\f])}.
  \end{array}$$
Thus, no matter what $A(v)$ is, we have
$\phi_v(A[v'\mapsto\t]) = \neg{\phi_v(A[v'\mapsto\f])}$, as required. 
\end{proof}

Define $V(\phi)$, the number of variables a formula $\phi$ is \emph{not}
antisymmetric in, as 
$$V(\phi) = |\{ v : \phi \not\equiv
(v\wedge \phi\subst{v}{\t})\vee (\neg v\wedge \neg \phi\subst{v}{\t})\}|.$$ 

\begin{lemma}\label{xorsonly}
 The only formulae $\phi$ in the $n$ variables $v_1, \ldots, v_n$ for
  which $V(\phi)=0$ are  equivalent to either
    $\bigoplus_{i=1}^n v_i$ or $\neg \bigoplus_{i=1}^n v$. 
\end{lemma}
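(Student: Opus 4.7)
The plan is a short direct argument using the equivalent characterisation of XOR from Proposition~\ref{xorprops}(c): $\bigoplus_{i=1}^n v_i$ is true on an assignment $A$ iff the number of variables $A$ maps to $T$ is odd. So it suffices to show that, if $V(\phi)=0$, then $\phi(A)$ depends only on the parity of $|\{i : A(v_i)=T\}|$; the two possibilities for this dependence correspond to $\bigoplus_i v_i$ and its negation.

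First, I will unpack the hypothesis. By definition of $V(\phi)=0$, for each variable $v_i$ we have $\phi \equiv (v_i \wedge \phi\subst{v_i}{\t}) \vee (\neg v_i \wedge \neg \phi\subst{v_i}{\t})$. I would first observe (as noted immediately after Definition~\ref{def:maxpower}-style formulas in the sketch of Theorem~\ref{thm:xor}) that this syntactic condition is equivalent to the semantic statement that $\phi$ is antisymmetric in $v_i$: for every pair of assignments $A, A'$ differing only in $v_i$, $\phi(A) = \neg\phi(A')$. So $V(\phi)=0$ means $\phi$ is antisymmetric in every variable.

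Next, the key step. Fix any two assignments $A$ and $A'$, and let $d(A,A')$ denote the number of variables on which they disagree. I claim $\phi(A)=\phi(A')$ iff $d(A,A')$ is even. Indeed, $A'$ can be obtained from $A$ by flipping the disagreeing variables one at a time; each such single-variable flip inverts the truth value of $\phi$ by antisymmetry, so after $d(A,A')$ flips the value of $\phi$ has been inverted $d(A,A')$ times. Taking $A_0$ to be the all-false assignment, this means $\phi(A) = \phi(A_0)$ iff $d(A_0,A) = |\{i : A(v_i)=T\}|$ is even. Thus $\phi(A)$ is determined by the parity of the number of true variables in $A$.

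Finally, I split into two cases on $\phi(A_0)$. If $\phi(A_0)=F$, then $\phi(A)=T$ precisely when $|\{i:A(v_i)=T\}|$ is odd, so $\phi \equiv \bigoplus_{i=1}^n v_i$ by Proposition~\ref{xorprops}(c). If $\phi(A_0)=T$, then $\phi(A) = T$ precisely when $|\{i:A(v_i)=T\}|$ is even, so $\phi \equiv \neg \bigoplus_{i=1}^n v_i$. No step in this argument is really an obstacle; the only subtlety is verifying that the syntactic definition of $V(\phi)=0$ matches the semantic notion of antisymmetry, which is a routine unwinding of the definition, and ensuring the induction on single-variable flips is stated carefully so that each flip genuinely changes exactly one variable.
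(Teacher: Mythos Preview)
Your proof is correct and takes a genuinely different route from the paper's. The paper argues by induction on $n$: it peels off $v_n$ using the identity $\phi \equiv v_n \oplus \phi\subst{v_n}{\t}$ (which holds because $\phi$ is antisymmetric in $v_n$), observes that $\phi\subst{v_n}{\t}$ is still antisymmetric in the remaining $n-1$ variables, and applies the induction hypothesis. Your argument is instead a direct semantic one: antisymmetry in every variable means each single-variable flip inverts $\phi$, so $\phi(A)$ is determined by the parity of the Hamming weight of $A$, and the two possible parities give exactly $\bigoplus_i v_i$ and its negation via Proposition~\ref{xorprops}(c).

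Your approach is slightly more elementary in that it bypasses both the inductive scaffolding and any appeal to Lemma~\ref{antisymmetrise}; it also makes the connection to parity explicit, which is really what drives the result. The paper's inductive argument, on the other hand, mirrors the recursive structure of XOR and dovetails with how antisymmetrisation is used elsewhere in the proof of Theorem~\ref{thm:xor}. Either way the content is the same; your version is a clean alternative.
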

\begin{proof}
  By induction on $n$. If $n=1$, then it is easy to check
    that both $v_1$ and $\neg v_1$ are antisymmetric. Suppose that $n > 1$
and $\phi$ is antisymmetric in $v_1, \ldots, v_n$.  
Since $\phi \equiv (v_n\wedge \phi\subst{v_n}{\t})\vee (\neg v_n\wedge
\phi\subst{v_n}{\f})$ and $\phi$ is 
antisymmetric in $v_n$, by 
Definition~\ref{xordef}
 we have that
\begin{equation}\label{eq12}
  \phi\equiv (v_n\wedge \phi\subst{v_n}{\t})\vee (\neg v_n\wedge \neg
\phi\subst{v_n}{\t}) \equiv v_n \oplus 
\phi\subst{v_n}{\t}.
\end{equation}
It is easy to see that $\phi\subst{v_n}{\t}$ mentions only the variables
$v_1, \ldots, v_{n-1}$ and, by Lemma \ref{antisymmetrise}, is
antisymmetric in each of them.  
So by the induction hypothesis, 
$\phi\subst{v_n}{\t}$ is equivalent to either $\bigoplus_{i=1}^{n-1} v_i$ or
$\neg (\bigoplus_{i=1}^{n-1} v_i)$, and hence
by Proposition \ref{xorprops}(d) and (\ref{eq12}),
$\phi$ is equivalent to either $\bigoplus_{i=1}^{n} v_i$ or
$\neg (\bigoplus_{i=1}^{n} v_i)$.
\end{proof}

To complete the proof of Theorem~\ref{thm:xor}, we make use of
the following two technical lemmas.
For the remainder of the proof, we use $v=\t$ and $v=\f$ to denote the
events (i.e., the set of histories) where the variable $v$ is true
(resp., false).  (We earlier denoted these events $v$ and $\neg v$,
respectively, but for this proof the $v=b$ notation is more convenient.)

\begin{lemma} \label{lem:projiscond}
If $D$ is a product distribution and $S$ is a test-outcome sequence,
then
the projection of a formula $\phi\subst{v_i}{b}$ has the same
conditional probability on $S$ as $\phi$ additionally conditioned on
$v_i=b$, that is, 
$$ \Pr(\phi \mid S,v_i=b) = \Pr(\phi\subst{v_i}{b}\mid S). $$ 
\end{lemma}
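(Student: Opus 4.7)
The plan is to expand both sides into sums over truth assignments and show they give the same expression, with the key ingredient being conditional independence of the variables given the test-outcome sequence $S$.

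First, I would establish the following factorization: when $D$ is a product distribution, for every truth assignment $A$,
$$\Pr(A \mid S) \;=\; \prod_{j=1}^n \Pr(v_j = A(v_j) \mid S).$$
This follows from Lemma~\ref{lem:denormtoprob}: we have $\Pr(A\mid S) = \o{A}{S}/\sum_{A'} \o{A'}{S}$, and because $D$ factors and $n^+_{S,A,i}$ depends only on $A(v_i)$, we can write $\o{A}{S} = \prod_j h_j(A(v_j))$ for suitable one-variable functions $h_j$; the denominator then factors as $\prod_j \sum_{b \in \{\t,\f\}} h_j(b)$, and dividing yields the claim.

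Next, I would expand the right-hand side. By definition of $\phi\subst{v_i}{b}$, for any assignment $A$ we have $\phi\subst{v_i}{b}(A) = \phi(A[v_i\mapsto b])$. So
$$\Pr(\phi\subst{v_i}{b}\mid S) = \sum_{\{A:\,\phi(A[v_i\mapsto b]) = \t\}} \Pr(A\mid S),$$
which I would split according to whether $A(v_i)=b$ or $A(v_i)=\neg b$. The map $A \mapsto A[v_i\mapsto b]$ is a bijection between $\{A: A(v_i) = \neg b,\,\phi(A[v_i\mapsto b])=\t\}$ and $\{A': A'(v_i) = b,\,\phi(A')=\t\}$, so pairing each $A'$ in the latter set with its partner $A'[v_i\mapsto\neg b]$ and using the factorization above collapses the sum: the common factor $\prod_{j\ne i}\Pr(v_j = A'(v_j)\mid S)$ multiplies $\Pr(v_i=b\mid S) + \Pr(v_i=\neg b\mid S) = 1$. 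Hence
$$\Pr(\phi\subst{v_i}{b}\mid S) = \sum_{\{A':\,A'(v_i)=b,\,\phi(A')=\t\}} \prod_{j\ne i}\Pr(v_j = A'(v_j)\mid S).$$

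Finally, I would expand the left-hand side using Bayes' rule: $\Pr(\phi\mid S, v_i = b)  = \Pr(\phi \wedge v_i=b \mid S)/\Pr(v_i=b\mid S)$. The numerator is $\sum_{\{A:\phi(A)=\t,\,A(v_i)=b\}} \Pr(A\mid S)$; applying the same factorization and cancelling the factor of $\Pr(v_i=b\mid S)$ yields exactly the same sum obtained above. The only subtle step—really the only substantive one—is justifying that variables remain independent given $S$, which I expect to be the main work; the rest is bookkeeping via a bijection and cancellation.
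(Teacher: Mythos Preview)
Your proposal is correct and follows essentially the same approach as the paper: both arguments pair up assignments that differ only in $v_i$, factor out the $v_i$-dependent part (you phrase this as conditional independence $\Pr(A\mid S)=\prod_j \Pr(v_j=A(v_j)\mid S)$, while the paper works directly with $\o{A_i}{S}$), cancel, and then compare with the Bayes'-rule expansion of the left-hand side. Your presentation is arguably cleaner in isolating the conditional-independence step up front, but the content is the same.
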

\begin{proof}
Given a truth assignment $A$ on $v_1, \ldots, v_n$, let $A_i$ be $A$
restricted to all the variables other than $v_i$.  Since $D$ is a
product distribution, 
$\Pr(A)=\Pr(A_i) \times \Pr(v_i=A(v_i)).$

Note that the truth of $\phi\subst{v_i}{b}$ does not depend on
the truth value of $v_i$.  Thus, we can pair the truth assignment that
make $\phi\subst{v_i}{b}$ true into groups of two, that differ only in
the truth assignment to $v_i$.  
Suppose that the test $v_i \mt \t$ appears in $S$ $k_T$ times and the
test $v_i \mt \f$ appears in $S$ $k_F$ times.  
Using Lemma~\ref{lem:denormtoprob}, we have that
\begin{equation}\label{eq:condeq1}
    \begin{array}{lll}
  \Pr(\phi\subst{v_i}{b}\mid S) &=&
  \sum_{\{A:\; \phi\subst{v_i}{b}(A)=T\}} \Pr(A\mid S) \\ 
  &=& \frac{\sum_{\{A:\; \phi\subst{v_i}{b}(A)=T\}}  \o{A}{S} }{ \sum_{\text{truth assignments
      }A'}  \o{A'}{S} }\\
    &=& \frac{ \sum_{\{A:\; \phi\subst{v_i}{b}(A)=T\}} \o{A_i}{S}(\Pr(v_i =
  T)o^{k_T} + \Pr(v_i = F)o^{k_F})}{ \sum_{\text{truth assignments
      }A'}  \o{A'_i}{S}(\Pr(v_i = T)o^{k_T} + \Pr(v_i = F)o^{k_F}
)} \\
  &=& \frac{\sum_{\{A:\; \phi\subst{v_i}{b}(A)=T\}}  \o{A_i}{S}}{
        \sum_{\text{truth assignments 
      }A'}  \o{A'_i}{S}. }
\end{array}\end{equation}
  Using the same arguments as in (\ref{eq:condeq1}), we get that
$$    \Pr(\phi \land v_i = b\mid S) =  
  \frac{\sum_{\{A:\; (\phi \land v_i=b)(A)=T\}}  \o{A_i}{S} \Pr(v_i=b)o^{k_T}}{
  \sum_{\text{truth assignments }A'}  \o{A'}{S} }$$
and
    $$    \Pr(v_i = b\mid S) =  
\frac{\sum_{\{A:\; A(v_i)=b\}}  \o{A_i}{S}  \Pr(v_i=b)o^{k_T}}{
  \sum_{\text{truth assignments }A'}  \o{A'}{S}. }$$

Let $C=\Pr(v_i=b\mid S)=\frac{\Pr(v_i=b)k_b}{\Pr(v_i=T)k_T +
  \Pr(v_i=F)k_F}$ be the  
probability that $v_i=b$ \emph{after} observing the sequence. 
Note that
$$\sum_{\mathclap{\{A:\; (\phi \land v_i=b)(A)=T\}}}  \o{A_i}{S} \hspace{1.5em} = \hspace{1.5em} 
    \sum_{\mathclap{\{A:\; (\phi\subst{v_i}{b} \land v_i=b)(A)=T\}}}  \o{A_i}{S} \hspace{1em} =\hspace{1em}
C\cdot
\sum_{\mathclap{\{A:\phi\subst{v_i}{b}(A)=T\}}}  \o{A_i}{S}$$
    and
        $$\sum_{\mathclap{\{A:\; A(v_i)=b\}}}  \o{A_i}{S} \hspace{1em} = \hspace{1em} %
C\cdot 
    \sum_{\mathclap{\textrm{truth assignments } A}}  \o{A_i}{S}.$$
Since, by Bayes' Rule, 
$$  \Pr(\phi \mid S, v_i = b) =
\frac{ \Pr(\phi \land v_i = b\mid S)}{ \Pr(v_i = b \mid S)},$$
simple algebra shows that 
$  \Pr(\phi \mid S, v_i = b) =   \Pr(\phi\subst{v_i}{b}\mid S),$ as desired.
\end{proof}    

\begin{lemma}\label{pointwisetocpl} If, for all test-outcome sequences
  $S$, there exists a test-outcome sequence $S'$ such that $|S'|=|S|$
  and $|\Pr(\phi\mid S') - 1/2|\geq |\Pr(\psi\mid S) -1/2|$,
then $\cpl(\phi)\leq \cpl(\psi)$.
\end{lemma}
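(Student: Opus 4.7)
The plan is to unfold the definition of test complexity in the hypothesis direction and then reconstruct a strategy for $\phi$ from a good strategy for $\psi$. Suppose $\cpl(\psi) = k$; we aim to show that $\cpl(\phi) \le k$ by exhibiting a strategy for $G(\phi,D,k,\m,g,b)$ with positive expected payoff, where $g$ and $b$ are chosen so that $q(b,g) = q$.

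First I would apply the definition of $\cpl(\psi)$ to obtain a strategy $\sigma$ for $G(\psi,D,k,\m,g,b)$ with positive expected payoff. Arguing as in the proof of Proposition~\ref{prop:randvars}, the expected payoff of $\sigma$ decomposes over the partition induced by test-outcome sequences of length $k$, so positivity forces at least one summand to be positive. Thus there must exist some test-outcome sequence $S$ of length $k$ with $\PrT{D,\m,\sigma}(S) > 0$ such that making a guess after observing $S$ has positive expected payoff. By Lemma~\ref{lem:payoff}, this gives $|\Pr(\psi \mid S) - 1/2| > q(b,g)$.

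Next I would invoke the hypothesis of the lemma applied to this particular $S$: there exists a test-outcome sequence $S'$ of length $k$ with
\[
\bigl|\Pr(\phi \mid S') - 1/2\bigr| \;\ge\; \bigl|\Pr(\psi \mid S) - 1/2\bigr| \;>\; q(b,g).
\]
I would then construct a strategy $\sigma'$ for $G(\phi,D,k,\m,g,b)$ that performs exactly the tests recorded in $S'$ in the prescribed order (this is a well-defined deterministic sequence of tests, independent of observations), and at the end makes the guess of the appropriate truth value of $\phi$ (namely $\t$ if $\Pr(\phi \mid S') > 1/2$ and $\f$ otherwise) iff the actual sequence of outcomes observed equals $S'$; otherwise $\sigma'$ makes no guess. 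Since the option of making no guess yields payoff 0, $\sigma'$ never incurs negative payoff on any history other than those on which $S'$ is observed. On the histories on which $S'$ is observed, Lemma~\ref{lem:payoff} together with the inequality above guarantees a strictly positive conditional expected payoff. Finally, because the tests in $S'$ are fixed in advance and $D$ is a probability distribution under which every outcome-specific history occurs with positive probability whenever the corresponding assignment does, $\PrT{D,\m,\sigma'}(S') > 0$, so the unconditional expected payoff of $\sigma'$ is strictly positive. Hence $\cpl(\phi) \le k = \cpl(\psi)$, as desired.

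This proof is essentially bookkeeping; I anticipate no real obstacle, though some care is warranted in verifying that $\PrT{D,\m,\sigma'}(S') > 0$ from the nondegeneracy of $D$ and $\m$ (the tests are deterministic, so this reduces to showing the outcome half of $S'$ has positive probability under some underlying assignment, which follows because each individual test outcome has probability at least $1/2 - \max_i \alpha_i > 0$ conditional on any assignment).
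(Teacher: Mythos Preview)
Your proposal is correct and follows essentially the same approach as the paper's proof: both start from a positive-payoff strategy for $\psi$, extract a test-outcome sequence $S$ with $|\Pr(\psi\mid S)-1/2|>q$ via Lemma~\ref{lem:payoff}, invoke the hypothesis to obtain $S'$ for $\phi$, and then build the deterministic strategy that tests along $S'$ and guesses only when $S'$ is observed. Your version is slightly more explicit about the decomposition over test-outcome sequences and the positivity of $\PrT{D,\m,\sigma'}(S')$, but the argument is otherwise the same.
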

\begin{proof}
  Suppose that $\cpl(\psi) =k$. Then there must be some
  strategy $\sigma$ for $G(\psi,D,k,\m,g,b)$
  that  has positive expected payoff.
There must therefore be some 
test-outcome sequence $S$ of length $k$ that is observed with positive
probability when using $\sigma$ 
such that the expected payoff of making the appropriate guess is
positive. By Lemma \ref{lem:payoff}, $|\Pr(\psi\mid S) -1/2|>q$.

Since $|\Pr(\phi\mid S') -1/2|\geq |\Pr(\psi\mid S)-1/2|$ by assumption, there
must exist a test-outcome sequence $S'$ such 
$|\Pr(\phi\mid S') - 1/2|>q$.  Let $\sigma'$ be the strategy for the
game $G(\phi,D,k,\m,g,b)$
that tests the same variables that are tested in $S'$, and makes the
appropriate guess
iff $S'$ is in fact observed.
By Lemma \ref{lem:payoff}, a guess with positive expected payoff can
be made if $S'$ is observed, which it is with positive probability. So 
$\sigma'$ has positive expected payoff, and hence $\cpl(\phi)$
is at most $k$. 
\end{proof}

We can now finally prove Theorem~\ref{thm:xor}.
Note that this is the only part of the derivation that actually depends
on the assumption that we are working with the uniform distribution $D_u$.

\begin{proof}[Proof of Theorem~\ref{thm:xor}]
    We show by induction on $V(\phi)$ that for all formulae $\phi$,
there exists a formula $\phi_0$ with $V(\phi_0)=0$ such that $\cpl
\phi \leq \cpl \phi_0$. By Lemma \ref{xorsonly}, $\phi_0$ must
be equivalent to either $\oplus_{i=1}^{n-1} v_i$ or
$\neg (\oplus_{i=1}^{n-1} v_i)$.
If $V(\phi) = 0$, then we can just take $\phi_0 = \phi$.  Now suppose that 
$V(\phi)>0$.  There there must exist some variable $v$ such that
$\phi\subst{v}{\t}\neq \neg (\phi\subst{v}{\f})$.
(Here and below we are viewing formulas as functions on truth
assignments, justifying the use of ``='' rather than ``$\equiv$''.)
Note for future reference that,
by construction,
\begin{equation}\label{eq0}
  \phi_v\subst{v}{\t} =
\phi\subst{v}{\t} \mbox{ and } \phi_v\subst{v}{\f} =
\neg \phi\subst{v}{\t}.
\end{equation}
By Lemma~\ref{antisymmetrise}, if $\phi$ is antisymmetric in a
variable $v' \ne v$, then so is $\phi_v$.  In addition, $\phi_v$ is
antisymmetric in $v$.  Thus, $V(\phi_v) < V(\phi)$.
If we can show $\cpl(\phi) \le \cpl(\phi_v)$, then the result
follows from the induction hypothesis.
By Lemma \ref{pointwisetocpl}, it suffices to show that for 
all test-outcome sequences  $S_1$, there exists a sequence $S$ of the same
length as $S_1$ such that 
$|\Pr(\phi\mid S)-1/2| \geq |\Pr(\phi_v\mid S_1)-1/2|$.

Given an arbitrary test-outcome sequence $S_1$, 
let $p = \Pr(v = \t \mid S_1)$. %
Thus,
\begin{equation}\label{phim0}
    \begin{array}{llll}
  \Pr(\phi_v \mid S_1)
  &=& p \Pr(\phi_v \mid S_1, v=\t) + (1-p) \Pr(\phi_v \mid S_1, v=\f)\\
  &=& p \Pr(\phi_v\subst{v}{\t} \mid
  S_1) + (1-p) \Pr( 
\phi_v\subst{v}{\f} \mid S_1) &\mbox{[by Lemma~\ref{lem:projiscond}]}\\
  &=& p \Pr(\phi\subst{v}{\t} \mid S_1) + (1-p) \Pr(
\neg\phi\subst{v}{\t} \mid S_1) &\mbox{[by (\ref{eq0})]}\\
 &=& p \Pr(\phi\subst{v}{\t} \mid S_1) + (1-p)(1- \Pr(
  \phi\subst{v}{\t} \mid S_1).
\end{array}
  \end{equation}
Set %
$S_2 = S_1[v\mt \f \leftrightarrow v\mt \t]$,
that is, the sequence that is the same as $S_1$ except that all test
outcomes of $v$ are flipped in value. 
Since $\phi\subst{v}{\t}$ does not mention $v$, $\Pr(\phi\subst{v}{\t}
\mid S_1) 
=\Pr(\phi\subst{v}{\t}\mid S_2)$ and likewise for
$\phi\subst{v}{\f}$. Since $\phi\equiv
(v\wedge \phi\subst{v}{\t})\vee (\neg v\wedge \phi\subst{v}{\f})$, we
have (using an argument 
similar to that above)
\begin{eqnarray}\label{phim1} 
  \Pr(\phi\mid S_1)
   &=&
  p \Pr(\phi\subst{v}{\t} \mid S_1) + (1-p)\Pr(\phi\subst{v}{\f} \mid   S_1)  
\end{eqnarray}
and, taking $p' = \Pr(v=T \mid S_2)$, 
\begin{equation}\label{phim3}
\begin{array}{lll}
  \Pr(\phi\mid S_2)
  &=& p' \Pr(\phi\subst{v}{\t} \mid S_2) + (1-p') \Pr(\phi\subst{v}{\f} \mid
  S_2)\\
  &=& p'\Pr(\phi\subst{v}{\t} \mid S_1) + (1-p') \Pr(\phi\subst{v}{\f} \mid
  S_1).
\end{array}
\end{equation}

We claim that $p = 1-p'$.
Suppose that the test $v \mt \t$ appears
in $S_1$ $k_T$ times and the  test $v \mt \f$ appears in $S_1$ $k_F$ times.  
Thus, the test  $v \mt \t$ appears
in $S_2$ $k_F$ times and the  test $v \mt \f$ appears in $S_1$ $k_T$ times. 
All other tests appear the same number of times in both sequences.
By Lemma \ref{lem:denormtoprob},  
since the uniform distribution $D_u$ we are using is in particular a product
distribution,
for $j=1,2$, we have that
$$
\Pr(v=T\mid S_j) = \sum_{\{A:\; A(v)=\t\}} \Pr(A \mid S_j) = \frac{
  \sum_{\{A:\; A(v)=\t\}} \o{A}{S_j} }{ \sum_{A'} \o{A'}{S_j} }. $$
Suppose that $v$ is the $i$th variable $v_i$. Let
$r_1 = o_i^{k_T}$, let $r_2 = o_i^{k_F}$, let 
$ R_1 = \sum_{\{A:\; A(v_i)=T\}} \prod_{j=1, j\neq i}^n
o_j^{n^+_{S_1,A,j}} $, and let
$ R_2 = \sum_{\{A:\; A(v_i)=F\}} \prod_{j=1, j\neq i}^n o_j^{n^+_{S_1,A,j}} $.
For $j = 1,2$ we have that
$$\sum_{\{A:\; A(v)=\t\}} \Pr(A \mid S_j) = \frac{
    \sum_{\{A:\; A(v)=\t\}} \o{A}{S_j} }{ \sum_{A'} \o{A'}{S_j} }
  = \frac{ r_j R_j }{ r_1 R_1  + r_2 R_2 }$$
We claim that $R_1=R_2$. Indeed, for any assignment $A$
  such that $A(v_i)=T$, let $A'$ be the unique assignment such that $A'(v_i)=F$
and $A'(v_j)=A(v_j)$ for all $j\neq i$. Then each choice of $A$ occurs
once in the 
sum $R_1$ and never in the sum $R_2$, the corresponding $A'$ occurs once in
$R_2$ but not $R_1$.  Since we are working with the uniform
distribution $D_u$, the summands for $A$ and $A'$ are equal.
So we can conclude that $p=1-p'$. Combining this with
(\ref{phim3}), we get that
\begin{equation}\label{phim2} 
\begin{array}{lll}
  \Pr(\phi\mid S_2)
    &=& (1-p) \Pr(\phi\subst{v}{\t} \mid S_1) + p \Pr(\phi\subst{v}{\f} \mid
  S_1).
\end{array}
\end{equation}

\commentout{
Suppose $v$ is the $i$th variable $v_i$. 
Let $A$ be an arbitrary assignment such that $A(v_i)=T$ and $A'$ such that $A(v_i)=F$.
Then $n^+_{S,A,i}$ is independent of the choice of $A$ for all $S$,
and likewise $n^+_{S,A',i}$.
Let $r_1 = o_i^{n^+_{S_1, A, i}}$ be the factor associated with $v_i$ in all $\o{A}{S_1}$
where $A(v_i)=T$, and $r_2 = o_i^{n^+_{S_1,A',i}}$ be the factor
associated with $v_i$ in all $\o{A}{S_1}$ where $A(v_i)=F$.
Also, let
$$ R = \sum_{A:\; A(v_i)=T} \prod_{j=1, j\neq i}^n o_i^{n^+_{S_1,A,i}}. $$
Then by Lemma \ref{lem:denormtoprob} and uniformity of the prior (so all terms of the form $\Pr(A)$ cancel),
$$ \Pr(v_i=T \mid S_1) = \frac{ r_1 R }{ r_1 R  + r_2 R }. $$
For $S_2$, the remainder $R$ is the same, while $n^+_{S_1,A,i}=n^+_{S_2,A',i}$ and vice versa, and so likewise
$$ \Pr(v_i=T \mid S_2) = \frac{ r_2 R }{ r_1 R  + r_2 R }. $$
Therefore,
$$ \Pr(v_i=T \mid S_1) = 1- \Pr(v_i=T \mid S_2) $$
and hence
\begin{equation}
\begin{array}{lll}
 (1-p) \Q(\phi\subst{v}{\t} \mid S_1) + p\Q(\phi\subst{v}{\f} \mid
  S_1) \label{phim2}, 
\end{array}
\end{equation}
}
Let $Q(E) = \Pr(E)-\frac{1}{2}$. 
By adding $-1/2$ on both sides, 
equations
(\ref{phim1}) and (\ref{phim2}) hold with $\Pr$
replaced by $Q$, while (\ref{phim0}) becomes
$$Q(\phi_v \mid S_1) = p Q(\phi\subst{v}{\t} \mid S_1) - (1-p)Q(
  \phi\subst{v}{\t} \mid S_1).$$
We now show that  either $|\Q(\phi\mid S_1)| \ge |\Q(\phi_v\mid S_1)|$ or 
$|\Q(\phi\mid S_2)| \ge |\Q(\phi_v\mid S_1)|.$  This suffices to complete the proof.

\commentout{
Suppose $|\Q(\phi|S_1)|<|\Q(\phi_v|S_1)|$.
$|\Q(\phi|S_2)| \ge |\Q(\phi_v|S_2)|$, proving the desired result.

By ($\#$), implicitly using
the positivity assumption $(*)$, the first summands in 
$(\ref{phiprimemprime})$ and $(\ref{phim1})$ are positive, and the
second term in $(\ref{phiprimemprime})$ is negative; so this
necessitates that the second summand in $(\ref{phim1})$ is more
negative, i.e. 
$$\Q(\phi\subst{v}{\f} | S_1) < -\Q(\phi\subst{v}{\t} \mid S_1).$$
So
$$-p\Q(\phi\subst{v}{\f} \mid S_1) > p\Q(\phi\subst{v}{\t} |S_1),$$
and hence
$$\begin{array}{lll}
-\Q(\phi|S_2) &=& -p(\Q(\phi\subst{v}{\f} \mid S_1)) - (1-p) \Q(\phi\subst{v}{\t} \mid S_1) \\
 &>& p\Q(\phi\subst{v}{\t} |S_1) - (1-p) \Q(\phi\subst{v}{\t} \mid S_1) = \Q(\phi_v|S_1).
\end{array}$$
Taking absolute values and using positivity $(*)$ again, we conclude 
$$|\Q(\phi|S_2)| > |\Q(\phi_v|S)|.$$

So either $S_1$ or $S_2$ is a sequence of test outcomes that gives at
least as great absolute bias for $\phi$ as $S_1$ does for $\phi_v$. By
construction, each of these sequences is of the same length as $S$. So
done. 
}
To simplify notation, let $x = \Q(\phi\subst{v}{\t} \mid S_1)$ and let
$y = \Q(\phi\subst{v}{\f} \mid S_1)$.
By (\ref{phim0}), (\ref{phim1}),
and (\ref{phim2}), we want to show that either
$|p x + (1-p) y| \ge |p x - (1-p)x|$ or
$|(1-p) x + py| \ge |p x - (1-p)x|$.  So suppose that
$|p x + (1-p) y| < |p x - (1-p)x|$.  We need to consider four cases:
(1) $p \ge 1/2$, $x \ge 0$; (2) $p \ge 1/2$, $x < 0$; (3) $p < 1/2$, $x
\ge 0$; and (4)  $p < 1/2$, $x < 0$.  For (1) , note that if $p \ge
1/2$ and $x \ge 0$, then $0 \le px - (1-p)x \le px$.  We must have $y < -x$,
for otherwise $px + (1-p)y  \ge px - (1-p)x$.  But then $py + (1-p)x <
-(px - (1-p)x)$, so $|py + (1-p)x| > |px - (1-p)x|$.
For (2), note that if $p \ge 1/2$ and $x < 0$, then $px - (1-p)x < 0$.
We must have $y > -x$, for otherwise $px + (1-p)y \le px - (1-p)x$, and
$|px + (1-p)y| \ge |px - (1-p)x|$.  But then $py + (1-p)x > -px +
(1-p)x$, so $|py + (1-p)x| > |px - (1-p)x|$.
The arguments in cases (3) and (4) are the same as for (1) and (2),
since we can simply replace $p$ by $1-q$.  This gives us identical
inequalities (using $q$ instead of $p$), but now $q > 1/2$.  
\end{proof}

\commentout{
\section*{Notation}

\subsection*{Boolean logic}

Given Boolean formulae $\phi$ and $\psi$, 
\begin{itemize}
\item $\phi \wedge \psi$, read ``$\phi$ and $\psi$'', refers to the
  formula that is true if and only if both $\phi$ and $\psi$ are true;
  \item $\phi \vee \psi$, read ``$\phi$ or $\psi$'', refers to the formula that is true if and only if at least one of $\phi$ and $\psi$ is true;
\item $\phi \oplus \psi$, read ``$\phi$ xor $\psi$'', refers to the formula that is true if and only if exactly one of $\phi$ and $\psi$ is true;
\item $\neg\phi$, read ``not $\phi$'', refers to the formula that is true if and only if $\phi$ is false.
\end{itemize}

Note that $\phi \oplus \psi = (\phi \wedge \neg \psi) \vee (\neg\phi \wedge \psi)$.

For any $\odot \in \{\wedge,\vee,\oplus\}$, we use the ``big $\odot$'' 
$$ \bigodot_{i=1}^n \phi_i $$
as shorthand for the formula 
$$ \phi_1 \odot \ldots \odot \phi_n. $$

}

\fi

\section*{Acknowledgements}
We thank David Goldberg, David Halpern, Bobby Kleinberg, Dana Ron,
Sarah Tan, and Yuwen Wang as well as the anonymous reviewers for
helpful feedback, discussions and advice. This work was supported in
part by NSF grants IIS-1703846 and IIS-1718108,  
AFOSR grant FA9550-12-1-0040, 
ARO grant W911NF-17-1-0592, and a grant from the Open Philanthropy project.

\bibliographystyle{ACM-Reference-Format}
\bibliography{joe}

%%% -*-BibTeX-*-
%%% Do NOT edit. File created by BibTeX with style
%%% ACM-Reference-Format-Journals [18-Jan-2012].

\begin{thebibliography}{14}

%%% ====================================================================
%%% NOTE TO THE USER: you can override these defaults by providing
%%% customized versions of any of these macros before the \bibliography
%%% command.  Each of them MUST provide its own final punctuation,
%%% except for \shownote{}, \showDOI{}, and \showURL{}.  The latter two
%%% do not use final punctuation, in order to avoid confusing it with
%%% the Web address.
%%%
%%% To suppress output of a particular field, define its macro to expand
%%% to an empty string, or better, \unskip, like this:
%%%
%%% \newcommand{\showDOI}[1]{\unskip}   % LaTeX syntax
%%%
%%% \def \showDOI #1{\unskip}           % plain TeX syntax
%%%
%%% ====================================================================

\ifx \showCODEN    \undefined \def \showCODEN     #1{\unskip}     \fi
\ifx \showDOI      \undefined \def \showDOI       #1{#1}\fi
\ifx \showISBNx    \undefined \def \showISBNx     #1{\unskip}     \fi
\ifx \showISBNxiii \undefined \def \showISBNxiii  #1{\unskip}     \fi
\ifx \showISSN     \undefined \def \showISSN      #1{\unskip}     \fi
\ifx \showLCCN     \undefined \def \showLCCN      #1{\unskip}     \fi
\ifx \shownote     \undefined \def \shownote      #1{#1}          \fi
\ifx \showarticletitle \undefined \def \showarticletitle #1{#1}   \fi
\ifx \showURL      \undefined \def \showURL       {\relax}        \fi
% The following commands are used for tagged output and should be
% invisible to TeX
\providecommand\bibfield[2]{#2}
\providecommand\bibinfo[2]{#2}
\providecommand\natexlab[1]{#1}
\providecommand\showeprint[2][]{arXiv:#2}

\bibitem[\protect\citeauthoryear{Chen and Katehakis}{Chen and
  Katehakis}{1986}]%
        {chen86}
\bibfield{author}{\bibinfo{person}{Y.~R. Chen} {and} \bibinfo{person}{M.~N.
  Katehakis}.} \bibinfo{year}{1986}\natexlab{}.
\newblock \showarticletitle{Linear Programming for Finite State Multi-Armed
  Bandit Problems}.
\newblock \bibinfo{journal}{\emph{Mathematics of Operations Research}}
  \bibinfo{volume}{11}, \bibinfo{number}{1} (\bibinfo{year}{1986}),
  \bibinfo{pages}{180--183}.
\newblock


\bibitem[\protect\citeauthoryear{Druzdzel and Suermondt}{Druzdzel and
  Suermondt}{1994}]%
        {druzdzel94b}
\bibfield{author}{\bibinfo{person}{M.~J. Druzdzel} {and} \bibinfo{person}{H.~J.
  Suermondt}.} \bibinfo{year}{1994}\natexlab{}.
\newblock \showarticletitle{Relevance in probabilistic models: ``{B}ackyards''
  in a ``small world''}. In \bibinfo{booktitle}{\emph{Working notes of the
  AAAI--1994 Fall Symposium Series: Relevance}}. \bibinfo{pages}{60--63}.
\newblock


\bibitem[\protect\citeauthoryear{Feldman}{Feldman}{2006}]%
        {Feldman}
\bibfield{author}{\bibinfo{person}{J. Feldman}.}
  \bibinfo{year}{2006}\natexlab{}.
\newblock \showarticletitle{An algebra of human concept learning}.
\newblock \bibinfo{journal}{\emph{Journal of Mathematical Psychology}}
  \bibinfo{volume}{50}, \bibinfo{number}{4} (\bibinfo{year}{2006}),
  \bibinfo{pages}{339 -- 368}.
\newblock


\bibitem[\protect\citeauthoryear{Karmarkar}{Karmarkar}{1984}]%
        {karmarkar84}
\bibfield{author}{\bibinfo{person}{N. Karmarkar}.}
  \bibinfo{year}{1984}\natexlab{}.
\newblock \showarticletitle{A new polynomial-time algorithm for linear
  programming}. In \bibinfo{booktitle}{\emph{Proc.~16th ACM Symposium on Theory
  of Computing}}. \bibinfo{pages}{302--311}.
\newblock


\bibitem[\protect\citeauthoryear{Lang, Liberatore, and Marquis}{Lang
  et~al\mbox{.}}{2003}]%
        {lang03}
\bibfield{author}{\bibinfo{person}{J. Lang}, \bibinfo{person}{P. Liberatore},
  {and} \bibinfo{person}{P. Marquis}.} \bibinfo{year}{2003}\natexlab{}.
\newblock \showarticletitle{Propositional independence -- Formula-variable
  independence and forgetting}.
\newblock \bibinfo{journal}{\emph{Journal of Artificial Intelligence Research}}
   \bibinfo{volume}{18} (\bibinfo{year}{2003}), \bibinfo{pages}{391--443}.
\newblock


\bibitem[\protect\citeauthoryear{Lee}{Lee}{1959}]%
        {lee59}
\bibfield{author}{\bibinfo{person}{C.~Y. Lee}.}
  \bibinfo{year}{1959}\natexlab{}.
\newblock \showarticletitle{Representation of switching circuits by
  binary-decision programs}.
\newblock \bibinfo{journal}{\emph{The Bell System Technical Journal}}
  \bibinfo{volume}{38} (\bibinfo{year}{1959}), \bibinfo{pages}{985--999}.
\newblock


\bibitem[\protect\citeauthoryear{Love, Medin, and Gureckis}{Love
  et~al\mbox{.}}{2004}]%
        {LMG04}
\bibfield{author}{\bibinfo{person}{B.~C. Love}, \bibinfo{person}{D.~L. Medin},
  {and} \bibinfo{person}{T.~M. Gureckis}.} \bibinfo{year}{2004}\natexlab{}.
\newblock \showarticletitle{SUSTAIN: A network model of category learning}.
\newblock \bibinfo{journal}{\emph{Psychological Review}} \bibinfo{volume}{111},
  \bibinfo{number}{2} (\bibinfo{date}{4} \bibinfo{year}{2004}),
  \bibinfo{pages}{309--332}.
\newblock


\bibitem[\protect\citeauthoryear{Ron, Rosenfeld, and Vadhan}{Ron
  et~al\mbox{.}}{2007}]%
        {rosenfeld07}
\bibfield{author}{\bibinfo{person}{D. Ron}, \bibinfo{person}{A. Rosenfeld},
  {and} \bibinfo{person}{S. Vadhan}.} \bibinfo{year}{2007}\natexlab{}.
\newblock \showarticletitle{The hardness of the expected decision depth
  problem}.
\newblock \bibinfo{journal}{\emph{Inform. Process. Lett.}}
  \bibinfo{volume}{101}, \bibinfo{number}{3} (\bibinfo{year}{2007}),
  \bibinfo{pages}{112--118}.
\newblock


\bibitem[\protect\citeauthoryear{Shepard, Hovland, and Jenkins}{Shepard
  et~al\mbox{.}}{1961}]%
        {SHJ61}
\bibfield{author}{\bibinfo{person}{R.~N. Shepard}, \bibinfo{person}{C.~I.
  Hovland}, {and} \bibinfo{person}{H.~M. Jenkins}.}
  \bibinfo{year}{1961}\natexlab{}.
\newblock \showarticletitle{Learning and memorization of classifications}.
\newblock \bibinfo{journal}{\emph{Psychological Monographs: General and
  Applied}} \bibinfo{volume}{75}, \bibinfo{number}{3} (\bibinfo{year}{1961}),
  \bibinfo{pages}{1--42}.
\newblock


\bibitem[\protect\citeauthoryear{Sims}{Sims}{2003}]%
        {Sims03}
\bibfield{author}{\bibinfo{person}{C.~A Sims}.}
  \bibinfo{year}{2003}\natexlab{}.
\newblock \showarticletitle{Implications of rational inattention}.
\newblock \bibinfo{journal}{\emph{Journal of Monetary Economics}}
  \bibinfo{volume}{50}, \bibinfo{number}{3} (\bibinfo{year}{2003}),
  \bibinfo{pages}{665--690}.
\newblock


\bibitem[\protect\citeauthoryear{Umans}{Umans}{1999}]%
        {Umans1999}
\bibfield{author}{\bibinfo{person}{C. Umans}.} \bibinfo{year}{1999}\natexlab{}.
\newblock \showarticletitle{On the complexity and inapproximability of shortest
  implicant problems}. In \bibinfo{booktitle}{\emph{Proc. of Automata,
  Languages and Programming: 26th International Colloquium (ICALP '99)}}.
  \bibinfo{publisher}{Springer}, \bibinfo{address}{Berlin, Heidelberg},
  \bibinfo{pages}{687--696}.
\newblock


\bibitem[\protect\citeauthoryear{Vapnik and Lerner}{Vapnik and Lerner}{1963}]%
        {VapnikLerner}
\bibfield{author}{\bibinfo{person}{V.~N. Vapnik} {and} \bibinfo{person}{A.~Y.
  Lerner}.} \bibinfo{year}{1963}\natexlab{}.
\newblock \showarticletitle{Recognition of patterns using generalized
  portraits}.
\newblock \bibinfo{journal}{\emph{Avtomat. i Telemekh.}}  \bibinfo{volume}{24}
  (\bibinfo{year}{1963}), \bibinfo{pages}{774--780}.
\newblock
Issue 6.


\bibitem[\protect\citeauthoryear{Vigo}{Vigo}{2011}]%
        {Vigo2011}
\bibfield{author}{\bibinfo{person}{R. Vigo}.} \bibinfo{year}{2011}\natexlab{}.
\newblock \showarticletitle{Representational information: a new general notion
  and measure of information}.
\newblock \bibinfo{journal}{\emph{Information Sciences}}  \bibinfo{volume}{181}
  (\bibinfo{year}{2011}), \bibinfo{pages}{4847--4859}.
\newblock


\bibitem[\protect\citeauthoryear{Wiederholt}{Wiederholt}{2010}]%
        {Wiederholt10}
\bibfield{author}{\bibinfo{person}{M. Wiederholt}.}
  \bibinfo{year}{2010}\natexlab{}.
\newblock \showarticletitle{Rational inattention}.
\newblock In \bibinfo{booktitle}{\emph{The New Palgrave Dictionary of Economics
  (online edition)}}, \bibfield{editor}{\bibinfo{person}{L.~E. Blume} {and}
  \bibinfo{person}{S.~Durlauf}} (Eds.). \bibinfo{publisher}{Palgrave
  Macmillan}, \bibinfo{address}{New York}.
\newblock


\end{thebibliography}

\end{document}